\newcommand{\R}{\mathbb{R}}
\renewcommand{\dot}[2]{\big\langle#1,#2\big\rangle}
\newcommand{\grad}[1]{\nabla#1}
\newcommand{\abs}[1]{\left|#1\right|}
\newcommand\norm[1]{\left\lVert#1\right\rVert}
\newcommand{\lossPDE}{\mathcal{L}_{\text{LSE}}}
\newcommand{\lossdata}{\mathcal{L}_{\text{data}}}
\newenvironment{psmallmatrix}
  {\left(\begin{smallmatrix}}
  {\end{smallmatrix}\right)}
\crefname{section}{Sec}{Secs.}
\Crefname{section}{Section}{Sections}
\Crefname{table}{Table}{Tables}
\crefname{table}{Tab.}{Tabs.}
\ificcvfinal\pagestyle{empty}\fi
\begin{document}

\title{
Neural Implicit Surface Evolution
}

\author{
\normalsize Tiago Novello\\
\small IMPA
\and
\normalsize Vinicius da Silva\\
\small PUC-Rio
\and
\normalsize Guilherme Schardong\\
\small U Coimbra
\and
\normalsize Luiz Schirmer\\
\small Unisinos
\and
\normalsize Helio Lopes\\
\small PUC-Rio
\and
\normalsize Luiz Velho\\
\small IMPA
}

\maketitle
\ificcvfinal\thispagestyle{empty}\fi

\begin{abstract}
This work investigates the use of smooth neural networks for modeling dynamic variations of implicit surfaces under the \textbf{level set equation} (LSE). For this, it extends the representation of neural implicit surfaces to the space-time $\R^3\!\!\times\! \R$, which opens up mechanisms for \textbf{continuous} geometric transformations. Examples include evolving an initial surface towards general vector fields, smoothing and sharpening using the mean curvature equation, and interpolations of initial conditions.

The network training considers two constraints. A \textit{data} term is responsible for fitting the initial condition to the corresponding time instant, usually $\R^3\!\! \times\! \{0\}$. Then, a \textit{LSE} term forces the network to approximate the underlying geometric evolution given by the LSE, \textbf{without any supervision}.
The network can also be \textbf{initialized based on previously trained initial conditions}, resulting in faster convergence compared to the standard approach.
\end{abstract}

\section{Introduction}
\label{s-introduction}
A \textit{neural implicit function} $g\!:\!\R^3\!\!\to\! \R$ is a smooth neural network that represents an implicit function. Since $g$ is smooth, objects from the \textit{differential geometry} of its \textit{regular level sets} can be used in closed form~\cite{yang2021geometry, novello21diff}.

This work investigates the extension of the domain of neural implicit functions to the \textit{space-time} $\R^3\!\times\!\R$, encoding the evolution of the function $g$ as a higher-dimensional function $f\!:\!\R^3\!\!\times\!\! \R\! \to\! \R$.
The resulting animation is governed by a PDE, the \textit{level set equation} (LSE) $\frac{\partial f}{\partial t}\!=\!v\norm{\nabla f}$, which encodes the propagation of the level sets $S_t$ of $f(\cdot, t)$ towards their normals with speed $v$.
The choice of the function $v$ depends on the underlying geometric model.
LSE is an important tool for \textit{geometry processing} applications.

We propose to use a \textit{neural network} $f\!:\!\R^3\!\times\! \R\! \to\! \R$ to represent the above level set function. For this, we train $f$ to learn the evolution $S_t$ of an initial surface $S$ under a given LSE.
Accordingly, we add the constraint $f\!=\!g$ on $\R^3\!\times\!\{0\}$, to the LSE. If~$g$ is the \textit{signed distance function} (SDF) of $S$, then a solution $f$ of this problem encodes an animation~of~$S$. 
\pagebreak

\noindent Our~method is the first neural approach in geometry processing, that does not consider numerical approximations of the LSE solution during sampling and does not discretize the LSE in the loss~function.

Our strategy has two steps to train $f$.
First, a sample $\{p_i, g(p_i)\}$ enables us to train the initial condition $f\!=\!g$ on $\R^3\!\times\!\{0\}$. Second, an LSE constraint is used to fit $f$ into a solution of the LSE in $\R^3\!\times\! \R$.
This term does not need any supervision, i.e. it does not consider samples of $f$. This constraint \textit{only} uses samples in the form $(p_i,t_i)$.
The only requirement of our method is that $f$ must be smooth.
Given this, our main contributions can be summarized as follows:
\begin{itemize}
\item Extension of neural implicit surfaces to space-time without the use of numerical/discrete approximations. Encoding the space-time coordinates as input of the network allows us to represent the whole implicit animation in a single network;

\item Development of a neural framework to fit solutions of an LSE using only its analytical expression~(Sec~\ref{s-training}). Moreover,  the network training considers only the initial condition of the LSE problem  as data;
\item The method is flexible enough to be used in a variety of applications, such as surface motion by vector fields, smoothing, sharpening, and interpolation (Sec~\ref{s-experiments});
\item We also propose a novel network initialization based on previously trained initial conditions (Sec~\ref{s-network-initialization}).
\end{itemize}

\section{Related Works}
\label{s-related_works}
Many problems, such as shape correspondence, topology changes, and animation of deformable objects, can be posed using implicit surfaces~\cite{warp_morph_book, turk_99_shape_xform, ls_meta_01, desbrun_deformable, desbrun_soft, desbrun1998active}.
Studying their shape properties through differential geometry leads to a framework for intrinsic operations.
An example is smoothing a surface using the \textit{mean curvature equation}, an important PDE in geometry processing~\cite{mean_curv,clarenz2000anisotropic, desbrun1999implicit, desbrun1998active}.
Problems in this topic rely on computing derivativess -- a hard task when dealing with meshes~\cite{Crane_DDG_2013, DeGoes_DDG_2020, DeGoes_EC_2016}.
A practical neural implicit approach would allow us to compute such objects in closed form, and it is the objective of this work.

Several works showed that modeling surfaces as level sets of neural networks result in a compact representation \cite{Park_2019_CVPR, ISNN_2019, gropp2020implicit, sitzmann2020implicit,novello21diff, silva21mipplicits}.
 Most of them fit a network~into~data.

\noindent In the SDF case, a regularizer term forces the network to satisfy the \textit{Eikonal equation}.
The robustness of those approaches is our motivation to study the evolution of neural implicit surfaces using the level set equation~\cite{osher1988fronts, osher2004level}, a PDE widely used in geometry processing~\cite{desbrun1998active, bertalmio2001variational, kimmel1998computing, sethian1999level, malladi1995shape, whitaker1995algorithms}.

NFGP~\cite{yang2021geometry} and NIE~\cite{mehta2022level} are recent neural approaches to evolve implicit surfaces. 
Both store the evolution in a sequence of networks, each representing a time step, as opposed to our method, which encodes it in a single network.
Specifically, they use a network $g_\phi$ to fit an initial function~$g$. Then, $\phi$ is updated at each time step creating a sequence of networks $g_{\phi_i}\!\!:\!\R^3\!\!\to\! \R$.
This is similar to the Runge-Kutta methods but, instead of fitting the solution to a grid, they use a network.
Thus, for them to evaluate at intermediate times, they have to retrain the networks.
In contrast, our method does not discretize time, learning the solution in a continuous interval using a single network with a domain in $\R^3\times \R$.

NIE is the only approach that evolves $g_\phi$ using the mean curvature equation $\frac{\partial f}{\partial t}\!\!\!=\!\!\norm{\nabla{f}} \kappa$.
However, this evolution uses {a discretization of the partial derivative $\frac{\partial f}{\partial t}$}.
To update~$\phi$, NIE uses a finite difference scheme to approximate the discrete solutions.
To compute the mean curvature $\kappa$, NIE extracts the level sets using marching cubes and employs the cotangent Laplacian, which is problematic since it depends on approximating the level sets by meshes using marching cubes. Also, this operator does not preserve the Laplacian
natural properties -- the \textit{no free lunch scenario}~\cite{wardetzky2007discrete}.
In contrast, our approach uses the network high-order derivatives to evaluate the LSE analytically. For example, we compute the curvature using the \textit{divergence} of $\frac{\nabla f}{\norm{\nabla f}}$.
To evaluate $\text{div}\frac{\nabla f}{|\nabla f|}$ and $\nabla f$  we simply we use \textit{automatic differentiation}.
Moreover, NIE cannot consider multiple initial conditions as our approach does.

Recently, there has been a growing interest by the \textit{physical simulation} community in solving PDEs using neural~networks.
\textit{Physically informed neural networks} (PINNs) are established approaches in this context.
Unlike our proposal, this method~\cite{raissi2017physics} relies on measurements of the PDE solution at intermediate times.
PINNs are extensively evaluated in surveys~\cite{karniadakis2021physics, cuomo2022scientific}.
Karniadakis et al.~\cite{karniadakis2021physics} reviews \textit{inverse problems}, which try to infer the PDE parameters based on supervised data.
This context differs from ours since we do not address the inverse problem, they also do not consider implicit surface evolution using the LSE, and we do not rely on supervised data of the evolution.
%
%

Cuomo et al.~\cite{cuomo2022scientific} surveys a broader range of problems based on the PDE type. We focus on the evolution of implicit surfaces using the LSE, which is a geometric \textit{time-dependent PDE} not explored in their review~\cite[Sec 3.2.2]{cuomo2022scientific}.
Thus, to the best of our knowledge, there is no PINN-based approach to solve the LSE for implicit surface evolution.

Our proposal seeks to bridge this gap by leveraging the representation capacity of (sinusoidal coord-based) networks to solve the above geometric problem without any measurements of the PDE solution at intermediate times; i.e., we need \textbf{supervision only on the initial surfaces}. This can potentially enable new applications in computer graphics, computer-aided design, and computational geometry.

Regarding the interpolation problem, Liu et al.~\cite{liu2022learning} propose the \textit{Lipschitz MLP}, which regularizes a neural network by penalizing the upper bound of its Lipschitz constant.
We use a specific PDE to interpolate SDFs, resulting in smoother and more natural transitions between shapes (see~Sec~\ref{s-interpolation_between_surfaces}).
Moreover, our method manages to use smaller architectures by considering sinusoidal MLPs.

\section{Background and conceptualization}
\label{s-conceptualization}

\subsection{Implicit surfaces}
The level set $g^{-1}\!(0)$ of a smooth function $g\!:\!\R^3\!\!\to\!\!\R$ is a (regular) surface if $\nabla g\neq 0$ in $g^{-1}(0)$.
Conversely, given a surface $S$, there is a function $g$ having it as its zero-level~set.
Thus, $g$ may be reconstructed from a sample of $S$.
For this, we parametrize $g$ using a neural network.
SIREN~\cite{sitzmann2020implicit} and IGR \cite{gropp2020implicit} are examples of such networks.

To compute the parameters of $g$ such that $g^{-1}(0)\approx S$, it is common to consider the \textit{Eikonal} problem:
\begin{align}\label{e-3d_eikonal_equation}
\norm{\nabla g}=1 \text{subject to }
g =0 \text{on } S.
\end{align}
Which asks for $g$ to be the SDF of a set containing $S$. We can derive from Eq~\eqref{e-3d_eikonal_equation} that $\dot{\nabla g}{N}=1$ on $S$, which implies that $\nabla g$ must be aligned with the normals of $S$.

\vspace{0.2cm}

We are interested in using neural networks to evolve~$S$.
The level sets of $g$ could be used, but they do not allow intersections between surfaces at different instants. To avoid this,
we extend the domain of the implicit function to $\R^3\times \R$, where the parameter $t\in\R$ controls the animation.

\subsection{Evolving implicit surfaces}
We use a function $f:\R^3\times\R\to \R$ to define the above extension. For evolving the level sets of $g$ we require $f=g$ in $\R^3\times \{0\}$ and the resulting \textit{evolution} is given~by
\begin{align}
    S_t = f_t^{-1}(0)=\{p\in\R^3|\, f(p,t)=0\}
\end{align}

We assume that $f$ is negative in the interior of $S_t$ and its \textit{normal} vectors are given by $N=\frac{\nabla f}{\norm{\nabla f}}$. Furthermore, $\nabla f$ denotes the \textit{gradient} of $f$ with respect to the space $\R^3$.

The time $t$ allows continuous navigation in $S_t$ and represents a \textit{transformation} $S_0 \to S_{t} $ of the initial surface $S_0$.

\pagebreak

\noindent Moreover, for $t_0, t_1\in \R$,
the function $f$ provides a \textit{smooth deformation} between $S_{t_0}$ and $S_{t_1}$.
These surfaces can contain singularities as their topologies may change over time. Hart \cite{hart1998morse} has studied this phenomenon using Morse~theory.

Such an Eulerian approach is usually explored in geometry processing by storing $f$ on a 4D grid~\cite{desbrun1998active}.
Here, we take an analytical formulation by parametrizing $f$ by a (coord-based) network.
Such an approach has several advantages.
First, \textit{automatic differentiation} provides us the analytical \textit{derivatives} of $f$, which may be used at the loss function.
We can also compute the normals and curvature measures of $S_t$ in closed form~\cite{novello21diff}.
Additionally, neural networks are compact representations for implicit functions, guaranteed by the \textit{universal approximation~theorem}~\cite{cybenko1989approximation}.
Storing $f$ with precision in a 4D grid could be an unfeasible task.

\subsection{Level set equation}\label{s-level-set}
Encoding the surface evolution $S_t$ by the time-dependent function $f$ results in a PDE -- the \textit{level set equation} (LSE). We describe it below.


Let $g:\!\R^3\!\!\to\! \R$ be the SDF of the initial surface $S$.
A function $f\!:\!\R^3\!\!\times\!\R\!\to\! \R$ encodes the evolution $S_t$ if $f\!=\!g$ on $\R^3\!\times\! \{0\}$, and for each point $p$, the function $f$ is constant along its path $\alpha(t)$, i.e. $f\big(\alpha(t), t\big)\!=\!c$ iff $g(p)\!=\!c$.
Thus, deriving the function $f\big(\alpha(t), t\big)$ we obtain
\begin{align}\label{e-level_set}
\displaystyle\frac{\partial f}{\partial t} \big(\alpha(t), t\big) +\dot{\nabla f\big(\alpha(t), t\big)}{\alpha'(t)}=0.
\end{align}
The derivative $\alpha'(t)$ is a vector field along the path $\alpha(t)$.

As Eq~\eqref{e-level_set} holds for each point $p$, we can drop its path $\alpha$ and use only $\alpha'$, which can be seen as a time-dependent vector field $V(p,t)=\alpha'(t)$. Then, the function $f$ encoding the animation $S_t$ is a solution of the LSE:
\begin{align}\label{e-level_set_equation}
\begin{cases}
\displaystyle\frac{\partial f}{\partial t} +\dot{\nabla f}{V}=0 & \text{in } \R^3\times (a,b),\\
f =g & \text{on } \R^3\times \{t=0\}.
\end{cases}
\end{align}
The time interval $(a,b)$ contains $t=0$ and controls the evolution $S_t$.
A solution $f$ of Eq~\eqref{e-level_set_equation} implicitly encodes the integration of $V$.
Thus, defining a family of vector fields is a way to animate a given surface using the LSE.


Observe that in the same sense that we considered a neural implicit function as a solution of Eq~\eqref{e-3d_eikonal_equation}, we assume that the function $f$ is a solution of $\frac{\partial f}{\partial t} +\dot{\nabla f}{V}=0$ subject to $f=g$ on $\R^3\times \{0\}$.
Sec~\ref{s-examples} presents examples of LSEs.


\section{Method}
\label{s-training}

We propose representing a surface evolution by a (coord-based) neural network $f_\theta:\R^3\times \R\to \R$.
Given an LSE with initial conditions, this section defines a machine learning framework, consisting of a loss functional, sampling strategies, and a network initialization, to train $f_\theta$ to approximate a solution to the LSE problem.

Let $g_i:\R^3\to \R$ be the SDFs of $n$ surfaces $S_i$.
We train $f_\theta$ by forcing it to approximate a solution a \textit{neural} LSE:
\begin{align}\label{e-PDE}
\begin{cases}
\mathcal{F}:=\displaystyle\frac{\partial f_\theta}{\partial t} +\dot{\nabla f_\theta}{V}=0 & \text{in } \R^3\times (a,b),\\
f_\theta =g_i & \text{on } \R^3\times \{t_i\}.
\end{cases}
\end{align}
We employed the notation $\mathcal{F}$ to represent the LSE for brevity.
The untrained network $f_\theta$ must encode the movement ruled by the vector field $V$.
$(a,b)$ can be used to control the resulting neural animation $S_t$ of $S$.

\subsection{Loss functional}
\label{ss-loss_functional}

We use Eq~\eqref{e-PDE} to define a loss function to train $f_\theta$.
\begin{align}\label{e-PDE_loss_function}
\mathcal{L}(\theta)=\underbrace{\int\limits_{\R^3\times(a,b)}\!\!\!\big|\mathcal{F}\big|dpdt}_{\mathcal{L}_{\text{LSE}}(\theta)} +\underbrace{\sum_{i=1}^n\int\limits_{\R^3\times\{t_i\}} \!\!\!\big|f_\theta-g_i\big|dp}_{\mathcal{L}_{\text{data}}(\theta)}.
\end{align}
The LSE \textit{constraint} ${\mathcal{L}_{\text{LSE}}}$ forces the network $f_\theta$ to satisfy $\mathcal{F}=0$ and works as a regularization of $f_\theta$ that requires it to follow the underlying phenomenon.
The \textit{data constraint} $\mathcal{L}_{\text{data}}$ asks for $f_\theta$ to satisfies $f_\theta=g_i$ on $\R^3\times \{t_i\}$.

\subsection{Sampling}
\label{ss-training}
To approximate a solution $f_\theta$ of Eq~\eqref{e-PDE}, we seek a minimum $\theta$ of the loss function $\mathcal{L}$ using the \textit{stochastic gradient descent}. For this, we enforce $\mathcal{L}={\mathcal{L}_{\text{LSE}}}+\mathcal{L}_{\text{data}}$ with a sampling in $\R^3\times(a,b)$ and another in $\bigcup_{i=1}^{n}\R^3\times\{t_i\}$.

\paragraph*{Sampling space-time points}~\\
During training, we sample minibatches of $l_1\in\mathbb{N}$ \textit{space-time points} $(p_j,t_j)\in{\R^3\times (a,b)}$ randomly. Then, $\mathcal{L}_{\text{LSE}}$ is enforced in $(p_j,t_j)$, yielding the approximation:
\begin{align*}
\widetilde{\mathcal{L}_{\text{LSE}}}(\theta)=\frac{1}{l_1}\sum_{j=1}^{l_1}\abs{\mathcal{F}(p_j,t_j)}
\end{align*}

\noindent Observe that the LSE constraint $\mathcal{L}_{\text{LSE}}$ does not need any data supervision.

\paragraph*{Sampling initial conditions}~\\
The data constraint $\mathcal{L}_{\text{data}}$ forces $f_\theta$ to fit the input dataset, which we consider to be the SDFs $g_i$ of $n$ surfaces $S_i$.

We use Eq~\eqref{e-3d_eikonal_equation} to define $\mathcal{L_{\text{data}}}=\sum\mathcal{L}_{i}$, with $\mathcal{L}_{i}$ managing the restrictions $g_i=f_\theta$ on $\R^3\times \{t_i\}$.
\begin{align*}
    \mathcal{L}_{i} \!\!=\!\!\!\!\!\! \underbrace{\int\limits_{\R^3\times{t_i}}\!\!\!\!\Big|1\!-\!\big|\nabla f_\theta\big|\Big|dp}_{\mathcal{L}_{\text{Eikonal}}}+\!\!\!\!
    \underbrace{\int\limits_{\R^3\times{t_i}}\!\!\!\!\!\!\big|f_\theta \!-\!g_i\big|dp}_{\mathcal{L}_{\text{Dirichlet}}}+\!\!\!
    \underbrace{\int\limits_{S_{i}}\!\big|1\!-\!\dot{\nabla f_\theta}{N_i}\big|dS_i}_{\mathcal{L}_{\text{Neumann}}}.
\end{align*}
Where ${\mathcal{L}_{\text{Dirichlet}}}$ asks for $f_\theta$ to fit $g_i$ at time $t_i$, ${\mathcal{L}_{\text{Neumann}}}$ requires the alignment between $\nabla f_\theta$ and the normals of $S_i$, and ${\mathcal{L}_{\text{Eikonal}}}$ is the Eikonal regularization.
During training, these constraints are discretized, as in the PDE constraint case.
Then, we sample \textit{minibatches} with $l_2$ \textit{on-surface} points ($g_i=0$) and $l_3$ \textit{off-surface} points ($g_i\neq 0$).
Observe that only the initial conditions $\{g_i\}$  are used in $\mathcal{L}_{\text{data}}$.

In practice, we use two kinds of initial conditions. First, the neural networks $g_i$ fit the SDFs of $S_i$, resulting in faster training since, for each point $p_i$, the values $g_i(p_i)$ and $\grad{g_i}$ given by the evaluation of $g_i$ and its derivative~at~$p_i$.
In this case, we avoid using ${\mathcal{L}_{\text{Eikonal}}}$ since $g_i$ are already trained to satisfy the Eikonal equation.
Second, we consider point clouds $\{p_j, N_j\}_i$ sampled from $S_i$, where we have to approximate the SDF of $S_i$~\cite{novello21diff}.
In both cases, we include the constraint ${\mathcal{L}_{\text{Neumann}}}$ that forces a \textit{normal alignment} at $t=0$ regularizing the orientation near the zero-level set.

During training, we sample minibatches of size $l_1\!+\!l_2\!+\!l_3$ to feed $\mathcal{L}$. $l_i$ are the numbers of space-time, on-surface, and off-surface points. The experiments shown good results using that $l_1$, $l_2$, $l_3$ have $50\%$, $25\%$, $25\%$ of the minibatch size.
The supplementary materials give experiments~varying $l_i$.

\subsection{Neural network architecture}
\label{ss-nn_architecture}

We consider the neural network to be a \textit{sinusoidal MLP}
$f_\theta(p)=W_{d+1}\circ f_{d}\circ \cdots \circ f_{1}(p)+b_{d+1},$
with $d$ hidden layers $f_i(p_i)=\sin(W_ip_i\!+\!b_i)$, where $W_i\in\R^{N_{i+1}\times {N_i}}$ are the weight matrices, and $b_i\!\in\!\R^{N_{i+1}}$ are the biases. The sine is applied at each coordinate of $W_ip_i+b_i$. $\theta$ consists of the union of the coefficients of $W_i$ and $b_i$.
The integer $d$ is the \textit{depth} of $f_\theta$ and the dimensions $N_i$ are the layers \textit{widths}.

The network $f_{\theta}$ is smooth and we can compute its derivatives using automatic differentiation.
Therefore, we train $f_\theta$ using the loss function $\mathcal{L}$ in Eq~\eqref{e-PDE_loss_function}.

\subsection{Network initialization}
\label{s-network-initialization}

We introduce a novel initialization of $f_\theta:\R^3\times\R\to \R$ based on a previously trained network $g_\phi:\R^3\to \R$. This initialization of $\theta$ using $\phi$ results in faster training compared with the standard definitions \cite{sitzmann2020implicit} (see Sec~\ref{s-initialization_experiments}).

Assume that the training of $f_\theta$ is subject to
$f_\theta =g_\phi$ on $\R^3\times \{0\}$. Then
we define $\theta$ in terms of $\phi$ such that $f_\theta(p,t)=g_\phi(p)$ for all $t$, that is, $f_\theta$ will be constant and equal to $g_\phi$ over time.
This allow us to start the training of $f_\theta$ to fit a solution of an underlying LSE problem with $f_\theta$ already satisfying the initial condition.

For this, we suppose that $f_\theta$ is wider than $g_\phi$ and that their depths are equal to $d$.
Specifically, let $B_i\in\R^{N_{i+1}\times N_i}$, $b_i\in\R^{N_{i+1}}$ be the \textit{trained} weight matrices and biases of $g_\phi$, and $A_i\in\R^{M_{i+1}\times M_i}$, $a_i\in\R^{M_{i+1}}$ be the \textit{untrained} weight matrices and biases of $f_\theta$.
Since $f_\theta$ is wider than $g_\phi$, i.e. $N_i\leq M_i$, we can define $A_i$, $b_i$ using

\vspace{-12pt}

\begin{align*}
    A_1=\begin{psmallmatrix}B_1 & 0\\F_p & F_t\end{psmallmatrix}, & \quad A_i=\begin{psmallmatrix}B_i & 0\\0 & 0\end{psmallmatrix} \text{for } i=2,\ldots, d,\\ A_{d+1}=\begin{psmallmatrix}B_{d+1} & L\end{psmallmatrix},&\quad  a_i=\begin{psmallmatrix}b_i \\0\end{psmallmatrix} \text{for } i=1,\ldots, d+1.
\end{align*}

\vspace{-2pt}

\noindent Thus, $f_\theta(p,t)=g(p)$ for all $t$, see supp. material for the details.
$F_p$, $F_t$ project the input $(p, t)$ in the dictionary $\sin\left(F_p p + F_t  t\right)$, and are initialized using the standard approach.
Note that these sines are not used in the first training step, but as it advances, the new hidden weights combine them improving the training (see Sec~\ref{s-initialization_experiments}).


\section{Examples}
\label{s-examples}
Here, we present examples of neural implicit evolution using LSE.
Sec~\ref{ss-time-independent_vector_fields} shows examples using time-independent vector fields.
Sec~\ref{ss-mean_curvature_flow} considers the mean curvature equation, which is intrinsically related to the surface and results in smoothing/sharpening applications for implicit neural surfaces. Sec~\ref{ss-interpolation} investigates interpolations between implicit neural surfaces using an LSE.

Recently, these problems have been addressed by different neural methods~\cite{mehta2022level, yang2021geometry, liu2022learning}. Comparisons are made in Sec~\ref{s-experiments}. Hereafter, we give each application's conceptualization and corresponding loss function.

\subsection{Time-independent vector fields}
\label{ss-time-independent_vector_fields}
Moving a surface $S$ towards a vector field $V:\R^3\to \R^3$
results in a simple LSE.
Specifically, let $g$ be the SDF of $S$.
Since $V$ does not change over time, it may be defined and customized beforehand. For example, sources, sinks, saddles, and constant vectors may be used to generate vector fields based on specific applications.

We train a neural network $f_\theta:\R^3\times\R\to \R$ to implicitly encode the evolution of $S$ by $V$ using the resulting LSE:
\begin{align}\label{e-vector-field}
\begin{cases}
\displaystyle\frac{\partial f_\theta}{\partial t} -{v}\norm{\nabla f_\theta}=0 & \text{in } \R^3\times (a,b),\\
f_\theta =g& \text{on } \R^3\times \{0\}.
\end{cases}
\end{align}
Here, $v$ denotes the size of the normal component of $V$, that is, $v(p,t)=\dot{V(p)}{N_t(p)}$.
The minus in Eq~\eqref{e-vector-field} is because we need the inverse of the resulting flow to compose with~$g$.

Sec~\ref{s-deformation_vector fields} gives two experiments using time-independent vector fields as a proof of concept.
Other examples are presented in the video supplementary material.

\subsection{The mean curvature equation}
\label{ss-mean_curvature_flow}

The \textit{mean curvature equation} evolves the level sets with velocity given by the negative of their mean curvature, resulting in a smoothing along the time~\cite{mean_curv,bellettini2014lecture}.

Let $V(p,t)=-\kappa(p,t)N(p,t)$ be the \textit{mean curvature vector}, where $N$ is the normal field of the level sets and $\kappa=\,\text{{div}}\,N$ is the \textit{mean curvature}; $\text{{div}}$ is the \textit{divergence} operator.
Replacing $V$ in Eq~\eqref{e-PDE} results in:
\begin{align}\label{e-mean_curvature_equation}
\begin{cases}
\displaystyle\frac{\partial f}{\partial t} -\alpha\norm{\nabla f}\,\kappa_\theta =0 & \text{in } \R^3\times (a,b),\\
f =g & \text{on } \R^3\times \{t=0\}.
\end{cases}
\end{align}
Intuitively, the zero-level set moves toward the mean curvature vector $-\kappa N$, contracting regions with positive curvature and expanding regions with negative curvature. Thus, such procedure \textit{smooths} (\textit{sharpens}) the surface if $t>0$ ($t<0$). 
$\alpha$ controls the level set evolution, which has relations with \textit{minimal surfaces} (Sec 1 of supp. material).

Analogously to Eq.~\eqref{e-PDE}, we define $\lossPDE+\lossdata$ to fit a solution of Eq.~\eqref{e-mean_curvature_equation} using $\mathcal{F}:=\frac{\partial f}{\partial t} -\alpha\norm{\nabla f}\,\kappa_\theta$.
Sec~\ref{s-smoothing_sharpening_surface} presents smoothing and sharpening using this technique.

\subsection{Interpolation between implicit surfaces}
\label{ss-interpolation}
Let $g_i$ be the SDFs of two surfaces $S_i$.
We present a LSE approach to interpolate $g_i$.
A vector field $V$ for Eq~\eqref{e-PDE}, such that its solution interpolates between $S_i$, has the form:
\begin{align}\label{e-vector_interpolation}
V(p,t)=-\big(g_2(p)-f(p,t)\big)\frac{\nabla f(t,p)}{\norm{\nabla f(t,p)}},
\end{align}
with $f(p,0)=g_1(p)$. Note that the evolution towards $V$ forces each $c$-level set of $g_1$ to match the $c$-level set of $g_2$.
The resulting LSE is given by substituting Eq~\eqref{e-vector_interpolation} in Eq~\eqref{e-PDE}
\begin{align}\label{e-interpolation_vector_fields}
\begin{cases}
\displaystyle\frac{\partial f}{\partial t} -\norm{\nabla f}(g_2-f) =0 & \text{in } \R^3\times \R,\\
f =g_i & \text{on } \R^3\times \{t_i\}.
\end{cases}
\end{align}
A solution $f$ of Eq~\eqref{e-interpolation_vector_fields} will locally inflate $S_1$ if inside $S_2$, and deflate it if outside so that $S_1$ will always try to fit into $S_2$~\cite{desbrun1998active}.
Again, we define a loss function $\lossPDE + \lossdata$ to fit a solution of Eq~\eqref{e-interpolation_vector_fields} using $\mathcal{F}:=\frac{\partial f}{\partial t} -\norm{\nabla f}(g_2-f)$.

\vspace{0.1cm}

Theoretically, we could use the mean curvature equation to minimize deformations along the interpolation. This LSE has the property of minimizing area distortions of the resulting evolution; see Sec 1 of the supplementary material.

\section{Experiments}
\label{s-experiments}
Here, we present the experiments of the examples~given in Sec~\ref{s-examples}.
See the supplementary material for an ablation study of the training, sampling, and initialization.

\subsection{Deformation driven by vector fields}
\label{s-deformation_vector fields}

We use the the definitions in Sec~\ref{ss-time-independent_vector_fields} to train an animation based on vector fields spatially related to the initial surface.

First, consider the twist $V(x,y,z)=y(-z, 0, x)$ of $\R^3$ along the $y$-axis.
Substituting it in Eq~\eqref{e-PDE} results in a level set equation, which we use to derive a loss function.

Let $g$ be the SDF of the Armadillo and $f_\theta$ be
a network with $2$ hidden layers $f_i:\R^{256}\to \R^{256}$.
We trained $f_\theta$ during $48000$ epochs using minibatches of $25000$ on-surface points ($g=0$), $25000$ off-surface points ($g\neq0$), and $8000$ points in $\R^3\times[-1,1]$.
Fig~\ref{f-twist_armadillo} shows $3$ reconstructions of the zero-level sets of $f_\theta$ at times $t_i=0, 0.25, 0.5$.
\begin{figure}[ht]
    \centering
        \includegraphics[width=0.31\columnwidth]{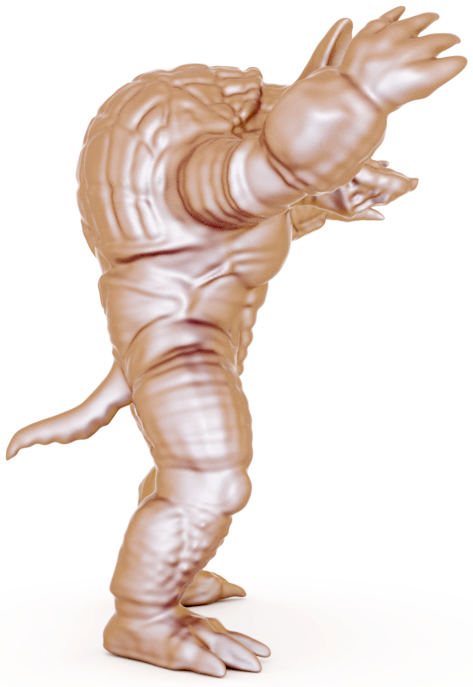}
        \includegraphics[width=0.31\columnwidth]{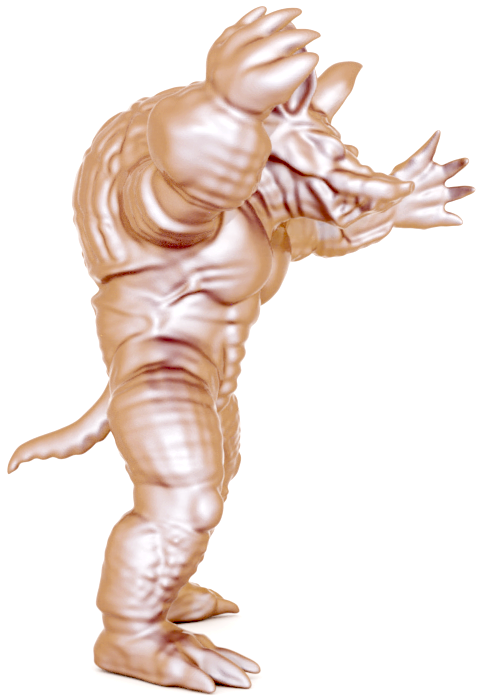}
        \includegraphics[width=0.32\columnwidth]{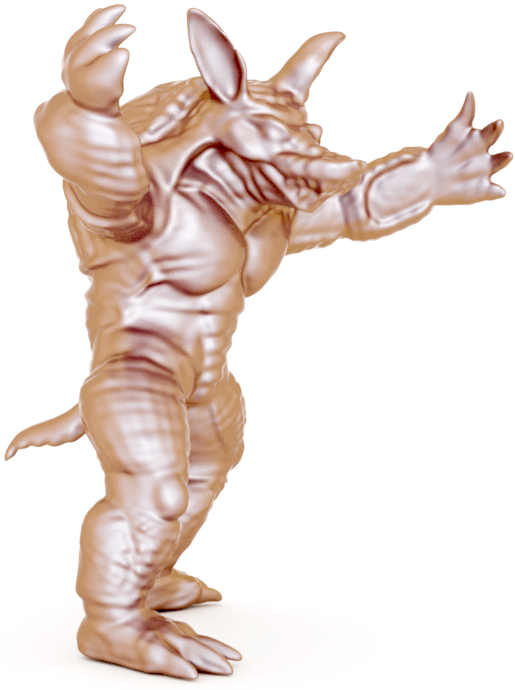}
    \caption{Evolving the level sets of the Armadillo's SDF using the vector field that represents a twist of $\R^3$ along the $y$-axis.}
    \label{f-twist_armadillo}
\end{figure}

Although we do not provide data in $\R^3\!\!\times\!\{t\neq 0\}$, the solution is well approximated (see Fig~\ref{f-twist_armadillo}).
The vertical axis is the $y$-axis, and the origin of $\R^3$ is at the ground.

For the next experiment, let $g$ be the SDF of the Spot (Fig~\ref{f-spot_vector_field}, center).
Define $V=V_1-V_2$ as the sum of a \textit{source} $V_1$ and a \textit{sink} $-V_2$,
with $V_i(p)=e^{-\frac{|{p-p_i}|^2}{0.18}} (p-p_i)$. The~points $p_1$, and $p_2$ are the centers of Spot's body and head.
Again, we use $V$ to derive a loss function to train $f_\theta$.
We parameterize $f_\theta$ with one hidden layer $f_i\!:\!\R^{128}\!\to\! \R^{128}$ and
train it for $46000$ epochs.
As expected, it reduces the Spot's head while it increases the body size, see Fig~\ref{f-spot_vector_field}.
\begin{figure}[ht]
    \centering
        \includegraphics[width=\columnwidth]{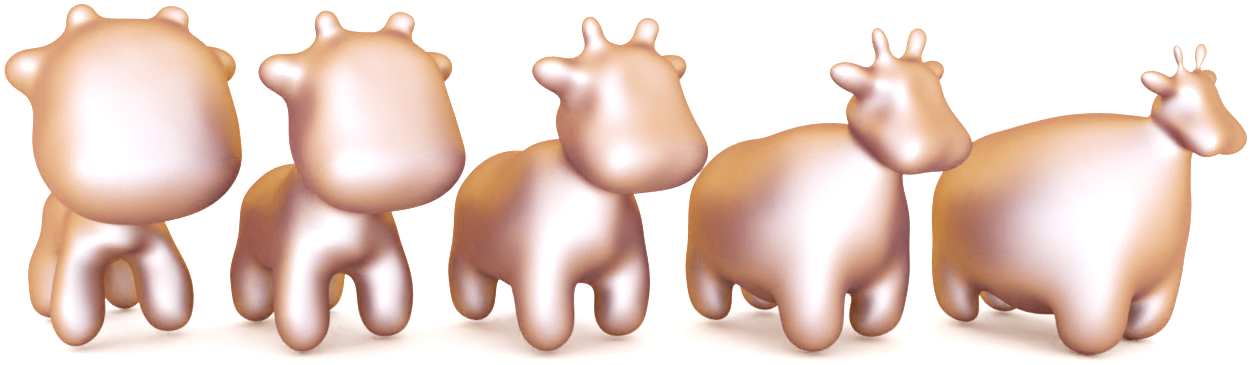}
    \caption{Evolving the zero-level sets of a network according to a vector field with a source and a sink. We set the SDF of the Spot as the initial condition at $t=0$ (middle). The sink/source are inside the head/body of the Spot.
    }
    \label{f-spot_vector_field}
\end{figure}

Table~\ref{t-vector_fields} shows that the above networks are close to satisfying the LSE problems.
We compare $f_\theta(\cdot,0)$ with the initial condition $g$ and measure how close $f_\theta$ is from satisfying $\mathcal{F}=0$. For this we use the following measures: 1) The absolute difference $\abs{f_\theta(\cdot, 0)-g}$ in $\mathbb{R}^3\times\{0\}$; 2) The evaluation of $f_\theta$ in $\abs{\mathcal{F}}$  in $\mathbb{R}^3\times\R$.
We use a sample of $1000$ points in $\mathbb{R}^3\times\{0\}$ and $\mathbb{R}^3\times\R$, not included in the training process, to evaluate the mean/maximum of each measure.
\begin{table}[ht]
\footnotesize
\begin{tabular}{c||ll|ll||ll}
\hline
\multirow{2}{*}{\begin{tabular}[c]{@{}c@{}}Vector\\ field\end{tabular}} & \multicolumn{2}{c|}{\begin{tabular}[c]{@{}c@{}}on-surface\\ constraint\end{tabular}} & \multicolumn{2}{c||}{\begin{tabular}[c]{@{}c@{}}off-surface\\ constraint\end{tabular}} & \multicolumn{2}{c}{\begin{tabular}[c]{@{}c@{}}PDE\\ constraint\end{tabular}} \\ \cline{2-7}
                                                                        & \multicolumn{1}{l|}{mean}                          & max                             & \multicolumn{1}{l|}{mean}                           & max                             & \multicolumn{1}{l|}{mean}                      & max                         \\ \hline
twist                                                                   & 0.0008                                             & 0.003                          & 0.002                                               & 0.028                           & 1e-5                                          & 0.0004                      \\
source-sink                                                                 & 0.0009                                             & 0.005                          & 0.001                                               & 0.015                           & 2.7e-6                                        & 0.0005                      \\ \hline
\end{tabular}
\caption{Comparisons between the ground truth initial conditions at $t=0$ and the measures of how close the trained networks are to satisfy the underlying LSEs.}
\label{t-vector_fields}
\end{table}

\subsection{The mean curvature equation}
\label{s-smoothing_sharpening_surface}
The next set of experiments seek to solve the mean curvature equation. First, we consider simple initial conditions such as the cube and the dumbbell. Then we use the intrinsic properties of this LSE to provide smoothing/sharpening of detailed surfaces.

\subsubsection{Simple initial conditions for validation}
\label{sss-mean_curvature_validation}

Let $g$ be the SDF of the cube, and $f_\theta:\R^3\times \R\to \R$ be a network with $3$ hidden layers $f_i:\R^{256}\to \R^{256}$. We set $\alpha=0.1$ and optimized $f_\theta$ for $8000$ epochs using the loss function resulting from Eq~\eqref{e-mean_curvature_equation}.
We used an oriented point cloud of size $40000$, sampled from the cube.
During training, we consider minibatches of $5000$ on-surface points, $5000$ off-surface points, and $10000$ in $\R^3\times[-1,1]$.

Fig~\ref{f-cube_mean_curv_epoch_8000} shows the level sets of $f_\theta$ at $t=\frac{i}{5}$, $i=0,\ldots,5$.
As expected, regions with positive mean curvature, such as the cube corners, contract.
This LSE evolves the surface toward the normals times the negative of the mean curvature.
Therefore, the cube will at some instant of time collapse to a point, but right before it will be very close to a sphere~\cite{mean_curv}.
\begin{figure}[ht]
    \centering
        \includegraphics[width=\columnwidth]{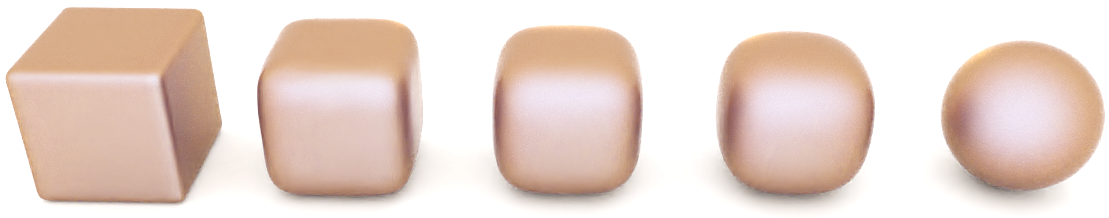}
    \caption{Mean curvature equation of cube surface.}
    \label{f-cube_mean_curv_epoch_8000}
\end{figure}

The \textit{dumbbell} is a classical example.
Let $g$ be its SDF and $f_\theta$ be a network with $2$ layers $f_i:\R^{256}\to\R^{256}$. We set $\alpha=0.05$ and sample a point cloud of size $80000$ from the dumbbell.
The training took $2800$ epochs using minibatches of $5000$ on-surface points, $5000$ off-surface points, and $10000$ points in $\R^3\times[-1,1]$.

Fig~\ref{f-ricci_flow_epoch_2800} shows the level sets of $f_\theta$ at times $t_i=\frac{i}{10}$ for $i=0,\ldots, 7$.
As expected, since the neck region has higher mean curvature, it pinches off first creating two connected components. Later, each component collapses to a point, becoming small spheres right before that. The resulting flow has critical points in different instances of time.
\begin{figure}[ht]
    \centering
        \includegraphics[width=\columnwidth]{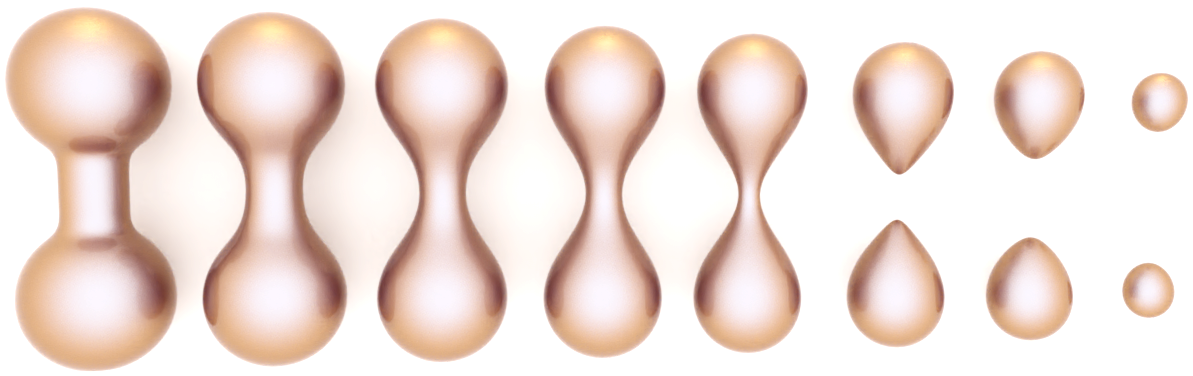}
    \caption{Mean curvature equation of Dumbbell surface.}
    \label{f-ricci_flow_epoch_2800}
\end{figure}

\subsubsection{Smoothing and sharpening}
\label{sss-smoothing_armadillo}

The mean curvature equation evolves the level sets by contracting (expanding) regions with positive (negative) mean curvature. As a consequence, its solution smooths (sharpens) the level sets when $t>0$ ($t<0$).

Let $g$ be the SDF of the Armadillo, and $f_\theta$ be a network with $2$ hidden layers $f_i:\R^{256}\to \R^{256}$. We set $\alpha=0.001$ in Eq~\eqref{e-mean_curvature_equation}. The network $f_\theta$ was trained during $33000$ epochs using an oriented point cloud of size $80000$.
During training, we used minibatches of $5000$ on-surface points, $5000$ off-surface points, and $10000$ in $\R^3\times[-1,1]$.

Fig~\ref{f-armadillo_smoothing_men_curv_a_0.001_epoch_33000} presents three reconstructions of the zero-level sets of $f_\theta$ at times $t=0,0.1,0.2$.
As expected, the Armadillo surface was properly reconstructed at $t=0$ and, as time progressed, it became smoother.
Regions with positive mean curvature, such as the fingers, contracted.
\begin{figure}[ht]
    \centering
        \includegraphics[width=0.9\columnwidth]{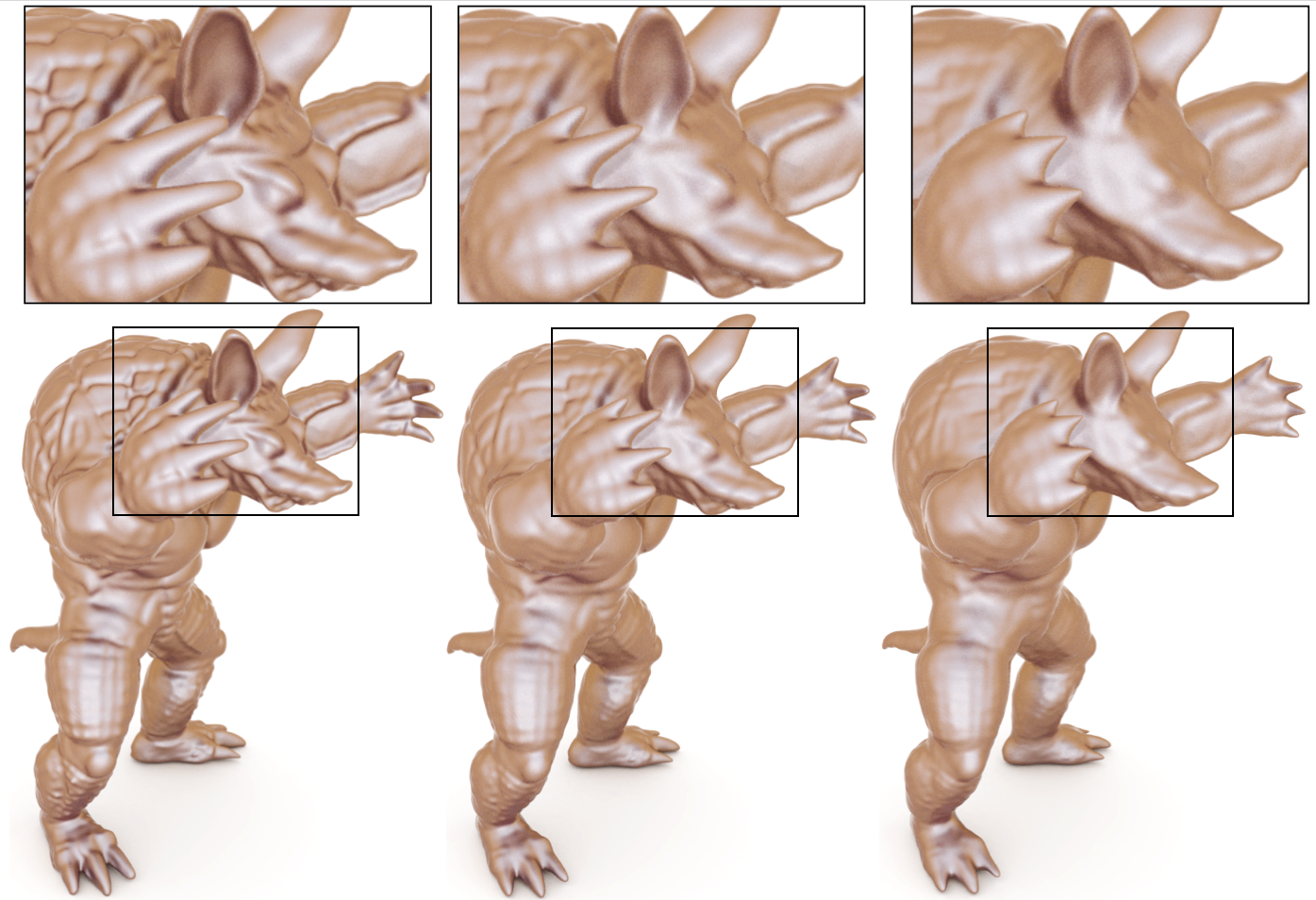}
    \caption{Armadillo smoothing using the mean curvature equation.}
    \label{f-armadillo_smoothing_men_curv_a_0.001_epoch_33000}
\end{figure}

For the sharpening we reconstruct the zero-level sets at $t=0,-0.1,-0.2$ (see
Fig~\ref{f-armadillo_sharpening_men_curv_a_0.001_epoch_33000}).
As expected, regions with positive curvature have expanded, resulting in an enhancement of the geometrical features of the surface.
\begin{figure}[ht]
    \centering
        \includegraphics[width=0.9\columnwidth]{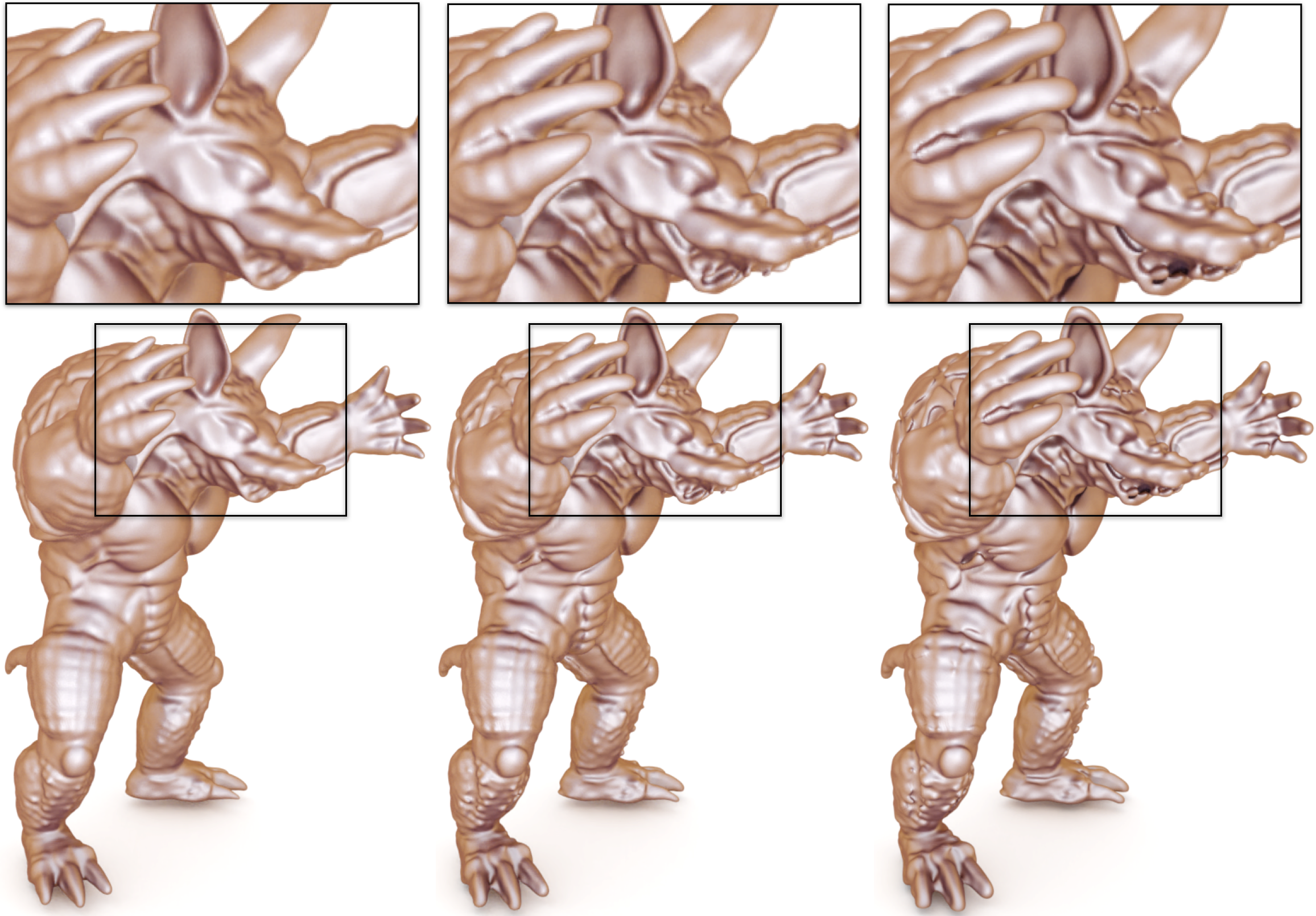}
        \caption{Using the mean curvature equation to enhance the geometrical details of the Armadillo surface.}
    \label{f-armadillo_sharpening_men_curv_a_0.001_epoch_33000}
\end{figure}

\subsubsection*{Numerical evaluation}

There is no available (ground-truth) analytical solutions of the mean curvature equation (Eq~\eqref{e-mean_curvature_equation}) for the above surfaces and finding them is not trivial. Nonetheless, we can quantitatively evaluate the network's proximity to satisfying~Eq~\eqref{e-mean_curvature_equation}.
This is presented in Table~\ref{t-mean_curv_comparisons}.
We compare the trained networks at $t=0$ with the initial surfaces. We also measure how close the networks are to satisfying the LSE.
We used a sample of $1000$ points in $\mathbb{R}^3\times\{0\}$ and $\mathbb{R}^3\times\R$, not included in the training process, to evaluate the mean and maximum values of each measure.
\begin{table}[ht]
\footnotesize
\begin{tabular}{c||ll|ll||ll}
\hline
\multirow{2}{*}{Model} & \multicolumn{2}{c|}{\begin{tabular}[c]{@{}c@{}}on-surface\\ constraint\end{tabular}} & \multicolumn{2}{c||}{\begin{tabular}[c]{@{}c@{}}off-surface\\ constraint\end{tabular}} & \multicolumn{2}{c}{\begin{tabular}[c]{@{}c@{}}PDE\\ constraint\end{tabular}} \\ \cline{2-7}
                       & \multicolumn{1}{l|}{mean}                           & max                            & \multicolumn{1}{l|}{mean}                            & max                            & \multicolumn{1}{l|}{mean}                        & max                       \\ \hline
Cube                   & \multicolumn{1}{l|}{0.0006}                         & 0.007                         & \multicolumn{1}{l|}{0.0013}                          & 0.024                          & \multicolumn{1}{l|}{0.0015}                      & 0.009                     \\
Dumbbell               & \multicolumn{1}{l|}{0.0003}                         & 0.002                         & \multicolumn{1}{l|}{0.0010}                          & 0.013                          & \multicolumn{1}{l|}{0.0009}                      & 0.017                     \\
Armadillo              & \multicolumn{1}{l|}{0.0008}                         & 0.004                         & \multicolumn{1}{l|}{0.0022}                          & 0.016                          & \multicolumn{1}{l|}{0.0019}                      & 0.013                     \\ \hline
\end{tabular}
\caption{Quantitative evaluation of our method in the problem of approximating solutions of the mean curvature equation.
}
\label{t-mean_curv_comparisons}
\end{table}

\subsubsection*{Comparisons}
We compare our technique with NFGP~\cite{yang2021geometry} and NIE~\cite{mehta2022level} for smoothing and sharpening of neural implicit surfaces.

NFGP evolves a network $g_\theta\!:\!\R^3\!\!\to\!\R$ such that the level set of the resulting network $g_\phi$ smooths/sharpens $g_\theta^{-1}\!(0)$. The training optimizes,
$(\kappa_{\phi}\!-\!\beta \kappa_{\theta})^2$, the difference~between the mean curvatures of the level sets of $g_\phi$ and $g_\theta$.
Then, using $\beta\!<\!1$ ($\beta\!>\!1$),
it would force a smoothing (sharpening) of the initial surface.
However, NFGP trains a network $g_\phi$ for each $\beta$, thus it cannot represent a continuous evolution over time. In contrast, our framework directly evolves over time using a single network.
Although the NFGP approach does not use the mean curvature equation model, we can still perform a qualitative analysis as a means of comparison, since a numerical analysis is not feasible.
Fig~\ref{f-sharpening_comparison_ours_nfgp} shows this comparison for sharpening.
The artifacts in the Armadillo's ears are probably due to the inconsistencies in the loss function of NFGP which asks for
$(g_\phi\!-\!{g_\theta})^2$ and $(\kappa_{\phi}\!-\!\beta \kappa_{\theta})^2$, thus the level sets would try to evolve but $(g_\phi\!-\!g_\theta)^2$ forces it to be constant.
\begin{figure}[ht]
    \centering
        \includegraphics[width=0.85\columnwidth]{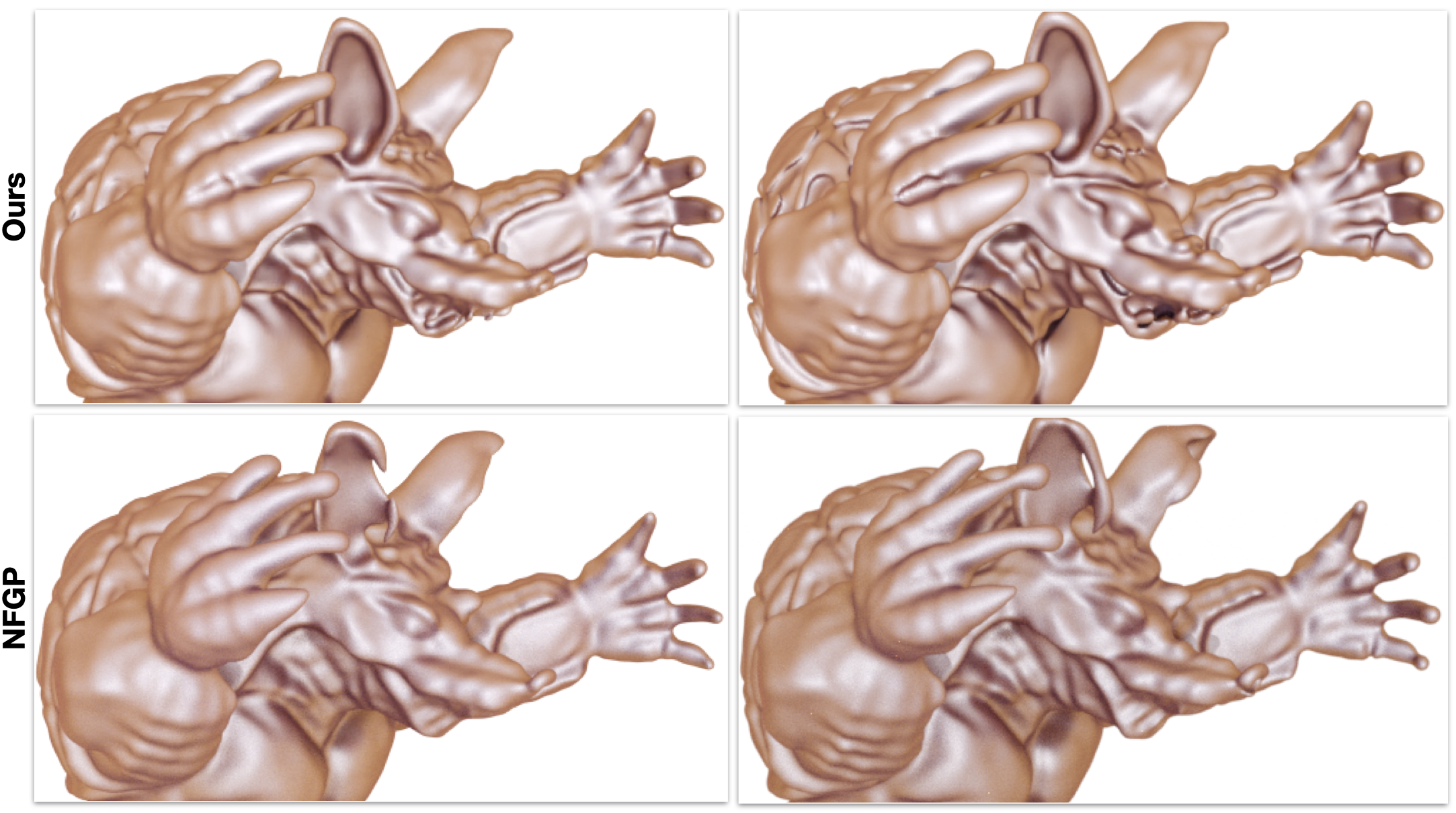}
\caption{Sharpening comparison of our approach with NFGP~\cite{yang2021geometry}. We use the same Armadillos in Fig~\ref{f-armadillo_sharpening_men_curv_a_0.001_epoch_33000}, and $\beta=2, 2.5$ for the NFGP. The experiments used the same initial condition.
Notice that NFGP may produce artifacts while sharpening, as can be seen in Armadillo's ears.
}
    \label{f-sharpening_comparison_ours_nfgp}
\end{figure}

\begin{figure}[ht]
    \centering
        \includegraphics[width=0.85\columnwidth]{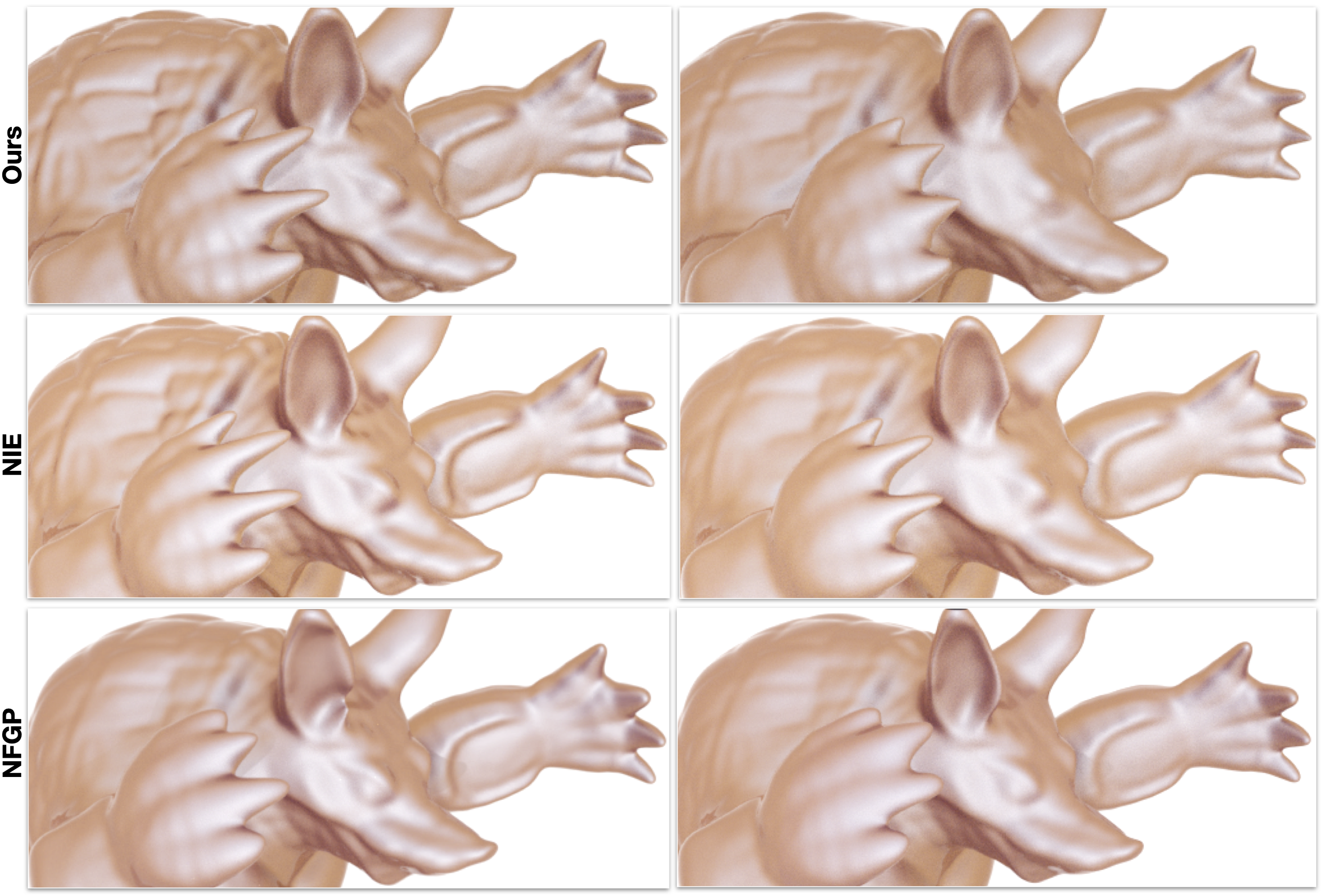}
\caption{Smoothing comparison. Line 1 repeats Fig~\ref{f-armadillo_smoothing_men_curv_a_0.001_epoch_33000}. Line 2 presents steps 4 and 7 of smoothing using NIE~\cite{mehta2022level}. Line 3  shows the results using NFGP~\cite{yang2021geometry} with $\beta=0.95,0.8$. All experiments used the same initial condition.
As expected, our approach is comparable with NIE because they are based on the mean curvature equation. While the overall result of NFGP is smoothing, it may produce artifacts, as can be noticed in the fingers.
}
    \label{f-smoothing_comparison_ours_nie_nfgp}
\end{figure}

We also compare with NIE~\cite{mehta2022level}.
Given a \textit{time step} $\Delta t$, it fits the solution of Eq~\eqref{e-mean_curvature_equation} at times $t_i=i\Delta t$ using the approximation $f_{t_{i+1}} = f_{t_{i}}-\Delta t \alpha \dot{\nabla f_{t_{i}}}{\kappa_\theta N}$.
Thus, for a network $g_{\phi_i}\approx f_{t_{i}}$, NIE trains the next state $g_{\phi_{i+1}}$ by minimizing
$(g_{\phi_{i+1}}-f_{t_{i+1}})^2$. Moreover, a \textit{discrete Laplacian} estimates $\kappa_\theta N$ at the vertices of a mesh approximating~$g_{\phi_i}^{-1}(0)$.
The resulting networks $g_{\phi_i}$ have the domain in $\R^3$ while we use a single network with a domain in $\R^3\times \R$.

Fig~\ref{f-smoothing_comparison_ours_nie_nfgp} shows smoothings of the Armadillo using our method, NIE, and NFGP.
Line 1 repeats the results of Fig~\ref{f-armadillo_smoothing_men_curv_a_0.001_epoch_33000}.
Line 2 gives the results for NIE using $4$ / $7$ steps with~$\Delta t\!=\!1$.
Line 3 presents the results using NFGP to train a network with $\beta\!=\!0.95, 0.8$ defined empirically.
To give a fair comparison, we consider the initial network to be $f_\theta(\cdot, 0)$; $f_\theta$ is the network of our experiment. We use the procedure in Sect. 3 of the Supp. Material to extract $f_\theta(\cdot, 0)$~from~$f_\theta$.

Importantly, we observe no high-frequency noise in the network derivatives.
We used the SIREN extension from \cite{novello21diff} that forces the alignment between the surface normals and the network gradient, differently from NFGP and NIE. 
On the other hand, we observed that computing the mean curvature using the cotangent Laplacian (as in NIE) of a mesh output of a marching cube results in a noisier (see Line 1 of Fig \ref{f-mesh-based-comparison}) compared to our closed form approach (see Line 2 of Fig \ref{f-mesh-based-comparison}) using automatic differentiation.
The first line of Fig \ref{f-mesh-based-comparison} uses the discrete mean curvature (cotangent Laplacian, also used in NIE). Autodiff is applied in the second line without complications.

\subsubsection*{Comparison with mesh-based approaches}

We provide a comparison of our method with a mesh-based approach. We consider the \textit{implicit fairing} method proposed by Desbrun et al.~\cite{desbrun1999implicit}.
Figure \ref{f-mesh-based-comparison} shows the results considering the \textit{implicit fairing} (top) and our approach (bottom) to evolve an Armadillo (with 590k vertices for Desbrun's method) using the \textit{mean curvature flow}, which is the parametric version of the mean curvature equation.
The~colors illustrate the mean curvature.
For the discrete case (Line 1) the curvatures are computed using a mesh-based approach: the \textit{cotangent} formula of the Laplacian of the mesh, also used in NIE.
We run the implicit fairing for 100 steps and extracted three Armadillos.
\begin{figure}[hh]
    \centering
    \includegraphics[width=\columnwidth]{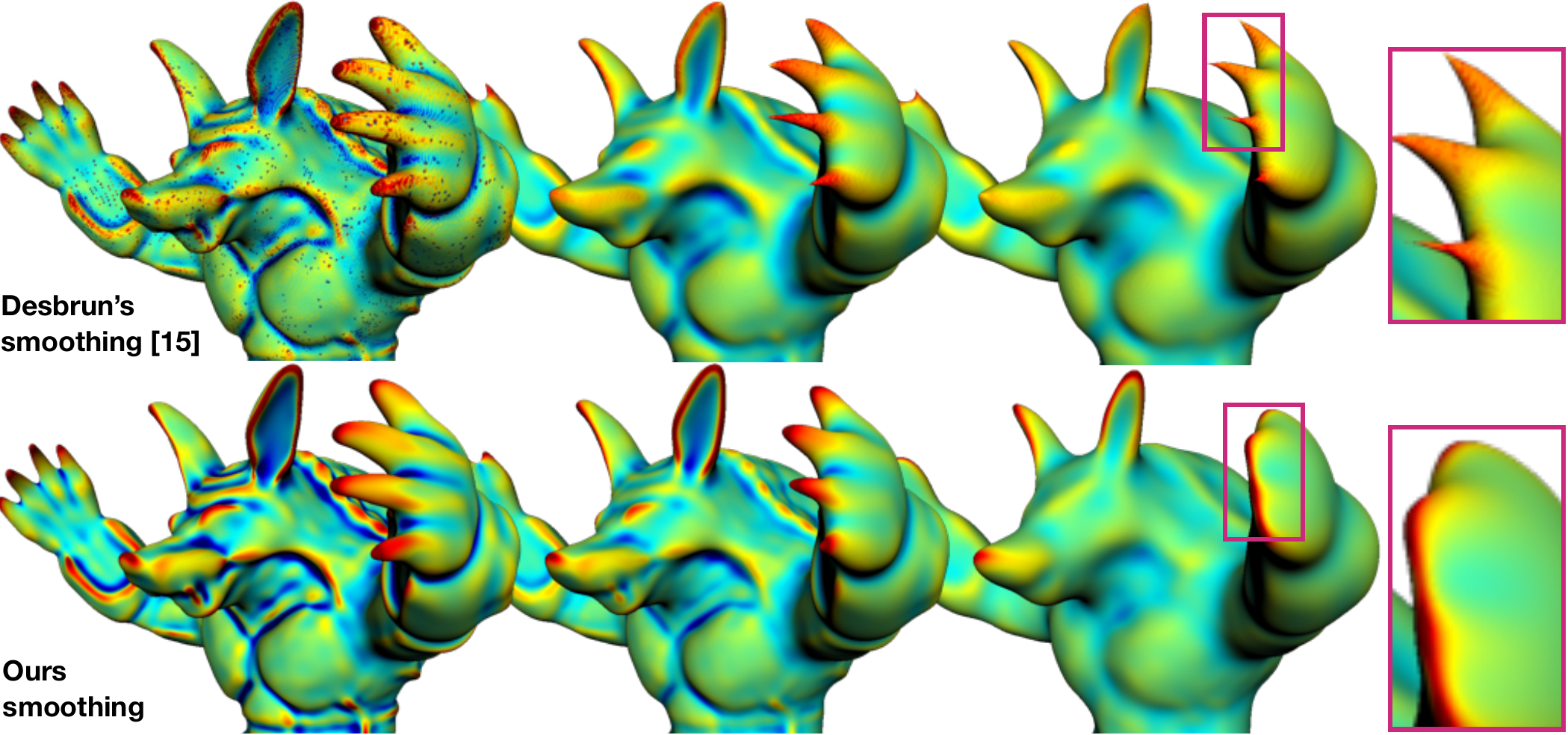}
    \caption{Comparison of our smoothing approach with the implicit fairing~\cite{desbrun1999implicit}. 
  	First line shows the implicit fairing smoothing of the Armadillo triangle mesh. Second line illustrates our approach for smoothing a SDF of the Armadillo.
  	Observe that the mesh-based approach can not properly handle regions with high curvature (such as the Armadillo's fingers). The mean curvature flow should contract those regions like our method does.
}
    \label{f-mesh-based-comparison}
\end{figure}

Evolving the triangle mesh under the mean curvature flow may lead to \textit{shrinkage} in high curvature regions where the surface should contract (see the Armadillo fingers). This results in numerical instabilities since the mesh becomes singular in such regions.

\begin{wrapfigure}[6]{r}{0.51\columnwidth}
\vspace{-0.6cm}
    \includegraphics[width=0.51\columnwidth]{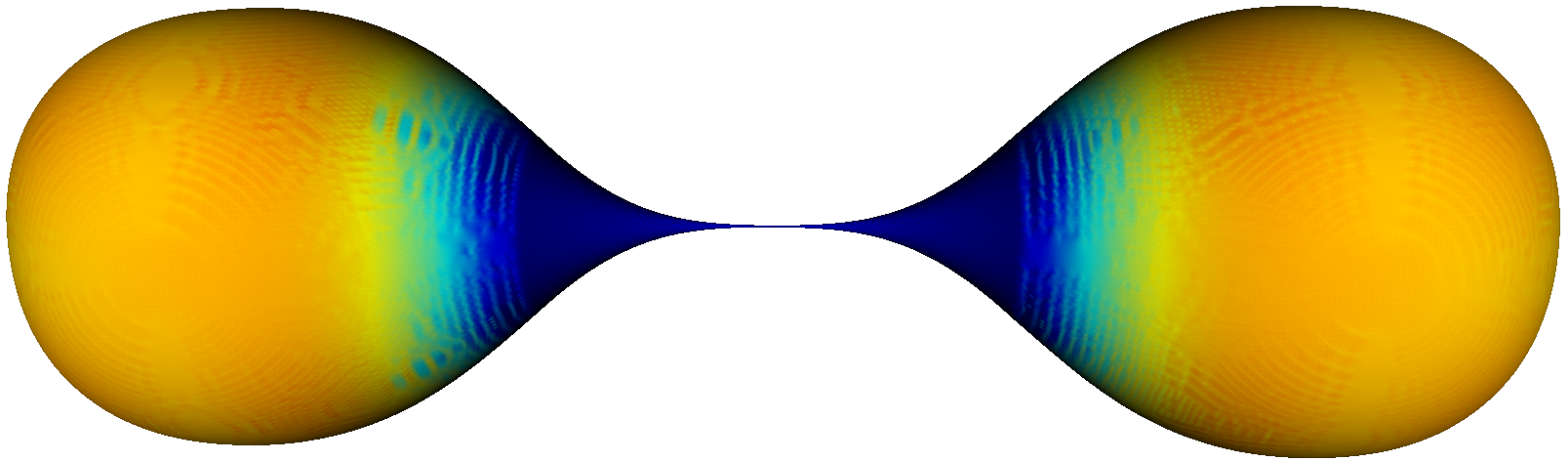}\
    \caption{Evolving a mesh using implicit faring may lead to topological problems (in blue). }
    \label{f-dumbel-change-of-topo}
\end{wrapfigure}

\noindent Another~problem~is the \textit{change of topology} during the surface evolution which may create singularities. Mesh-based methods may encounter problems, such as non-manifolds, in regions affected by topology changes (see the blue region in Fig~\ref{f-dumbel-change-of-topo}).
Our implicit approach overcomes this as shown in Fig~\ref{f-ricci_flow_epoch_2800}.

Training time for our method is comparable to implicit fairing's solution time,
which took 14.4s per iteration and 24.1 min in total to smooth the Armadillo for 100 steps. Our method took 10.1 min for training, allowing instant evaluation of the smoothed Armadillo without iterative re-runs.

\vspace{0.1cm}
Our approach is also faster than NIE, which required 112 min for 100 time steps (67s per step) for the Armadillo case.
\pagebreak

\noindent In contrast, our method trained on the entire interval in 10.1 min using standard initialization. 
In Section~\ref{s-initialization_experiments} (and in the supp. mat.) we give experiments showing that our initialization scheme allows even faster network convergence.

\vspace{0.2cm}

Finally, observe that our smoothing approach gives a \textit{multi-resolution} representation of the initial surface. Thus, future works include incorporating multi-resolution neural networks \cite{lindell2021bacon, paz2023mr} to our framework to fit solutions of the mean curvature equation.

\subsection{Interpolation between implicit surfaces}
\label{s-interpolation_between_surfaces}

Suppose $g_i$ are the SDFs of the Bob and Spot (Fig~\ref{f-spot-bob-interpolation}, left-right) and that $f_\theta$ has $1$ hidden layer $f_i:\R^{128}\to\R^{128}$.
We train $f_\theta$ using $\mathcal{L}_{\text{LSE}}+\mathcal{L}_{\text{data}}$, as described in Sec~\ref{ss-interpolation}.
Line $1$ in Fig~\ref{f-spot-bob-interpolation} shows the reconstructions of the level sets.
\begin{figure}[ht]
    \centering
        \includegraphics[width=0.85\columnwidth]{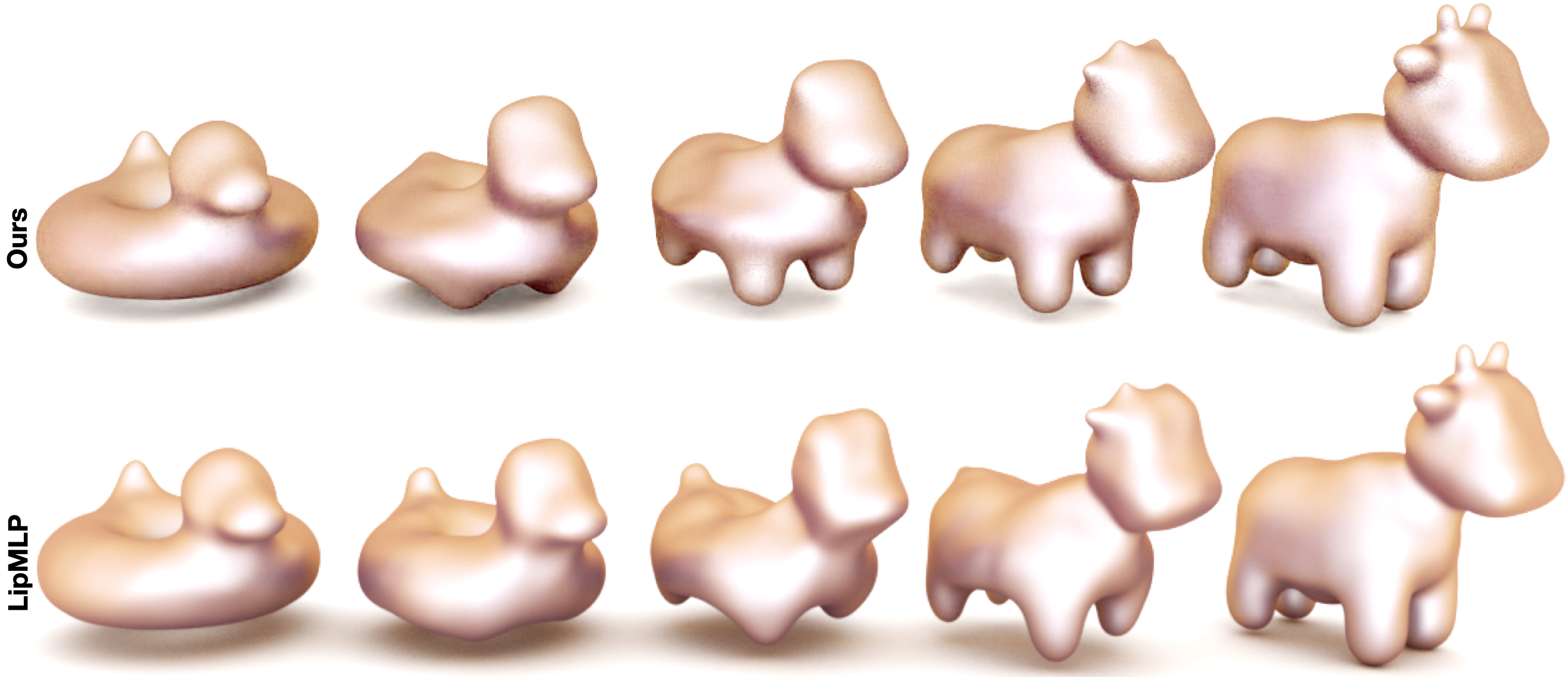}
        \caption{Interpolation between Bob and Spot. Line 1 shows the result using our method, and line 2 the Lipschitz MLP. Notice that our method results in smoother transitions between the images.}
    \label{f-spot-bob-interpolation}
\end{figure}

We compare our method with \textit{Lipschitz MLP}~\cite{liu2022learning} which considers the $tanh$ activation. We use a Lipschitz MLP with $5$ hidden layers of $256$ neurons. Each layer is followed by a Lipschitz regularization. The network was trained during $100000$ epochs. See the resulting interpolation in Line $2$ of Fig~\ref{f-spot-bob-interpolation}.
Our network is significantly smaller than the Lipschitz MLP, but results in natural interpolation. This is due to the high representation capacity of sinusoidal~MLPs.

\begin{figure}[ht]
    \centering
        \includegraphics[width=0.85\columnwidth]{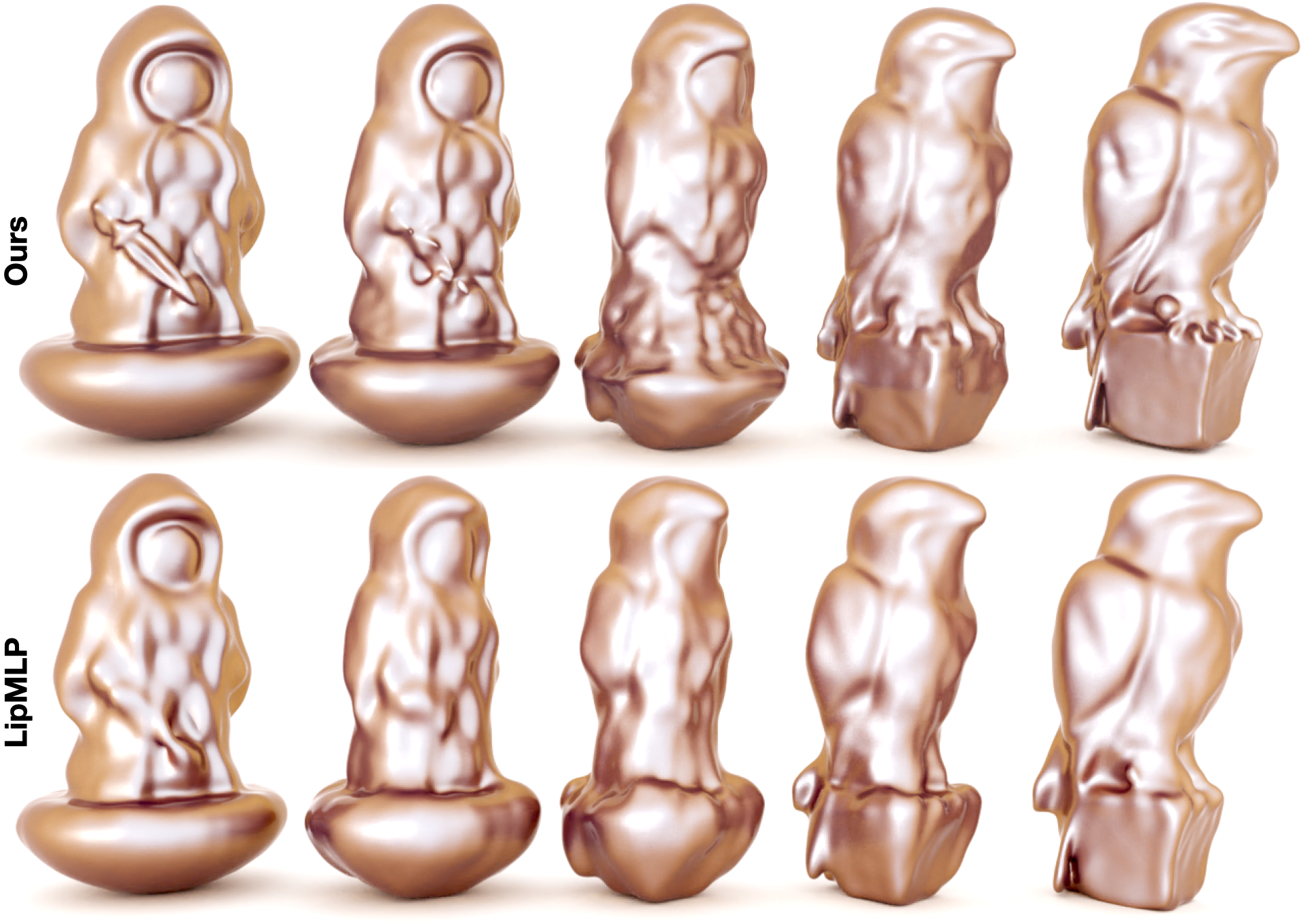}
    \caption{Interpolation between Witch and Falcon. Line 1 (20000 epochs) is the result of our method and Line 2 (100000 epochs) considers the Lipschitz MLP network.
    Notice that our approach results in a better approximation of the initial conditions, as can be seen in the Witch's hood and sword, and Falcon's beak and talons.
    }
    \label{f-witch-falcon-interpolation}
\end{figure}
Fig~\ref{f-witch-falcon-interpolation} shows the reconstructions of interpolation between the Witch and Falcon (from the Thingi10K dataset~\cite{Thingi10K}). The first line is the result of our method using a network with $2$ hidden layers $\R^{128}\!\to\! \R^{128}$ trained during $20000$ epochs. For the Lipschitz MLP, we have to consider a larger network with $5$ hidden layers of $512$ neurons and train it for $100000$ epochs. Even with the added capacity and training iterations, the Lipschitz MLP cannot adequately approximate the initial conditions.

\subsection{Initialization based on trained networks}
\label{s-initialization_experiments}

Let $g_\phi:\R^3\to \R$ be a trained network with $2$ hidden layers $g_i:\R^{256}\to \R^{256}$ that fit the SDF of the Bunny.
Let $f_\theta:\R^3\times\R\to \R$
be a network with $2$ hidden layers $f_i:\R^{256}\to \R^{256}$.
We train $f_\theta$ to approximate a solution of the mean curvature equation subject to
$f_\theta =g_\phi$ on $\R^3\times \{0\}$.

Here we use the scheme of Sec~\ref{s-network-initialization} to define $\theta$ in terms of $\phi$ such that $f_\theta(p,t)=g_\phi(p)$. We compare it with the standard initialization of sinusoidal MLPs~\cite{sitzmann2020implicit}.
\begin{figure}[ht]
    \centering
        \includegraphics[width=\columnwidth]{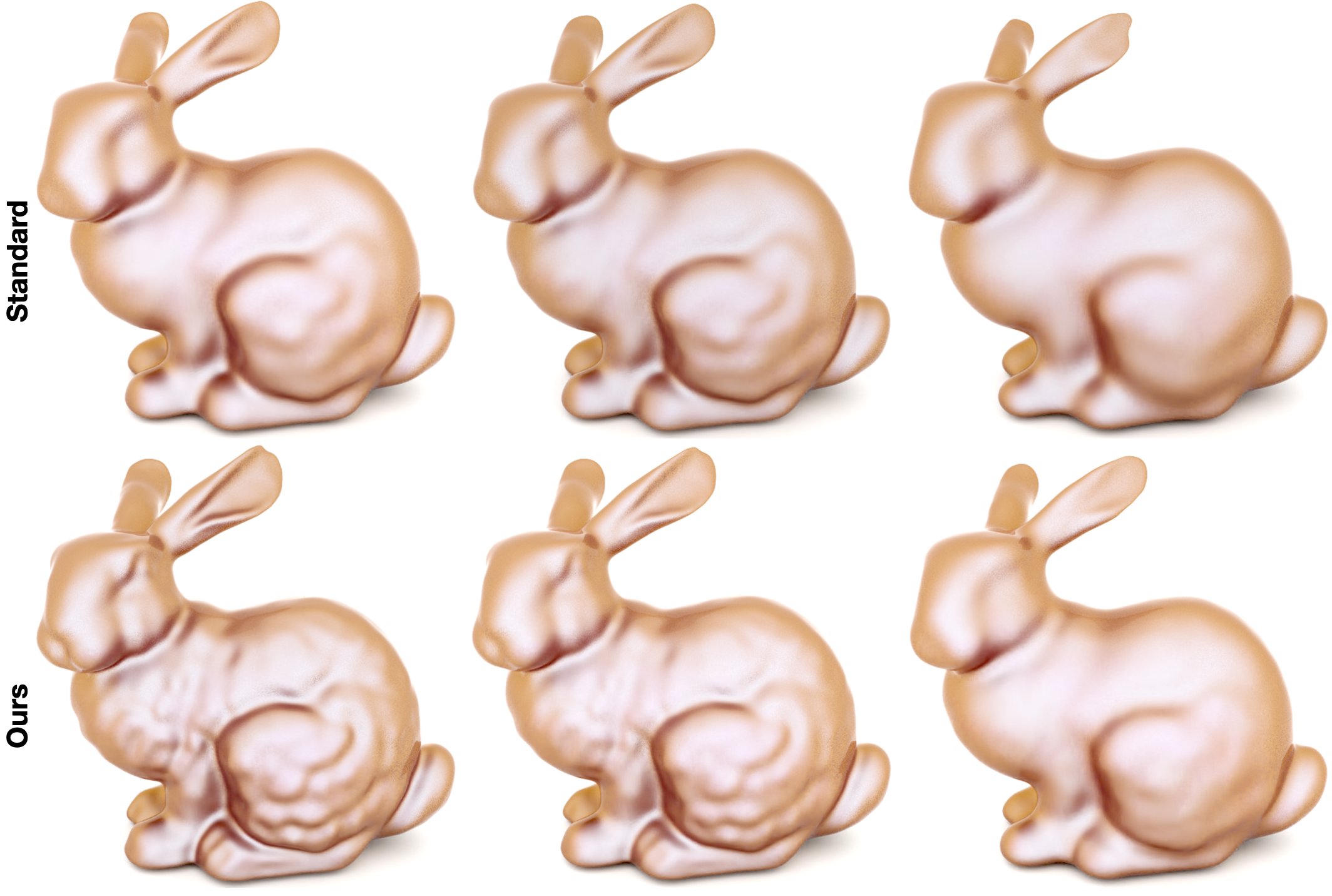}
        \caption{Network initialization comparison. Line 1 shows the results of using the initialization is given in~\cite{sitzmann2020implicit} and Line 2 considers the proposed initialization. Notice how the new network initialization results in a model that can represent higher frequencies, as shown by the increased surface details in Line 2.}
    \label{f-initialization}
\end{figure}

Fig~\ref{f-initialization} gives qualitative comparisons between the initializations, after training $f_\theta$ during $500$ epochs.
We empirically observed that our approach speeds up learning.
For example, Line~1 shows the bunnies at $t=-0.2,0.0,0.6$.
Line~2 gives the analogous results using our initialization, which results in faster convergence. Note the preservation of surface details using the proposed initialization at $t=-0.2, 0.0$ (Line~2), compared to the standard initialization in Line~1.
Fig~\ref{f-initialization_loss} shows the plots of the constraints considering $1500$ epochs. Note that, using our scheme, the training starts closer to a minimum of the loss function.
\begin{figure}[ht]
    \centering
        \includegraphics[width=0.95\columnwidth]{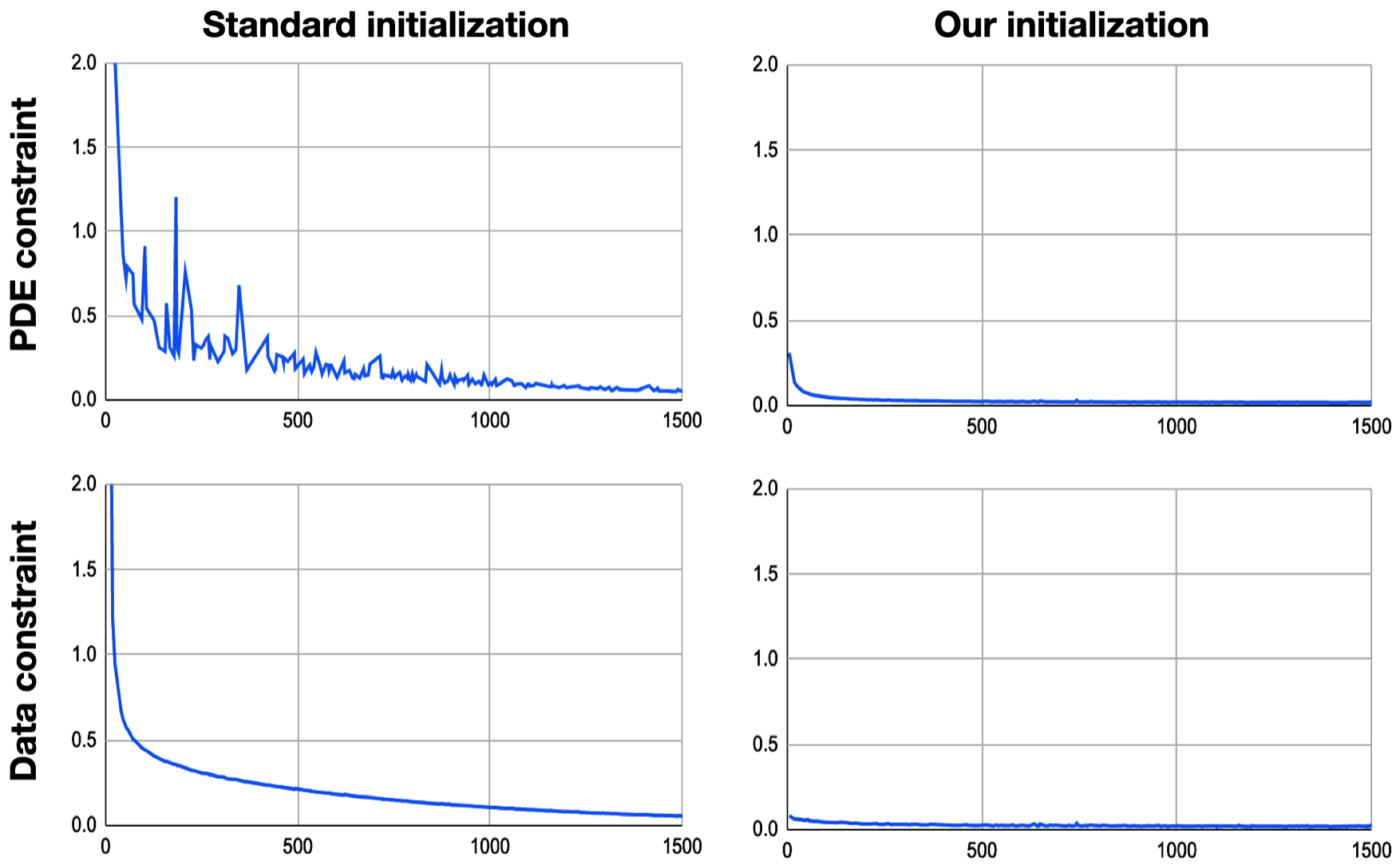}
        \caption{Loss function comparisons. Column 1 shows the plots of the LSE/data constraint using the standard initialization. Column 2 shows the corresponding plots applying our initialization~scheme. The horizontal axis represents the number of epochs.}
    \label{f-initialization_loss}
\end{figure}

\section{Conclusions and Future Work}
We introduced a framework to explore the differentiable properties of smooth networks in the problem of evolving level sets of neural SDFs. For this, we extended their domain to space-time and directly operates on \textit{primitive neural implicits}, which opens up possibilities to control geometric animation and modeling using LSEs, enabling the modeling of multiple surface evolutions in a single method.

The method allows for evolving neural implicit surfaces under LSEs without the use of additional data, only the initial conditions are used.
Note that other methods compute an approximation of the solution using numerical simulations and then fit it into the neural network. However, our framework was able to learn the animation only considering the LSE constraint.
This is powerful because the models are expressed in compact LSEs that are used to define constraints. This approach enables learning the corresponding animations \textit{without any supervision}.
Most techniques in geometry processing use differential equations to model various kinds of phenomena which, in general, are written in terms of their energy formulation.

The resulting networks are smooth approximations of LSE solutions. Traditional numerical solutions are discrete, making non-trivial the task of introducing more conditions. However, this is a quite simple task in our method.
We believe that the development of such methods in graphics would enable the community to use the robustness of classical continuous theories without the need for discretizations.


\section*{Acknowledgments}

\vspace{-0.2cm}

We are grateful to the reviewers for their detailed comments. 
We thank the Stanford Computer Graphics Laboratory for the Bunny and Armadillo models, and Keenan Crane for the Spot and Bob models. The authors thank CNPQ and FAPERJ for financial~support.

{\small
\bibliographystyle{ieee_fullname}
\bibliography{egbib}

\begin{thebibliography}{10}\itemsep=-1pt

\bibitem{bellettini2014lecture}
Giovanni Bellettini.
\newblock {\em Lecture notes on mean curvature flow: barriers and singular
  perturbations}, volume~12.
\newblock Springer, 2014.

\bibitem{bertalmio2001variational}
Marcelo Bertalmio, Guillermo Sapiro, Li-Tien Cheng, and Stanley Osher.
\newblock Variational problems and pdes on implicit surfaces.
\newblock In {\em Proceedings IEEE Workshop on Variational and Level Set
  Methods in Computer Vision}, pages 186--193. IEEE, 2001.

\bibitem{ls_meta_01}
D.E. Breen and R.T. Whitaker.
\newblock A level-set approach for the metamorphosis of solid models.
\newblock {\em IEEE Transactions on Visualization and Computer Graphics},
  7(2):173--192, 2001.

\bibitem{desbrun_deformable}
M. Cani-Gascuel and M. Desbrun.
\newblock Animation of deformable models using implicit surfaces.
\newblock {\em IEEE Transactions on Visualization and Computer Graphics},
  3(1):39--50, 1997.

\bibitem{clarenz2000anisotropic}
Ulrich Clarenz, Udo Diewald, and Martin Rumpf.
\newblock {\em Anisotropic geometric diffusion in surface processing}.
\newblock IEEE, 2000.

\bibitem{mean_curv}
{Tobias Holck} Colding, {William P.} {Minicozzi II}, and {Erik Kj{\ae}r}
  Pedersen.
\newblock Mean curvature flow.
\newblock {\em Bulletin of the American Mathematical Society}, 52(2):297--333,
  2015.

\bibitem{Crane_DDG_2013}
Keenan Crane, Fernando de Goes, Mathieu Desbrun, and Peter Schr\"{o}der.
\newblock Digital geometry processing with discrete exterior calculus.
\newblock In {\em ACM SIGGRAPH 2013 courses}, SIGGRAPH '13, New York, NY, USA,
  2013. ACM.

\bibitem{cuomo2022scientific}
Salvatore Cuomo, Vincenzo~Schiano Di~Cola, Fabio Giampaolo, Gianluigi Rozza,
  Maziar Raissi, and Francesco Piccialli.
\newblock Scientific machine learning through physics--informed neural
  networks: where we are and what’s next.
\newblock {\em Journal of Scientific Computing}, 92(3):88, 2022.

\bibitem{cybenko1989approximation}
George Cybenko.
\newblock Approximation by superpositions of a sigmoidal function.
\newblock {\em Mathematics of control, signals and systems}, 2(4):303--314,
  1989.

\bibitem{silva21mipplicits}
Vin{\'{\i}}cius da Silva, Tiago Novello, Guilherme~G. Schardong, Luiz Schirmer,
  H{\'{e}}lio Lopes, and Luiz Velho.
\newblock Neural implicit mapping via nested neighborhoods.
\newblock {\em arXiv}, abs/2201.09147, 2022.

\bibitem{DeGoes_DDG_2020}
Fernando De~Goes, Andrew Butts, and Mathieu Desbrun.
\newblock Discrete differential operators on polygonal meshes.
\newblock {\em ACM Trans. Graph.}, 39(4), July 2020.

\bibitem{DeGoes_EC_2016}
Fernando de Goes, Mathieu Desbrun, Mark Meyer, and Tony DeRose.
\newblock Subdivision exterior calculus for geometry processing.
\newblock {\em ACM Trans. Graph.}, 35(4), July 2016.

\bibitem{desbrun1998active}
Mathieu Desbrun and Marie-Paule Cani-Gascuel.
\newblock Active implicit surface for animation.
\newblock In {\em Proceedings of the Graphics Interface 1998 Conference, June
  18-20, 1998, Vancouver, BC, Canada}, pages 143--150, June 1998.

\bibitem{desbrun_soft}
Mathieu Desbrun and Marie-Paule Gascuel.
\newblock Animating soft substances with implicit surfaces.
\newblock In {\em Proceedings of the 22nd Annual Conference on Computer
  Graphics and Interactive Techniques}, SIGGRAPH '95, page 287–290, New York,
  NY, USA, 1995. Association for Computing Machinery.

\bibitem{desbrun1999implicit}
Mathieu Desbrun, Mark Meyer, Peter Schr{\"o}der, and Alan~H Barr.
\newblock Implicit fairing of irregular meshes using diffusion and curvature
  flow.
\newblock In {\em Proceedings of the 26th annual conference on Computer
  graphics and interactive techniques}, pages 317--324, 1999.

\bibitem{raissi2017physics}
Raissi et al.
\newblock Physics informed deep learning (part {I}): Data-driven solutions of
  nonlinear partial differential equations.
\newblock {\em arXiv}, 2017.

\bibitem{warp_morph_book}
Jonas Gomes, Lucia Darsa, Bruno Costa, and Luiz Velho.
\newblock {\em Warping and Morphing of Graphical Objects}.
\newblock Morgan Kaufmann Publishers, 1998.

\bibitem{gropp2020implicit}
Amos Gropp, Lior Yariv, Niv Haim, Matan Atzmon, and Yaron Lipman.
\newblock Implicit geometric regularization for learning shapes.
\newblock {\em arXiv preprint arXiv:2002.10099}, 2020.

\bibitem{hart1998morse}
John~C Hart.
\newblock Morse theory for implicit surface modeling.
\newblock In {\em Mathematical Visualization}, pages 257--268. Springer, 1998.

\bibitem{karniadakis2021physics}
George~Em Karniadakis, Ioannis~G Kevrekidis, Lu Lu, Paris Perdikaris, Sifan
  Wang, and Liu Yang.
\newblock Physics-informed machine learning.
\newblock {\em Nature Reviews Physics}, 3(6):422--440, 2021.

\bibitem{kimmel1998computing}
Ron Kimmel and James~A Sethian.
\newblock Computing geodesic paths on manifolds.
\newblock {\em Proceedings of the national academy of Sciences},
  95(15):8431--8435, 1998.

\bibitem{lindell2021bacon}
David~B Lindell, Dave Van~Veen, Jeong~Joon Park, and Gordon Wetzstein.
\newblock Bacon: Band-limited coordinate networks for multiscale scene
  representation.
\newblock In {\em Proceedings of the IEEE/CVF conference on computer vision and
  pattern recognition}, pages 16252--16262, 2022.

\bibitem{liu2022learning}
Hsueh-Ti~Derek Liu, Francis Williams, Alec Jacobson, Sanja Fidler, and Or
  Litany.
\newblock Learning smooth neural functions via lipschitz regularization.
\newblock In {\em ACM SIGGRAPH 2022 Conference Proceedings}, SIGGRAPH '22.
  Association for Computing Machinery, 2022.

\bibitem{malladi1995shape}
Ravi Malladi, James~A Sethian, and Baba~C Vemuri.
\newblock Shape modeling with front propagation: A level set approach.
\newblock {\em IEEE transactions on pattern analysis and machine intelligence},
  17(2):158--175, 1995.

\bibitem{mehta2022level}
Ishit Mehta, Manmohan Chandraker, and Ravi Ramamoorthi.
\newblock A level set theory for neural implicit evolution under explicit
  flows.
\newblock {\em arXiv preprint arXiv:2204.07159}, 2022.

\bibitem{ISNN_2019}
Mateusz Michalkiewicz, Jhony~Kaesemodel Pontes, Dominic Jack, Mahsa
  Baktashmotlagh, and Anders Eriksson.
\newblock Implicit surface representations as layers in neural networks.
\newblock In {\em 2019 IEEE/CVF International Conference on Computer Vision
  (ICCV)}, pages 4742--4751, 2019.

\bibitem{novello21diff}
Tiago Novello, Guilherme Schardong, Luiz Schirmer, Vinícius {da Silva}, Hélio
  Lopes, and Luiz Velho.
\newblock Exploring differential geometry in neural implicits.
\newblock {\em Computers \& Graphics}, 108, 2022.

\bibitem{osher2004level}
Stanley Osher, Ronald Fedkiw, and K Piechor.
\newblock Level set methods and dynamic implicit surfaces.
\newblock {\em Appl. Mech. Rev.}, 57(3):B15--B15, 2004.

\bibitem{osher1988fronts}
Stanley Osher and James~A Sethian.
\newblock Fronts propagating with curvature-dependent speed: Algorithms based
  on hamilton-jacobi formulations.
\newblock {\em Journal of computational physics}, 79(1):12--49, 1988.

\bibitem{Park_2019_CVPR}
Jeong~Joon Park, Peter Florence, Julian Straub, Richard Newcombe, and Steven
  Lovegrove.
\newblock Deepsdf: Learning continuous signed distance functions for shape
  representation.
\newblock In {\em The IEEE Conference on Computer Vision and Pattern
  Recognition (CVPR)}, June 2019.

\bibitem{paz2023mr}
Hallison Paz, Daniel Perazzo, Tiago Novello, Guilherme Schardong, Luiz
  Schirmer, Vinicius da Silva, Daniel Yukimura, Fabio Chagas, Helio Lopes, and
  Luiz Velho.
\newblock Mr-net: Multiresolution sinusoidal neural networks.
\newblock {\em Computers \& Graphics}, 2023.

\bibitem{sethian1999level}
James~Albert Sethian.
\newblock {\em Level set methods and fast marching methods: evolving interfaces
  in computational geometry, fluid mechanics, computer vision, and materials
  science}, volume~3.
\newblock Cambridge university press, 1999.

\bibitem{sitzmann2020implicit}
Vincent Sitzmann, Julien Martel, Alexander Bergman, David Lindell, and Gordon
  Wetzstein.
\newblock Implicit neural representations with periodic activation functions.
\newblock {\em Advances in Neural Information Processing Systems}, 33, 2020.

\bibitem{turk_99_shape_xform}
Greg Turk and James~F. O'Brien.
\newblock Shape transformation using variational implicit functions.
\newblock In {\em Proceedings of the 26th Annual Conference on Computer
  Graphics and Interactive Techniques}, SIGGRAPH '99, page 335–342, USA,
  1999. ACM Press/Addison-Wesley Publishing Co.

\bibitem{wardetzky2007discrete}
Max Wardetzky, Saurabh Mathur, Felix K{\"a}lberer, and Eitan Grinspun.
\newblock Discrete laplace operators: no free lunch.
\newblock In {\em Symposium on Geometry processing}, pages 33--37.
  Aire-la-Ville, Switzerland, 2007.

\bibitem{whitaker1995algorithms}
Ross~T Whitaker.
\newblock Algorithms for implicit deformable models.
\newblock In {\em Proceedings of IEEE International Conference on Computer
  Vision}, pages 822--827. IEEE, 1995.

\bibitem{yang2021geometry}
Guandao Yang, Serge Belongie, Bharath Hariharan, and Vladlen Koltun.
\newblock Geometry processing with neural fields.
\newblock {\em Advances in Neural Information Processing Systems}, 34, 2021.

\bibitem{Thingi10K}
Qingnan Zhou and Alec Jacobson.
\newblock Thingi10k: A dataset of 10,000 3d-printing models.
\newblock {\em arXiv preprint arXiv:1605.04797}, 2016.

\end{thebibliography}


\begin{thebibliography}{1}\itemsep=-1pt

\bibitem{manfredo2016differential}
Manfredo~P Do~Carmo.
\newblock {\em Differential geometry of curves and surfaces: revised and
  updated second edition}.
\newblock Courier Dover Publications, 2016.

\bibitem{liu2022learning}
Hsueh-Ti~Derek Liu, Francis Williams, Alec Jacobson, Sanja Fidler, and Or
  Litany.
\newblock Learning smooth neural functions via lipschitz regularization.
\newblock In {\em ACM SIGGRAPH 2022 Conference Proceedings}, SIGGRAPH '22.
  Association for Computing Machinery, 2022.

\bibitem{martin2014introduction}
Francisco Mart{\'i}n and Jes{\'u}s P{\'e}rez.
\newblock An introduction to the mean curvature flow.
\newblock In {\em XXIII International Fall Workshop on Geometry and Physics,
  held in Granada. https://www. ugr. es/jpgarcia/investigacion. html}, 2014.

\bibitem{sitzmann2020implicit}
Vincent Sitzmann, Julien Martel, Alexander Bergman, David Lindell, and Gordon
  Wetzstein.
\newblock Implicit neural representations with periodic activation functions.
\newblock {\em Advances in Neural Information Processing Systems}, 33, 2020.

\bibitem{wang2021computing}
Stephanie Wang and Albert Chern.
\newblock Computing minimal surfaces with differential forms.
\newblock {\em ACM Transactions on Graphics (TOG)}, 40(4):1--14, 2021.

\bibitem{Thingi10K}
Qingnan Zhou and Alec Jacobson.
\newblock Thingi10k: A dataset of 10,000 3d-printing models.
\newblock {\em arXiv preprint arXiv:1605.04797}, 2016.

\end{thebibliography}
}

\end{document}


\title{
Neural Implicit Surface Evolution~\\
-- Supplementary Material --
}

\author{
\normalsize Tiago Novello\\
\small IMPA
\and
\normalsize Vinicius da Silva\\
\small PUC-Rio
\and
\normalsize Guilherme Schardong\\
\small U Coimbra
\and
\normalsize Luiz Schirmer\\
\small Unisinos
\and
\normalsize Helio Lopes\\
\small PUC-Rio
\and
\normalsize Luiz Velho\\
\small IMPA
}

\maketitle
\ificcvfinal\thispagestyle{empty}\fi


\section{Minimal surfaces}
\label{a-minimal-surfaces}
The evolution of a surface $S$ governed by the \textit{mean curvature equation} (MCE) leads to a family of surfaces that reduce their area over time. Let $f\!:\!\mathbb{R}^3\!\times\! \mathbb{R} \!\to\! \mathbb{R}$ be a solution of MCE and $S_t$ be its corresponding surface evolution.
\begin{align}\label{e-mean_curvature_equation}
\begin{cases}
\displaystyle\frac{\partial f}{\partial t} -\alpha\norm{\nabla f}\,\kappa =0 & \text{ in } \R^3\times (a,b),\\
f =g & \text{ on } \R^3\times \{t=0\}.
\end{cases}
\end{align}
The area of $S_t$ can be measured using
$\text{Area}(S_t)=\int_{S_t} dS_t$, where $dS_t$ is the area form of $S_t$.
It can be proved that the \textit{first variation of area} of the family $S_t$ is given by
\begin{align}
    \frac{d}{dt}\text{Area}(S_t)\Bigr|_{t=0}=-\int\limits_{S_0} \kappa^2dS_0,
\end{align}
The proof can be found in \cite[Sec.~3.5]{manfredo2016differential}, \cite[Cor. 6.2]{martin2014introduction}.
Thus, if the mean curvature satisfies $\kappa\neq 0$, the area of $S_t$ initially decreases because its derivatives are negative at $t=0$.

A surface $S_{t_0}$ is \textit{critical} if $\frac{d}{dt}\text{Area}(S_{t})\Bigr|_{t=t_0}\!\!\!\!\!=0$, that is, if $\kappa$ is constant equal to zero. This surface is called \textit{minimal}. Examples of minimal surfaces include the plane, catenoids, helicoids, Enneper surface, Costa's minimal surfaces, etc.

We can fix a region of the initial surface $S$ in the MCE. If $S$ has a boundary curve, fixing it during the evolution leads to a surface of minimal area. This problem is related to the physical shapes of soap films at equilibrium under the surface
tension~\cite{wang2021computing}.

\section{\large Network initialization}
\label{s-network-initialization_appendix}

\begin{proposition}\label{p-initialization}
Let $g_\phi:\R^3\to \R$ and $f_\theta:\R^3\times \R\to \R$ be networks with depth $d$. If $f_\theta$ is wider than $g_\phi$, we can define $\theta$ in terms of $\phi$ such that $f_\theta(p,t)=g_\phi(p)$ for all~$(p,t)$.
\end{proposition}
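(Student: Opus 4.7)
The plan is to proceed by induction on the layer index $i$, maintaining the invariant that the output of the $i$-th hidden layer of $f_\theta$ on input $(p,t)$ is the block vector $\begin{psmallmatrix} (g_i \circ \cdots \circ g_1)(p) \\ 0 \end{psmallmatrix} \in \R^{M_{i+1}}$. The block structure prescribed for $A_i$ and $a_i$ in Section~\ref{s-network-initialization} is tailored so that the top block carries out the computation of $g_\phi$ in the first $N_{i+1}$ coordinates and the bottom block stays identically zero, with $t$ never contributing after the first layer. A final choice of output weights then collapses the padded activation back to $g_\phi(p)$.

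For the base case I would evaluate $f_1(p,t) = \sin(A_1 (p,t)^\top + a_1)$ directly. Since $g_\phi$ takes $p \in \R^3$ while $f_\theta$ takes $(p,t) \in \R^4$, the top block of $A_1$ is $(B_1 \mid 0)$, which sends $(p,t)$ to $B_1 p$; adding the top bias $b_1$ and applying $\sin$ coordinatewise yields $g_1(p)$ in the first $N_2$ entries. The bottom block of $A_1$ and its bias are both zero, so $\sin$ of zero gives zero on the remaining coordinates. This establishes the invariant at $i=1$ and, crucially, removes the dependence on $t$ once and for all.

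For the inductive step, suppose the invariant holds at level $i-1$, and write $h = (g_{i-1} \circ \cdots \circ g_1)(p)$. Applying $A_i = \begin{psmallmatrix} B_i & 0 \\ 0 & I \end{psmallmatrix}$ to $\begin{psmallmatrix} h \\ 0 \end{psmallmatrix}$ yields $\begin{psmallmatrix} B_i h \\ 0 \end{psmallmatrix}$; adding $a_i = \begin{psmallmatrix} b_i \\ 0 \end{psmallmatrix}$ and applying $\sin$ coordinatewise produces $\begin{psmallmatrix} g_i(h) \\ 0 \end{psmallmatrix}$, using $\sin(0) = 0$ on the bottom block. This closes the induction up to $i = d$. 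To finish, define the output layer $A_{d+1} = (B_{d+1} \mid 0)$ and $a_{d+1} = b_{d+1}$ (borrowed from $g_\phi$ with a zero pad); evaluating on the $d$-th activation recovers $B_{d+1}(g_d \circ \cdots \circ g_1)(p) + b_{d+1} = g_\phi(p)$, independent of $t$, as required.

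The only real obstacle is the dimensional mismatch between $\R^3$ and $\R^3 \times \R$ at the input and between the widths $N_{i+1}$ and $M_{i+1}$ at every intermediate layer, both of which are resolved by zero padding. Otherwise the argument is a mechanical verification that the block structure propagates through each hidden layer precisely because the sinusoidal activation vanishes at $0$; this is what makes a SIREN architecture particularly amenable to lifting a trained spatial network into a space-time network in a way that preserves the initial condition exactly.
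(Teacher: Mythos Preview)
Your proof is correct and follows essentially the same approach as the paper: use the block matrices $A_1=\begin{psmallmatrix}B_1 & 0\\0 & 0\end{psmallmatrix}$, $A_i=\begin{psmallmatrix}B_i & 0\\0 & I\end{psmallmatrix}$ for $i\ge 2$, biases $a_i=\begin{psmallmatrix}b_i\\0\end{psmallmatrix}$, and exploit $\sin(0)=0$ to keep the padded coordinates zero throughout. Your write-up is in fact slightly more complete than the paper's, since you explicitly specify the output layer $A_{d+1}=(B_{d+1}\mid 0)$, $a_{d+1}=b_{d+1}$ and frame the layerwise computation as a clean induction rather than a one-line check.
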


\begin{proof}
Recall that $
    g_\phi(p)=B_{d+1}\circ g_{d}\circ \cdots \circ g_{1}(p)+b_{d+1}$,
where $g_{i}(p_i)\!=\!\sin (B_i p_i \!+\! b_i)$ is the $i$-layer, $B_i$ is a matrix in $\R^{N_{i+1}\times {N_i}}$,
and $b_i\in\R^{N_{i+1}}$ is the $i$-bias. Analogously, $f_\theta(p,t)=A_{d+1}\circ f_{d}\circ \cdots \circ f_{1}(p,t)+a_{d+1}$,
with its $i$ layer $f_i:\R^{M_i}\to~\R^{M_{i+1}}$ given by $f_{i}(p_i)=\sin (A_i p_i + a_i)$.

By hypothesis, $f_\theta$, and $g_\phi$ have the same depth $d$, and the width of each layer of $g_\phi$ is less than or equal to the width of the respective layer of $f_\theta$, i.e., $N_i\leq M_i$. Thus, we define the hidden layers of $f_\theta$ using
$A_i=\begin{psmallmatrix}B_i & 0\\0 & 0\end{psmallmatrix}$, $a_i=\begin{psmallmatrix}b_i \\0\end{psmallmatrix}$.
Evaluating $(p,c)\in \R^{N_i}\times \R^{M_i-N_i}$ in $f_i$ results~in
$$
f_i(p,c)= \sin  \left(\begin{psmallmatrix}B_i & 0\\0 & 0\end{psmallmatrix}\begin{psmallmatrix}p \\c\end{psmallmatrix}+\begin{psmallmatrix}b_i \\0\end{psmallmatrix}\right)
=
\begin{psmallmatrix}g_i(p) \\0\end{psmallmatrix}.
$$
Thus, defining $A_1=\begin{psmallmatrix}B_1 & 0\\F_p & F_t\end{psmallmatrix}$, $a_1=\begin{psmallmatrix}b_1 \\0\end{psmallmatrix}$, we obtain the desired result because
$$
f_1(p,c)= \sin  \left(\begin{psmallmatrix}B_1 & 0\\F_p & F_t\end{psmallmatrix}\begin{psmallmatrix}p \\t\end{psmallmatrix}+\begin{psmallmatrix}b_i \\0\end{psmallmatrix}\right)
=
\begin{psmallmatrix}g_1(p) \\F_p c+F_t t\end{psmallmatrix}.
$$
In other words, the neurons $g_i(p)$ of the network $g_\phi$ remain intact along the layers.
\end{proof}

The blocks $F_p$ and $F_t$ project the entry points $(p,t)$ to a dictionary of sine waves, which are not considered in the following layers because they are fed to zero blocks.
However, new hidden weights can activate such features as the training advances. In Sec \ref{s-varying-width}, we present experiments varying the width of $f_\theta$ to explore such initialization in the problem of solving the MCE.

\vspace{0.1cm}

To reproduce Prop~\ref{p-initialization} with $f_\theta$ deeper than $g_\phi$, we must be able to add a hidden layer $f(p_i)=\sin(Ap+a)$ to $f_\theta$ which do not exist in $g_\phi$. Thus, it would be desirable to initialize $f$ as an identity layer. Following the above approach, we could define $A=I$ and $a=0$ obtaining $f(p)=\sin(p)$, however, in general, $\sin(p)\neq p$.
This can be fixed using that $\sin(p)\approx p$ when $\norm{p}$ is close to zero.
Therefore, we define $A=\lambda I$, with $\lambda$ being a small number, and multiply the resulting output of $f$ by $\frac{1}{\lambda}$ to keep it close to $p$.

\section{Extracting a network at a given time instant}
\label{a-extraction}
Here, for a given time instant $t$, we extract the network $g_\phi\!=\!f(\cdot, t)\!:\!\R^3\!\to\! \R$ from a neural network $f_\theta\!:\!\R^3\!\times \!\R\!\to\! \R$. Suppose $f_\theta(p,t)=A_{d+1}\circ f_{d}\circ \cdots \circ f_{1}(p,t)+a_{d+1}$,
with each $i$-layer defined by $f_{i}(p_i)=\sin(A_ip_i + a_i)$.

To define $g_\phi$ such that $g_\phi(p)\!=\!f_\theta(p, t)$ for all $(p,t)$, we modify the first layer $f_1(p, t)\!=\!\sin(A_1(p,t) \!+\! a_1)$ of~$f_\theta$. 
\pagebreak

\noindent The~matrix $A_1$ has $4$ column vectors $\{w_1,w_2,w_3,u\}$ in $\R^{M_2}$, where ${M_2}$ is the dimension of the codomain of $f_1$. Denoting $p$ by $(x,y,z)$, we obtain
$$A_1(p,t)=x\cdot w_1+y\cdot w_2+z\cdot w_3+ t\cdot u,$$
We use the matrix $B_1$ consisting of the columns $w_1,w_2,w_3$, and the bias $b_1=a_1+t\cdot u$ to set the first layer $g_1$ of $g_\phi$.
Specifically, we define $ g_\phi $ through:
$$
    g_\phi(p)=A_{d+1}\circ f_{d}\circ \cdots \circ f_2 \circ g_{1}(p)+a_{d+1}.
$$
Note that $g_\phi$ equals $f_\theta$, except for its first layer $f_{1}(p,t)$, which is  replaced by $g_1(p)$. We define it as
\begin{align*}
    g_1(p)&=\sin\Big(\underbrace{x\cdot w_1+y\cdot w_2+z\cdot w_3}_{B_1p}+\underbrace{t\cdot u + a_1}_{b_1}\Big).
\end{align*}
From the definition of $g_\phi$, we have $g_\phi(p)=f_\theta(p,t)$, which implies a kind of opposite direction of Prop~\ref{p-initialization}.

\begin{proposition}
Let $f_\theta\!:\!\R^3\!\!\times \!\R\!\to\! \R$ be a neural network, and $t\!\in\!\R$. There is a network $g_\phi\!:\!\R^3\!\to\! \R$ with the same hidden layers of $f_\theta$ such that $f_\theta(p,t)\!=\!g_\phi(p)$ for all $p\!\in\!\R^3$.
\end{proposition}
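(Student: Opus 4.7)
The plan is to construct $g_\phi$ by absorbing the fixed time value $t$ into the bias of the first layer of $f_\theta$, leaving every subsequent layer untouched. Concretely, I will write the first layer in the form $f_1(p,t)=\sin(A_1(p,t)+a_1)$ where $A_1:\R^4\to\R^{M_2}$ is a linear map, and split $A_1$ into its spatial and temporal components. Once $t$ is fixed, the temporal component produces a constant vector in $\R^{M_2}$ which can simply be moved into the bias.

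First, I would decompose $A_1$ in terms of its column vectors, writing $A_1=[w_1\,|\,w_2\,|\,w_3\,|\,u]$, where $w_1,w_2,w_3\in\R^{M_2}$ correspond to the spatial coordinates of $p=(x,y,z)$ and $u\in\R^{M_2}$ corresponds to the time coordinate. Then
\begin{equation*}
A_1(p,t)+a_1=(xw_1+yw_2+zw_3)+(tu+a_1)=B_1 p + b_1,
\end{equation*}
where $B_1:=[w_1\,|\,w_2\,|\,w_3]\in\R^{M_2\times 3}$ and $b_1:=a_1+tu\in\R^{M_2}$ depend only on $p$, with $t$ frozen as a constant parameter. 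Define the new first layer $g_1:\R^3\to\R^{M_2}$ by $g_1(p):=\sin(B_1 p+b_1)$.

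Second, I would assemble $g_\phi$ by composing this modified first layer with the unchanged subsequent layers of $f_\theta$, that is,
\begin{equation*}
g_\phi(p):=A_{d+1}\circ f_d\circ\cdots\circ f_2\circ g_1(p)+a_{d+1}.
\end{equation*}
The collection $\phi$ of parameters is thus obtained from $\theta$ by replacing $(A_1,a_1)$ with $(B_1,b_1)$ and keeping $\{(A_i,a_i)\}_{i=2}^{d+1}$ identical. By construction, the hidden-layer widths and activations of $g_\phi$ agree with those of $f_\theta$, as required by the statement.

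Finally, I would verify the equality $g_\phi(p)=f_\theta(p,t)$ for every $p\in\R^3$. This reduces to the identity $g_1(p)=f_1(p,t)$, which is immediate from the split computed above, and then the remaining layers produce identical outputs because they are shared between the two networks. There is essentially no obstacle here: the argument is purely algebraic bookkeeping, the only subtlety being to confirm that the decomposition of $A_1$ into spatial columns and a time column is well-defined (which holds because $A_1$ is a linear map on $\R^3\times\R\cong\R^4$). No smoothness, approximation, or sampling considerations enter the proof.
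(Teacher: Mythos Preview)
Your proposal is correct and follows essentially the same approach as the paper: split the first-layer weight matrix $A_1$ into its three spatial columns and its time column, absorb the fixed-$t$ contribution $t\cdot u$ into a new bias $b_1=a_1+tu$, and leave all subsequent layers unchanged. The paper's construction is identical in every substantive step.
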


\section{Ablation studies}
The ablation studies detailed below were performed using the MCE (Eq \ref{e-mean_curvature_equation}) under two settings: With our initialization scheme, presented in Prop~\ref{p-initialization}, and the standard initialization~\cite{sitzmann2020implicit}. The goal is to compare the training convergence of the \textit{data constraint} $\mathcal{L_{\text{data}}}$ and the \textit{LSE constraint} $\mathcal{L_{\text{LSE}}}$ for both initialization schemes under different circumstances.

We will visualize the graphs of $\mathcal{L_{\text{data}}}$ and $\mathcal{L_{\text{LSE}}}$ during the training of a neural network $f_\theta:\R^3\times \R\to \R$ to satisfy the MCE within a time interval $(a,b)$.
In the upcoming experiments, we will use the SDF $g:\R^3\to \R$ of the Bunny as the initial condition,  $f=g$ on $\R^3\times \{0\}$. To initialize $f_\theta$ using our method, we approximate $g$ by a network $g_\phi$.

\subsection{Varying time interval}
We vary $(a,b)$ in the MCE with scale $\alpha\!=\!0.001$.
We recall that the initial condition is at $0\!\!\in\!\! (a,b)$ and on the positive (negative) part, the MCE smooths (sharpens)~it.
Thus, the positive part should be easier to train since no higher frequencies would arise. In contrast, training the negative part creates new higher frequencies, which could take longer to learn.
We evaluate it in the following intervals:
\begin{align*}
    (a,b)=& (0,0.25), (0,0.5), (0,1),\\
    &(-0.1,0.25), (-0.1,0.5), (-0.1,1),\\
    &(-0.25,0.25), (-0.25,0.5), (-0.25,1).
\end{align*}
Fig \ref{fig:data_constraint_times} and \ref{fig:lse_constraint_times} present the data and LSE constraint convergences for these intervals.
As expected, our initialization (top image) provides a better training convergence. For~$\mathcal{L_{\text{data}}}$, this is due to the fact that $f_\theta=g_\phi$ at $t=0$, thus, $\mathcal{L_{\text{data}}}$ only have to maintain this restriction.
\begin{figure}[H]
    \centering
    \includegraphics[width=\columnwidth]{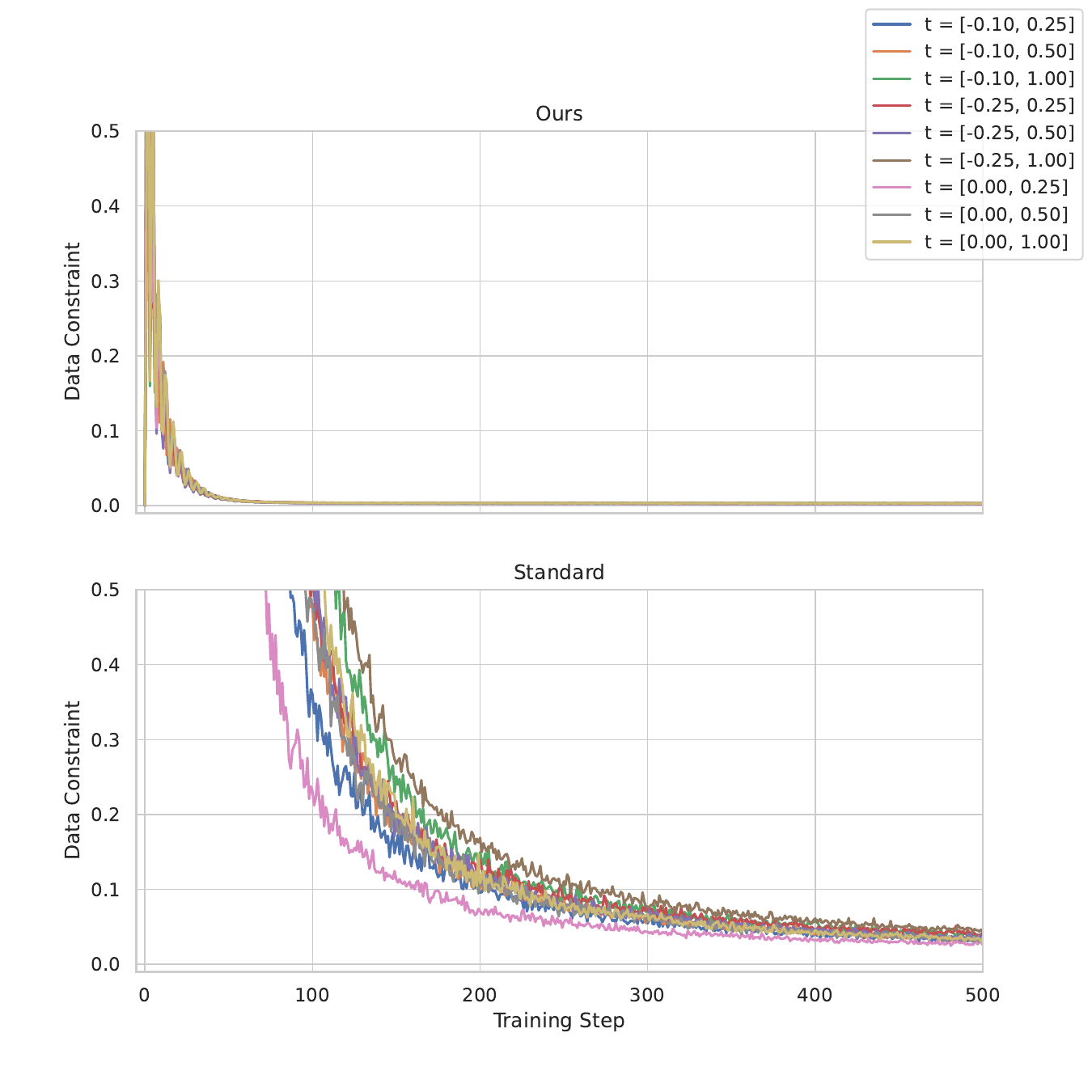}
    \vspace{-0.8cm}
    \caption{Data constraint values for different training intervals.}
    \label{fig:data_constraint_times}
\end{figure}

The convergence of the constraints $\mathcal{L}_{\text{data}}$ and $\mathcal{L}_{\text{LSE}}$ is faster for both initializations when using smaller intervals, as can be seen in the case of $(0,0.25)$ (in purple).
%
They also take longer to train on intervals with a negative part. This is likely because the solutions in such regions are sharper, requiring, thus, more frequencies for accurate representation, if a solution exists at all.
\begin{figure}[H]
    \centering
    \includegraphics[width=\columnwidth]{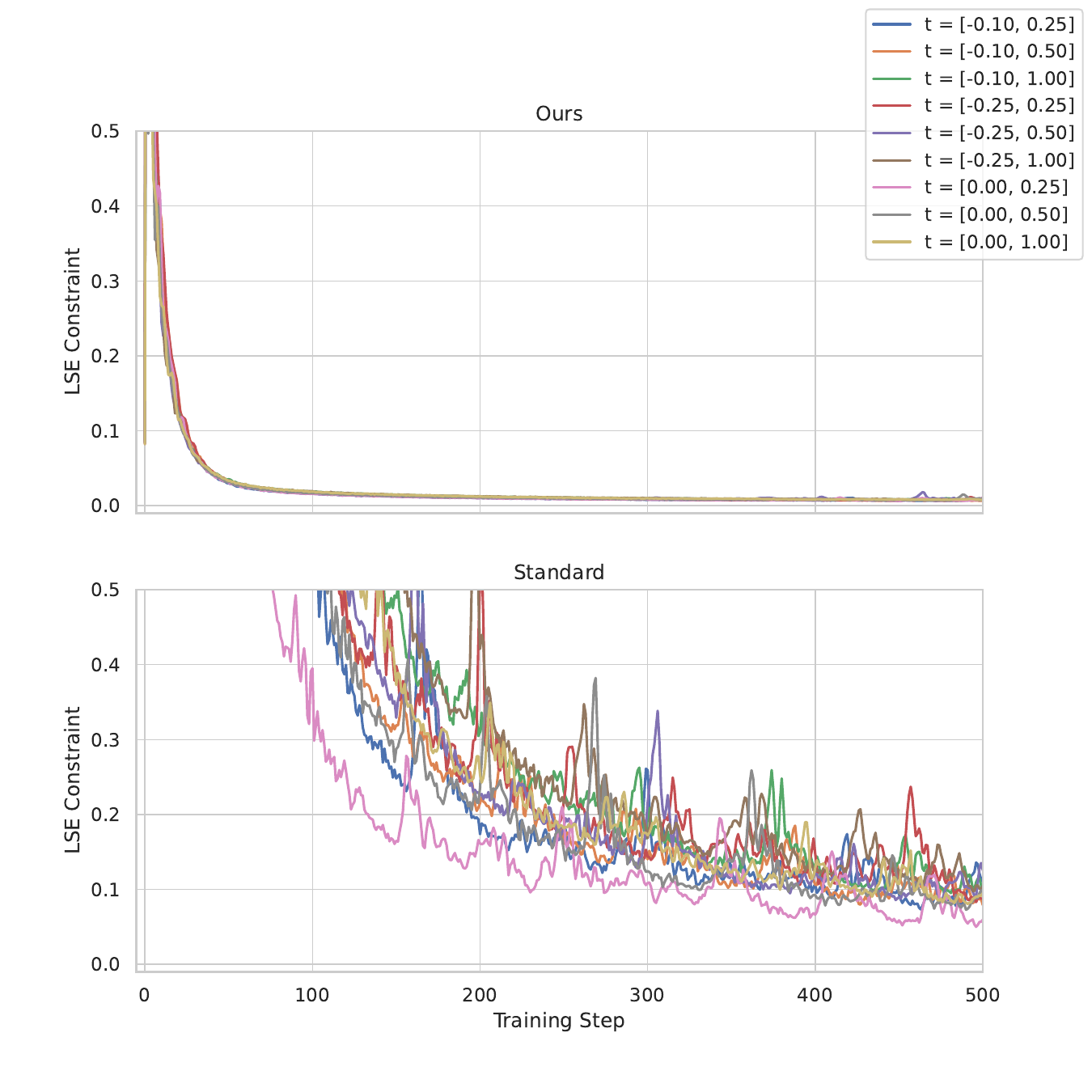}
    \vspace{-0.8cm}
    \caption{LSE constraint values for different training intervals.}
    \label{fig:lse_constraint_times}
\end{figure}

\subsection{Varying MCE scale}
We use the interval $(a,b)=(-0.1, 1)$ and vary the scales $\alpha = i \times 10^{-3}$ for $i = 1,2,3,4,5,10,100$. In theory, increasing $(a,b)$ while fixing $\alpha$ is equivalent to the previous experiment. However, in practice, the representation capacity of $f_\theta$ may not be enough to learn large variations in a short time period. This is evident in Figs~\ref{fig:data_constraint_scales}-\ref{fig:lse_constraint_scales}, where the convergence of $\mathcal{L}_{\text{data}}$ and $\mathcal{L}_{\text{LSE}}$ is sorted by $\alpha$. In general, our initialization results in a better convergence, but we observed that when using a high scale $\mathcal{L}_{\text{data}}$  diverges first, since $\mathcal{L}_{\text{LSE}}$ dominates the training. See the case $\alpha=0.1$ (in purple).
\vspace{-0.4cm}
\begin{figure}[H]
    \centering
    \includegraphics[width=\columnwidth]{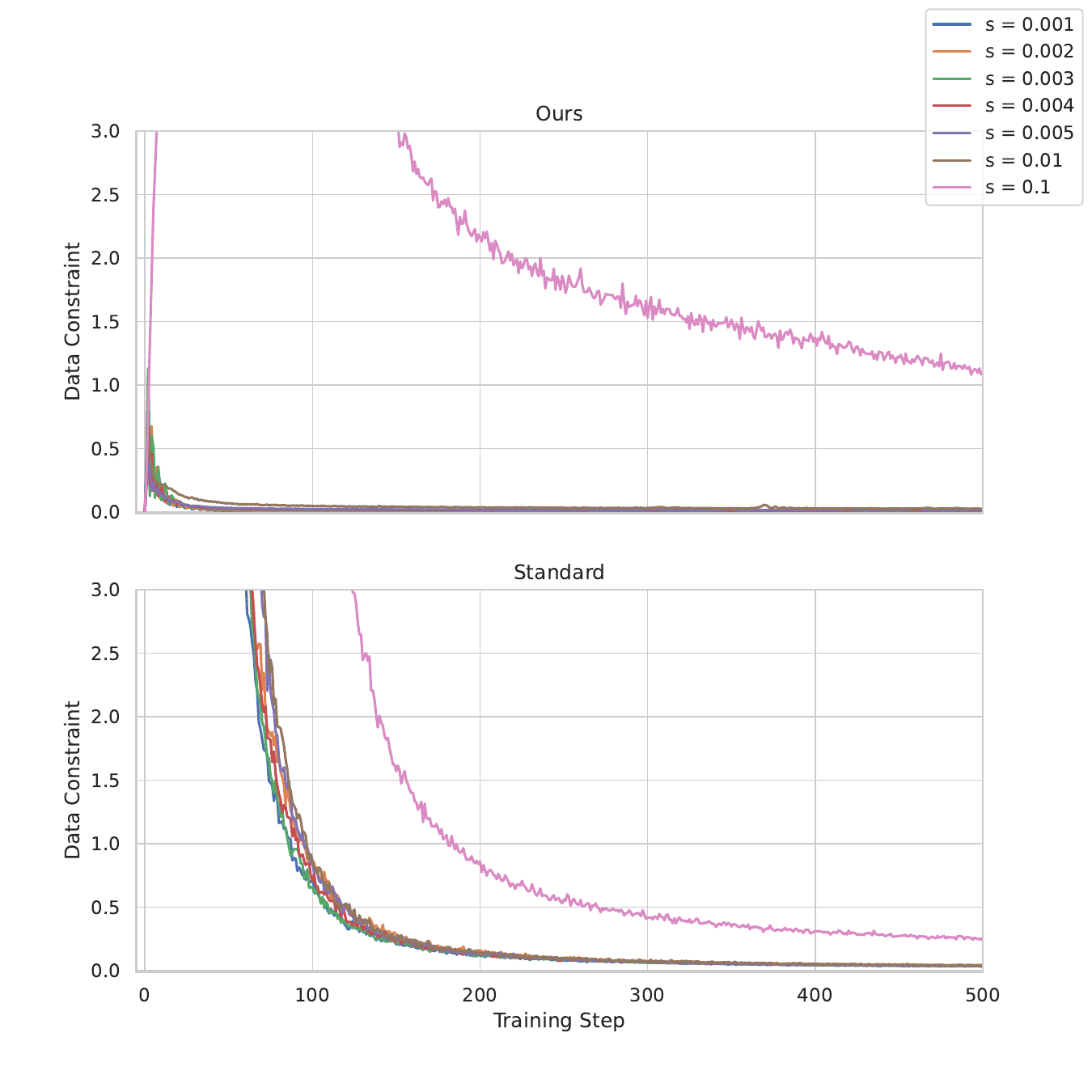}
    \vspace{-0.8cm}
    \caption{Data constraint values for different MCE scale values.}
    \label{fig:data_constraint_scales}
\end{figure}

\vspace{-0.4cm}
\begin{figure}[H]
    \centering
    \includegraphics[width=\columnwidth]{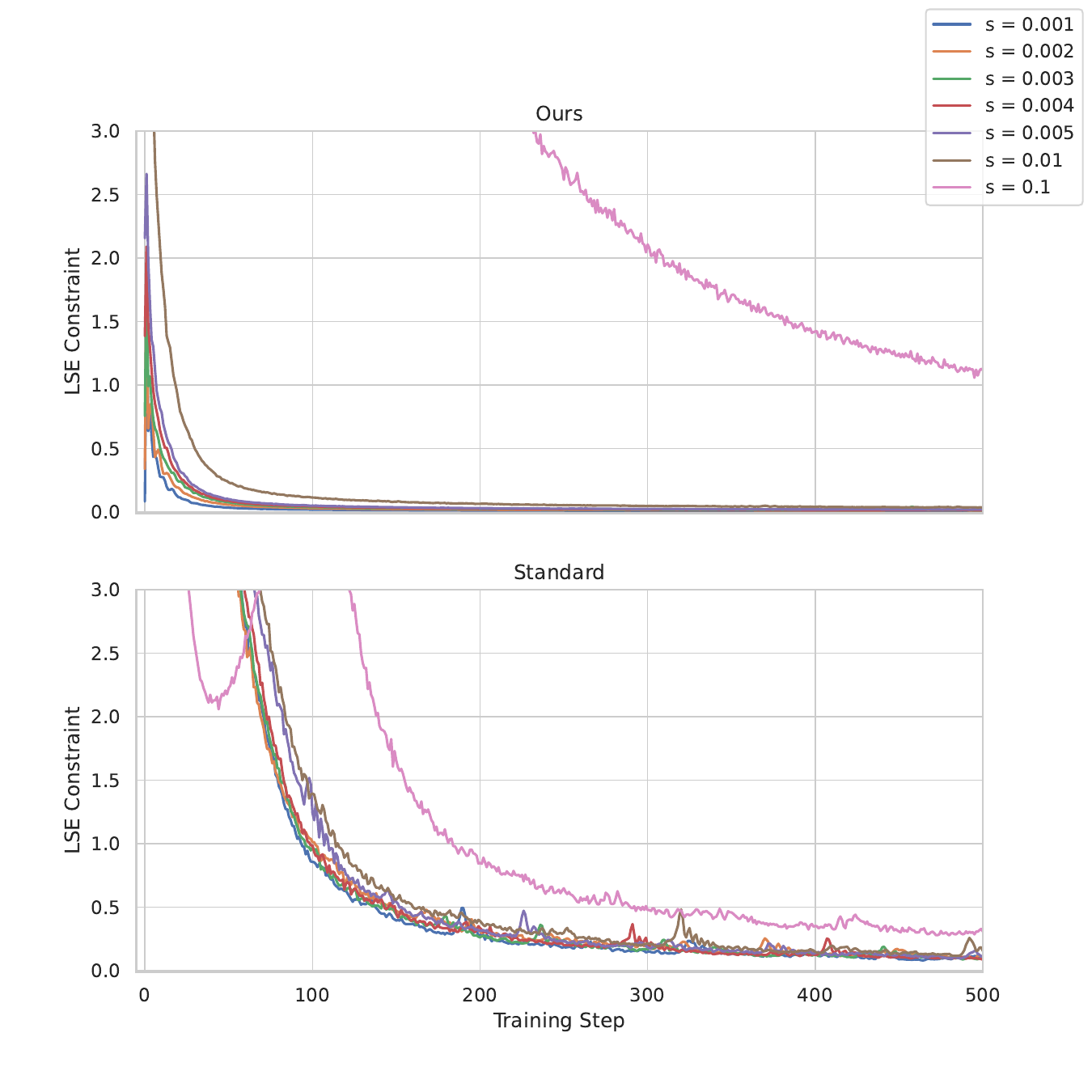}
    \vspace{-0.8cm}
    \caption{LSE constraint values for different MCE scale values.}
    \label{fig:lse_constraint_scales}
\end{figure}

\subsection{Varying the point-sampling proportions}


This experiment aimed to evaluate how the point-sampling of initial and intermediate conditions impacts the training convergence of $\mathcal{L}_{\text{data}}$ and $\mathcal{L}_{\text{LSE}}$. We used the default point-sampling proportions of $\{l_1,l_2,l_3\}=\{0.25,0.25,0.5\}$, as well as $\{0.1,0.1,0.8\}$ and $\{0.4,0.4,0.2\}$. Here, $l_1$, $l_2$, and $l_3$ are the numbers of space-time, on-surface, and off-surface points sampled at each training step (see Sec 4.2 of the main paper).

Figs \ref{fig:data_constraint_proportions}-\ref{fig:lse_constraint_proportions} present the convergences of the resulting constraints during training. It can be observed that sampling fewer points at $t=0$ results in a better convergence.

\vspace{-0.4cm}
\begin{figure}[H]
    \centering
    \includegraphics[width=\columnwidth]{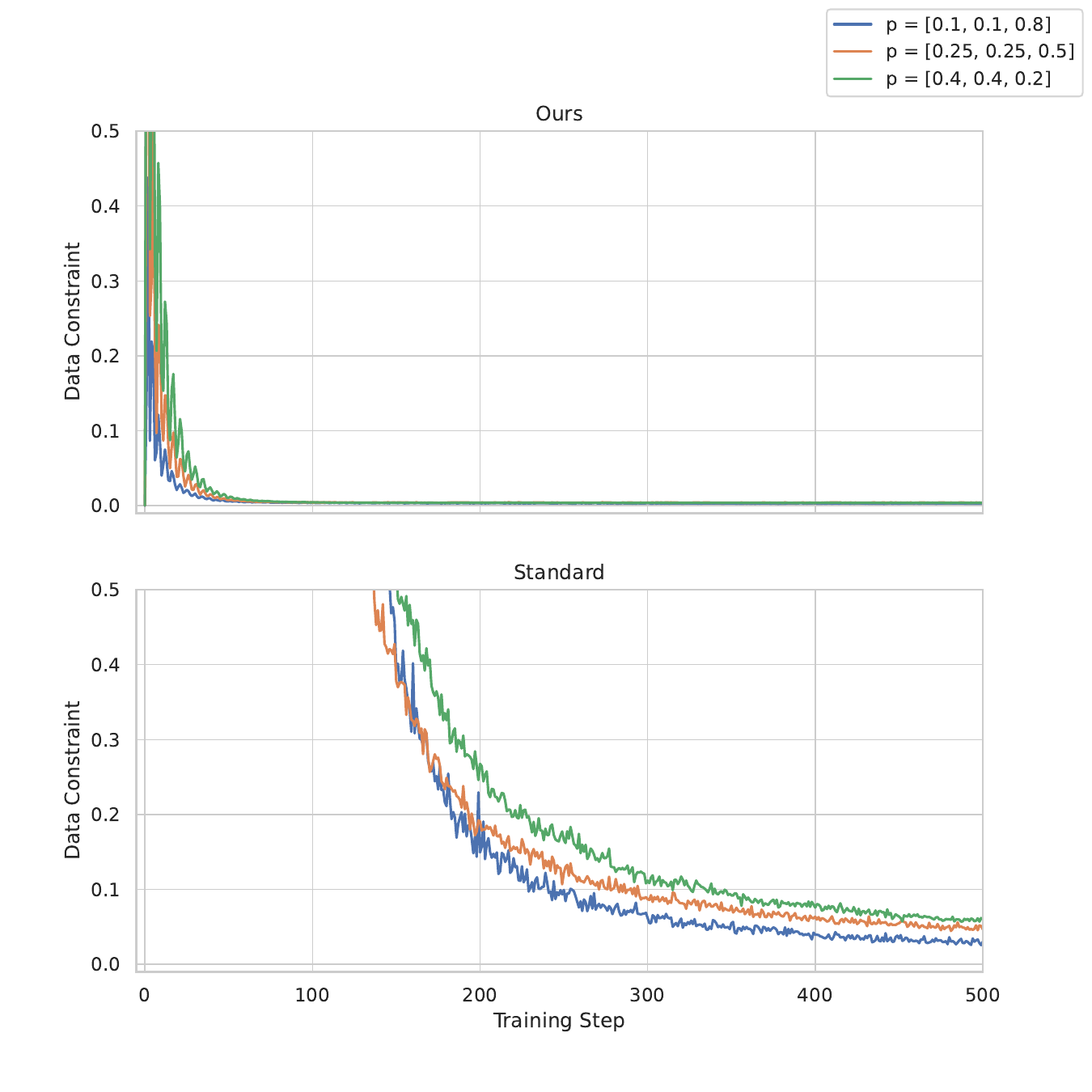}
    \vspace{-0.8cm}
    \caption{$\mathcal{L}_{\text{data}}$ values for different sampling proportions.}
    \label{fig:data_constraint_proportions}
\end{figure}

\vspace{-0.4cm}
\begin{figure}[H]
    \centering
    \includegraphics[width=\columnwidth]{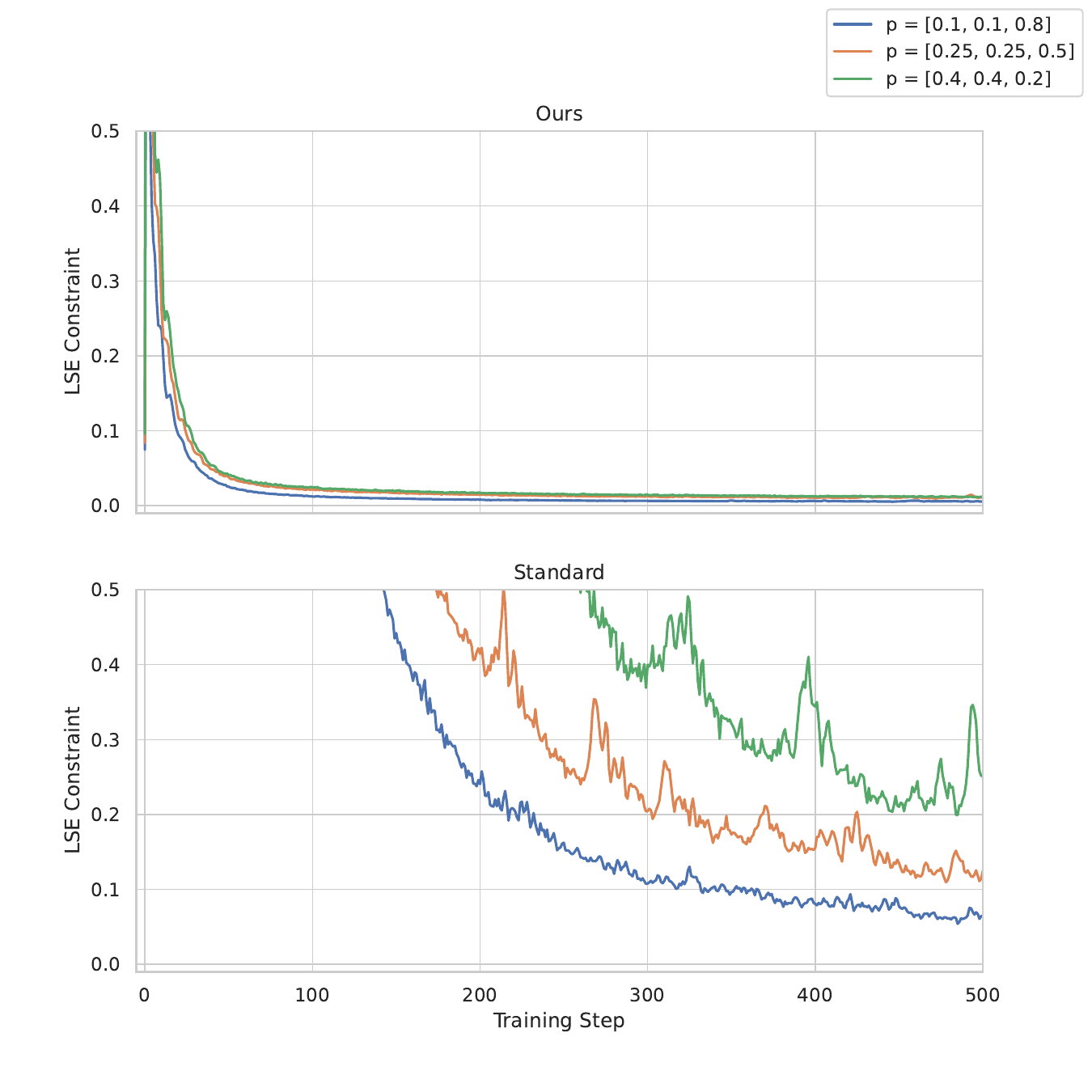}
    \vspace{-0.8cm}
    \caption{$\mathcal{L}_{\text{LSE}}$ values for different sampling proportions.}
    \label{fig:lse_constraint_proportions}
\end{figure}

However, when using different sampling proportions, we obtain new constraints $\mathcal{L}_{\text{data}}$ and $\mathcal{L}_{\text{LSE}}$. Also, sampling fewer points at $t=0$ can result in a longer convergence time for $\mathcal{L}_{\text{data}}$, as shown in the initial condition (Bunny) for each proportion in Fig \ref{fig:my_label}. This is probably due to the \textit{spectral bias} phenomenon: lower frequencies are learned first. As a result, $\mathcal{L}_{\text{LSE}}$ benefits from having a smoother initial condition and prevents fitting at $t=0$.

\vspace{-0.3cm}
\begin{figure}[H]
    \centering
    \includegraphics[width=\columnwidth]{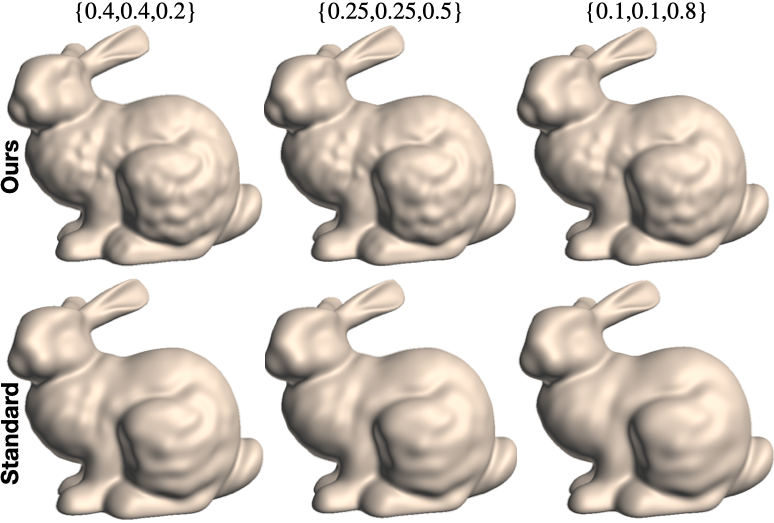}
    \vspace{-0.5cm}
    \caption{The zero-level sets of $f_\theta$ at $t=0$ trained using the proportions $\{0.1,0.1,0.8\}$, $\{0.25,0.25,0.5\}$, and $\{0.4,0.4,0.2\}$.}
    \label{fig:my_label}
\end{figure}

\vspace{-0.6cm}
\subsection{Varying the network width}\label{s-varying-width}


This experiment evaluates the impact of the network width on the training convergence using our standard initializations. We began with a width of 128 neurons and increased it by 16 neurons to a limit of 256. The remaining parameters are set to $(a,b)\!=\!(-0.25,1)$, $\alpha\!=\!1e\!-\!3$, $\{l_1,l_2,l_3\}\!=\!\{0.25,0.25,0.5\}$. As expected, increasing the width leads to better convergence; see Figs \ref{fig:data_constraint_netwidth}-\ref{fig:lse_constraint_netwidth}.

\vspace{-0.4cm}
\begin{figure}[H]
    \centering
    \includegraphics[width=\columnwidth]{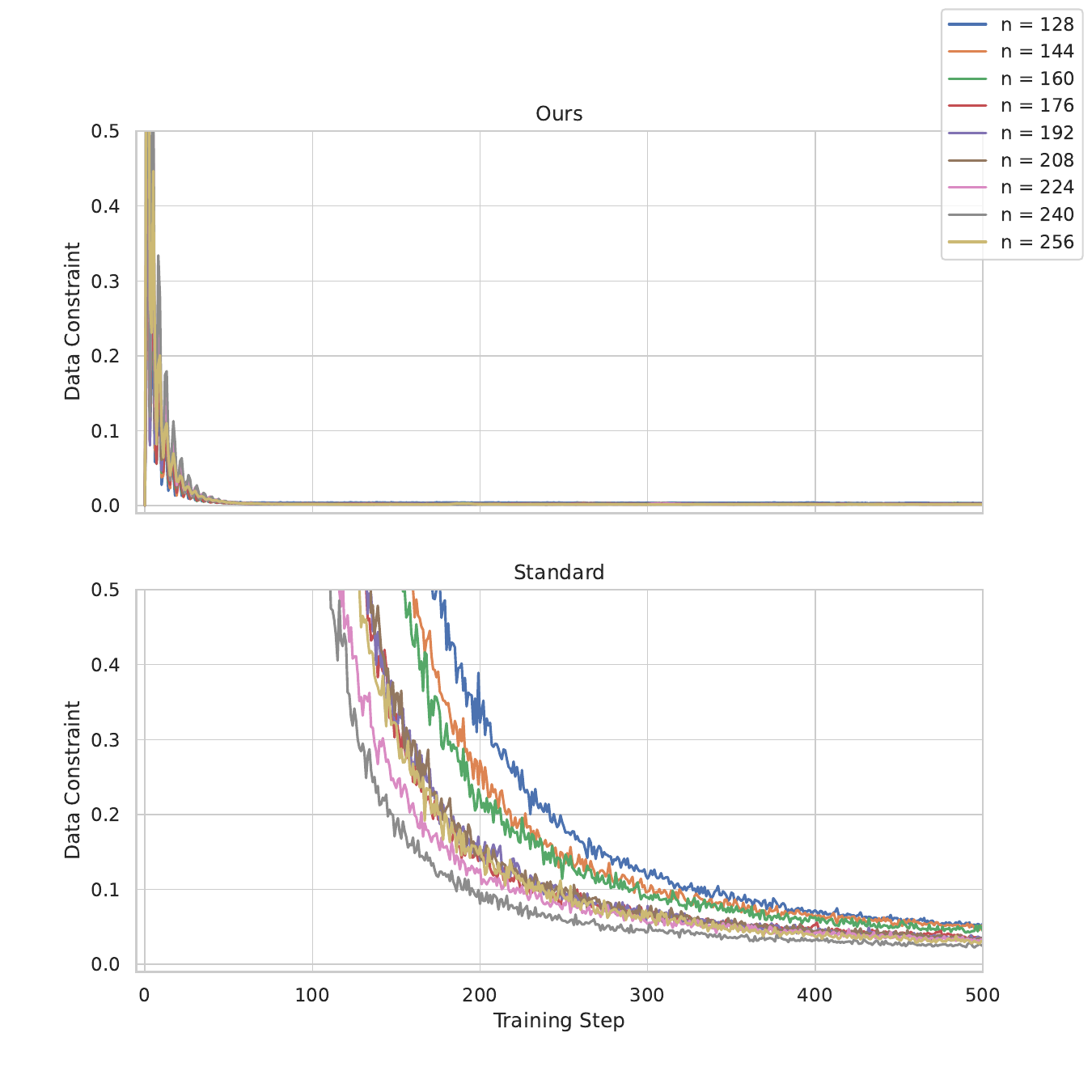}
    \vspace{-0.9cm}
    \caption{Data constraint values for different network widths.}
    \label{fig:data_constraint_netwidth}
\end{figure}

\begin{figure}[ht]
    \centering
    \includegraphics[width=\columnwidth]{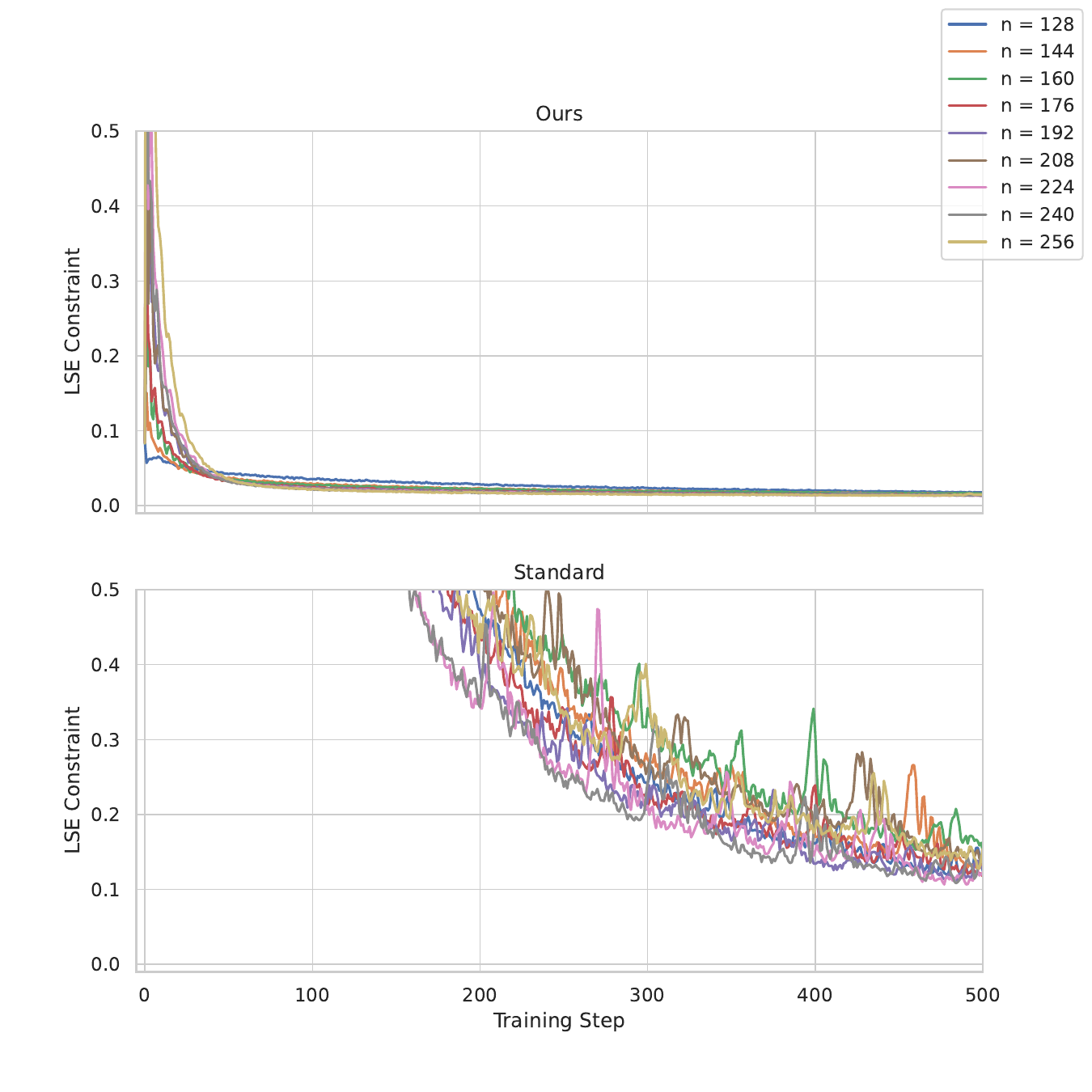}
    \vspace{-0.8cm}
    \caption{LSE constraint values for different network widths.}
    \label{fig:lse_constraint_netwidth}
\end{figure}

\subsection{Varying the initial condition}\label{s-varying-init}
Finally, we vary the initial condition of the MCE to evaluate the convergence of the network $f_\theta$ on different models: Bob, Max Planck, Falcon, Witch, and Neptune.
During the training of each model, we fix the sampling proportions to $\{l_1,l_2,l_3\}\!=\!\{0.25,0.25,0.5\}$. Table~\ref{table} presents the network architectures (the width of the hidden layers), the time required for training 500 epochs, the animation interval $(a,b)$, and the MCE scale $\alpha$ parameters.

\vspace{-0.2cm}
\begin{table}[ht]
\small
\begin{tabular}{l|l|c|l|l}
\hline
\textbf{Model}    & \textbf{Network arch.}        &\textbf{Interval}   &\textbf{Scale}& \textbf{Time (s)} \\ \hline
Bob      & ${[}64,64{]}$       &$(-0.5,1)$&  $1e-2$&    5.04          \\
Max      & ${[}128,128,128{]}$ &$(-0.5,1)$&  $2e-3$&   7.07          \\
Falcon   & ${[}160,160,160{]}$ &$(-0.1,1)$&  $1e-3$&   112.17         \\
Witch    & ${[}256,256,256{]}$ &$(-0.5,1)$&  $1e-3$&   143.34         \\
Neptune  & ${[}300,300,300{]}$ &$(-0.1,1)$&  $2e-4$& 162.42 \\
\hline
\end{tabular}
\caption{The network architectures and the time spent in their training to learn the evolution of the Bob, Max Planck, Falcon, Witch, and Neptune surfaces under the MCE.}
\label{table}
\end{table}

\vspace{-0.3cm}

For the sampling of on-surface points, we used different point clouds sampled from the original models. This affects the time needed to train our networks, as each epoch is defined as a complete iteration over the point-cloud. The Bob, Max Planck, Falcon, Witch, Neptune have 5344, 5002, 72466, 77553, 72668 points, respectively.


Fig \ref{fig:init_conditions} illustrates the zero-level sets of the resulting evolutions of Bob, Max Plank, Falcon, Witch, and Neptune models (middle). The sharpened models are on the left column. Notice that their geometric features are enhanced. Particularly, Max Planck's nose, mouth and ears are noticeably more prominent. The same occurs for the Wizard's sword and cape, and Neptune's hands and spear tip.
\begin{figure}[H]
    \centering
    \includegraphics[width=0.335\columnwidth]{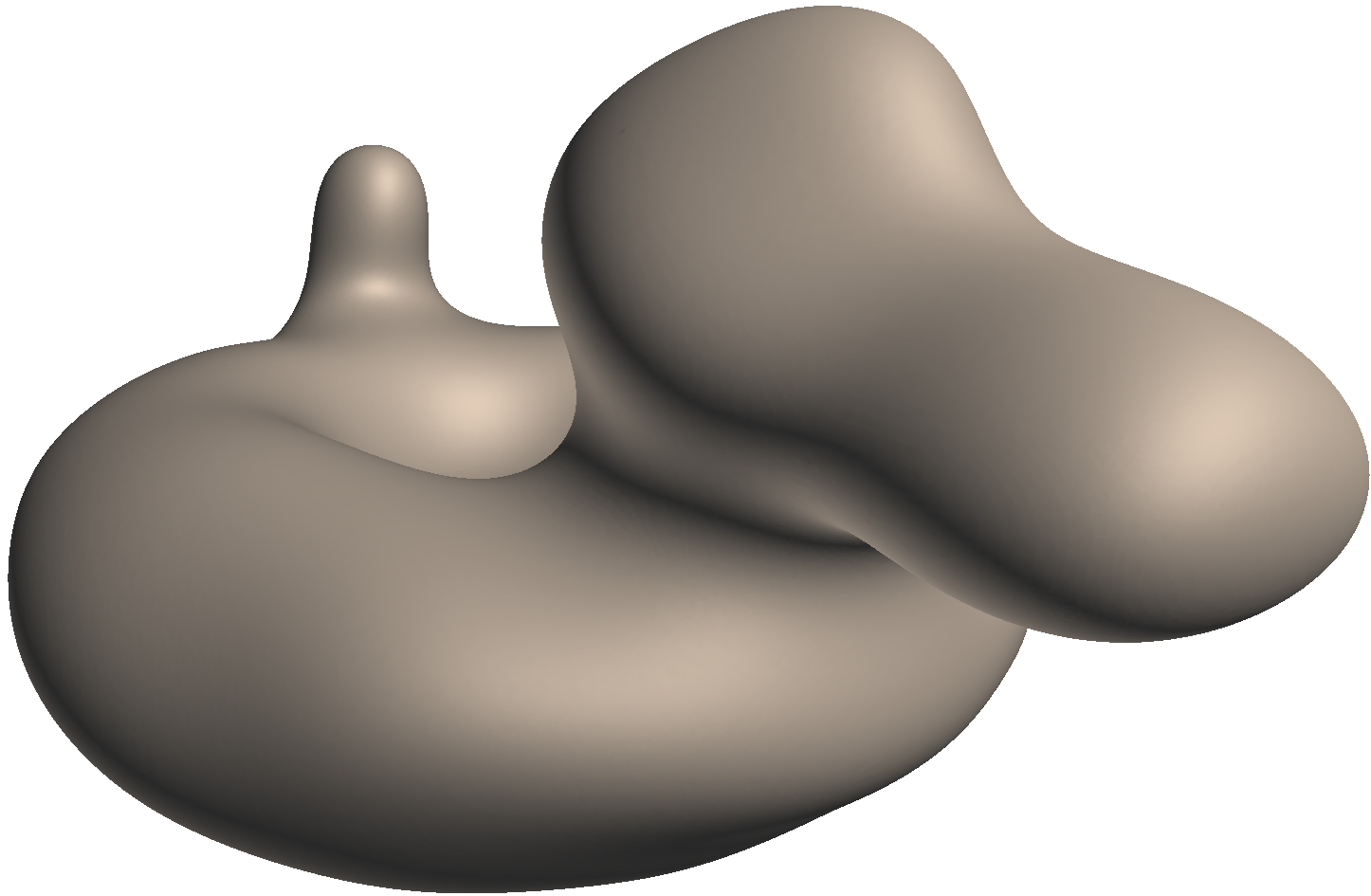}
    \includegraphics[width=0.32\columnwidth]{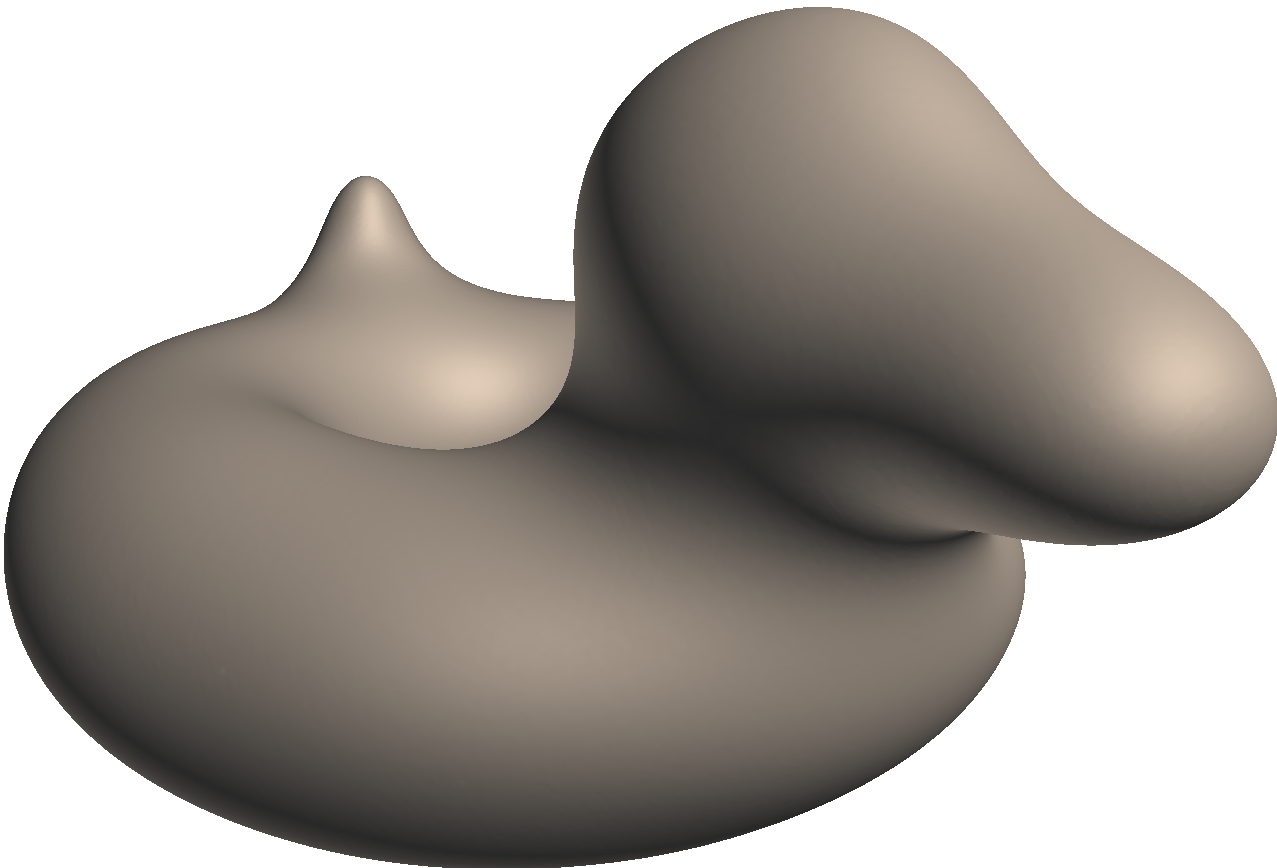}
    \includegraphics[width=0.31\columnwidth]{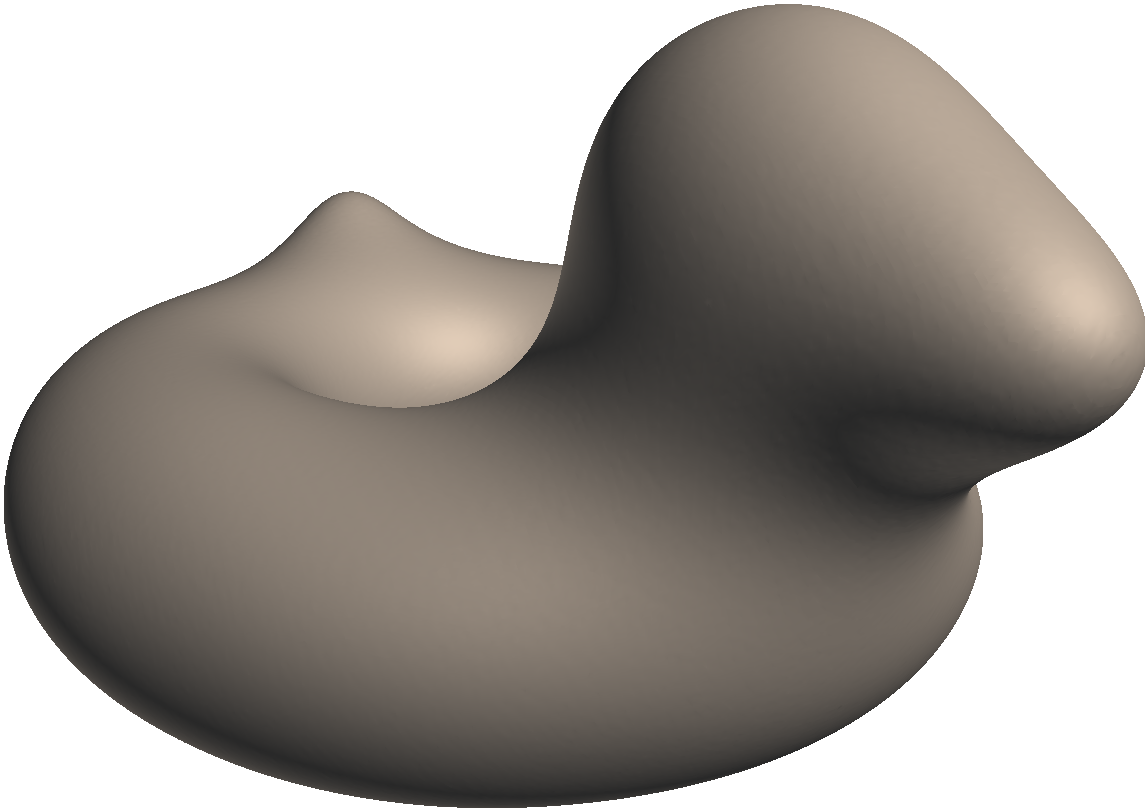}
    %
    \includegraphics[width=0.315\columnwidth]{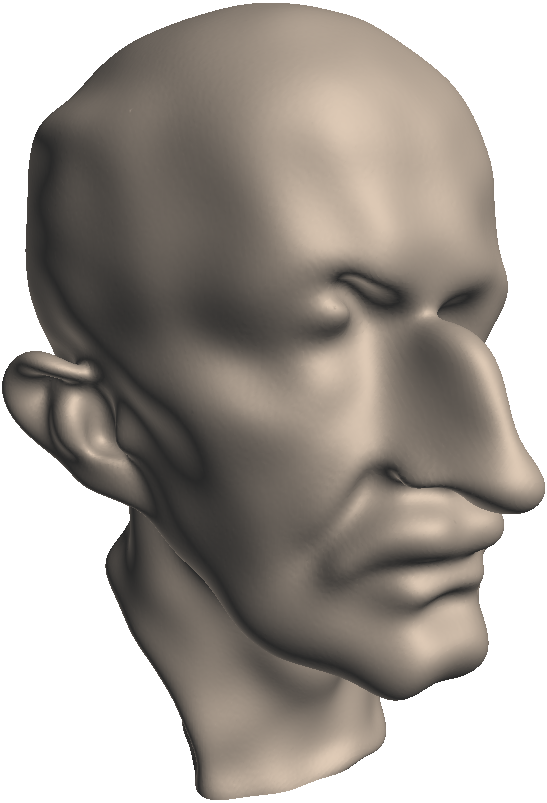}
    \includegraphics[width=0.30\columnwidth]{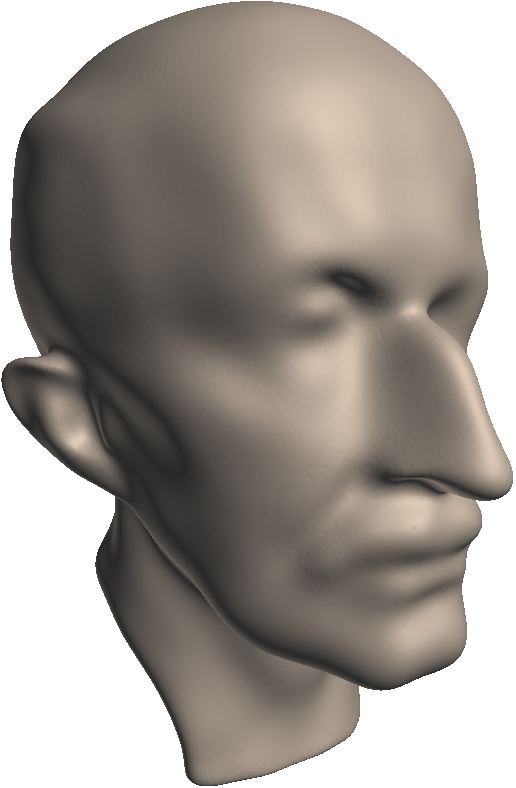}
    \includegraphics[width=0.29\columnwidth]{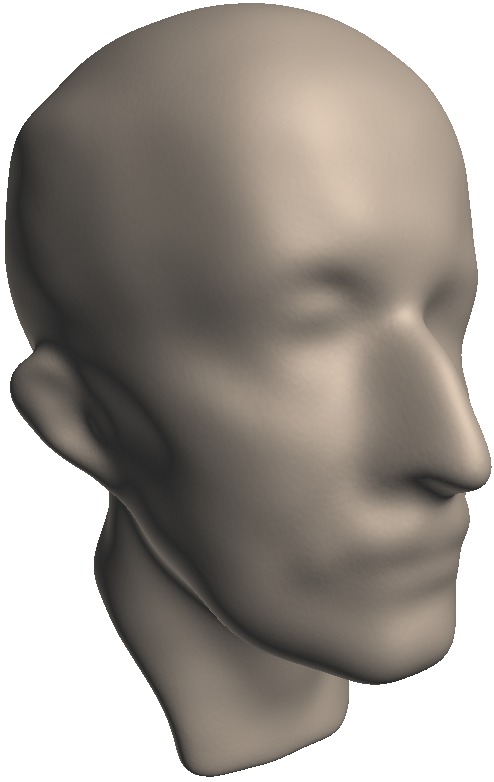}
    %
    \includegraphics[width=0.28\columnwidth]{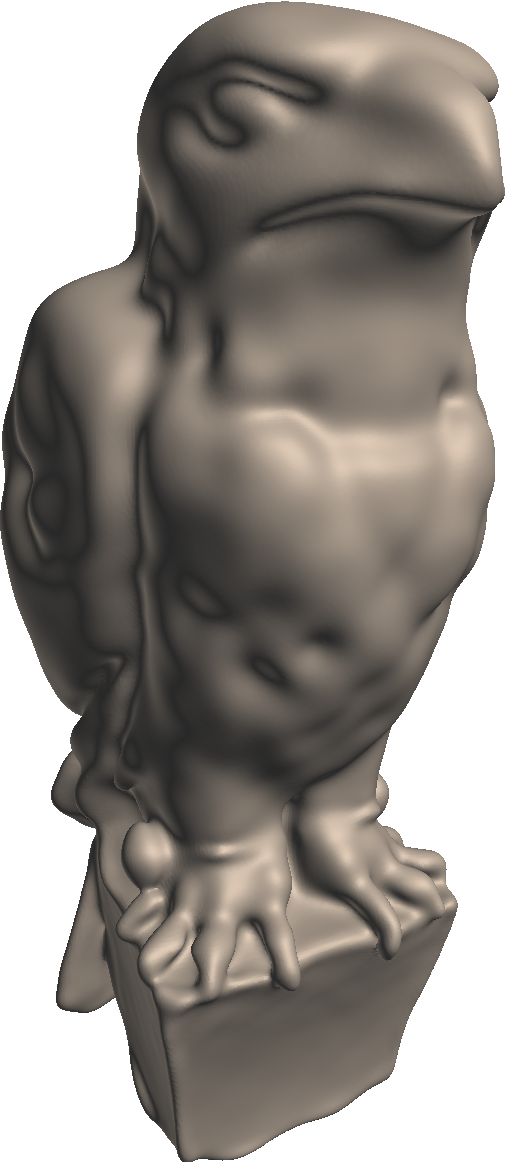}
    \includegraphics[width=0.27\columnwidth]{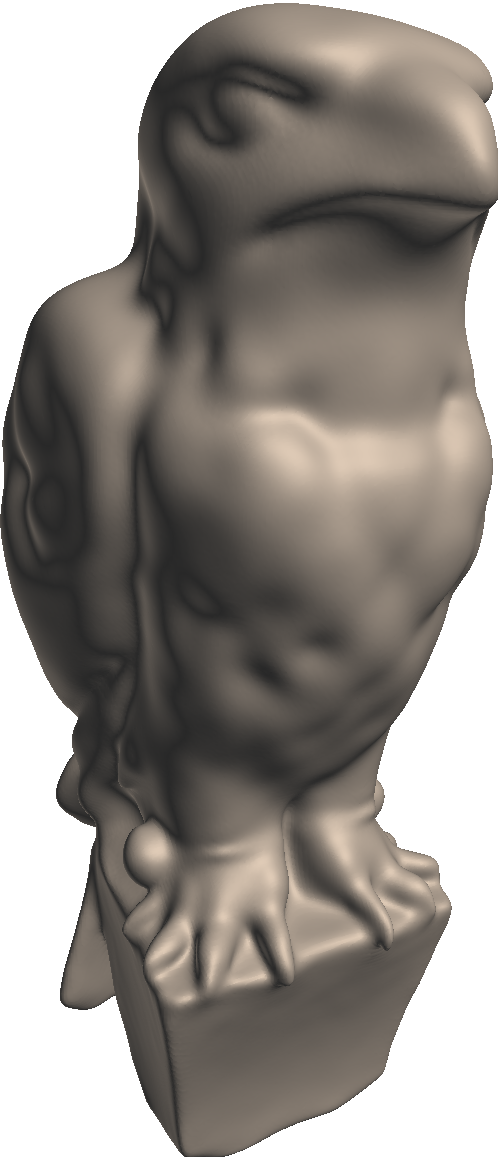}
    \includegraphics[width=0.26\columnwidth]{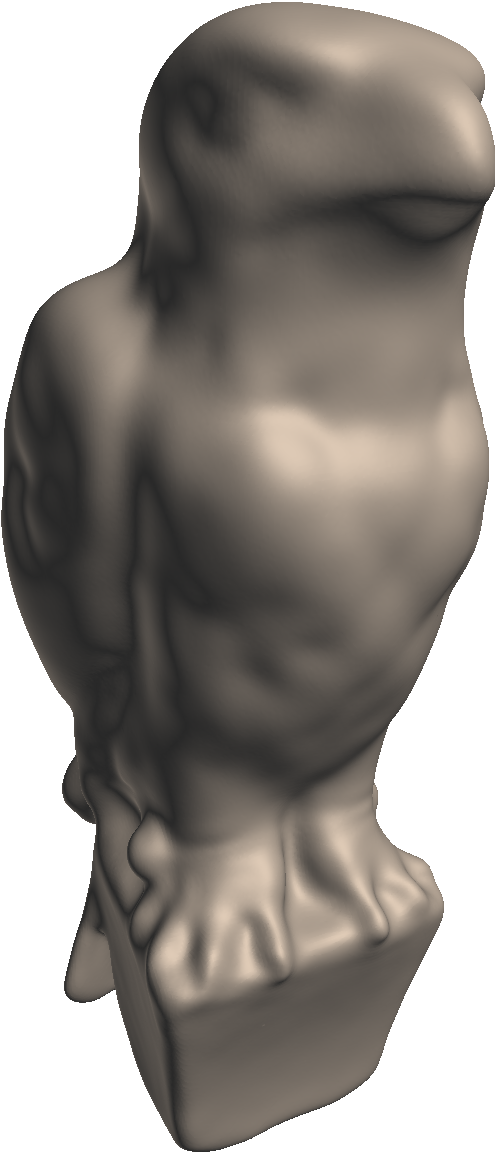}

    \includegraphics[width=0.31\columnwidth]{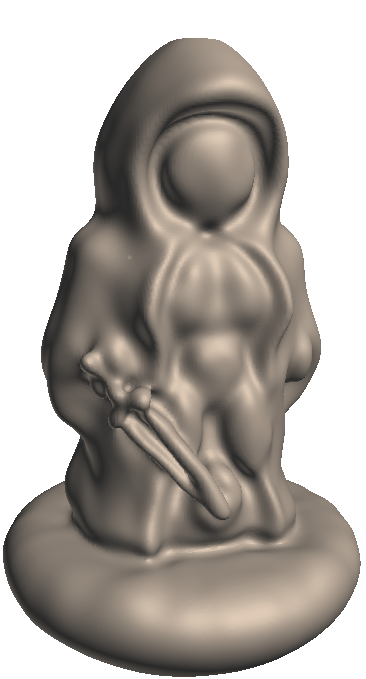}
    \includegraphics[width=0.30\columnwidth]{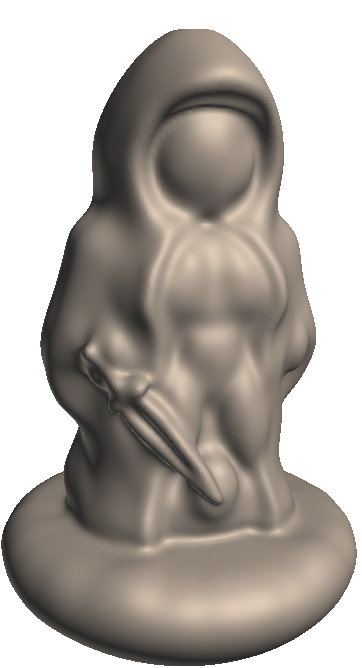}
    \includegraphics[width=0.29\columnwidth]{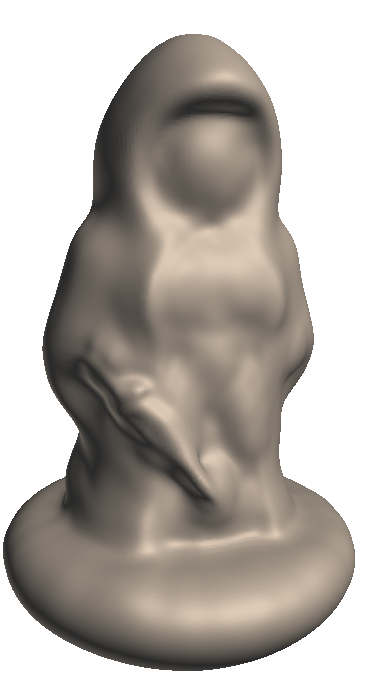}
    %
    \includegraphics[width=0.335\columnwidth]{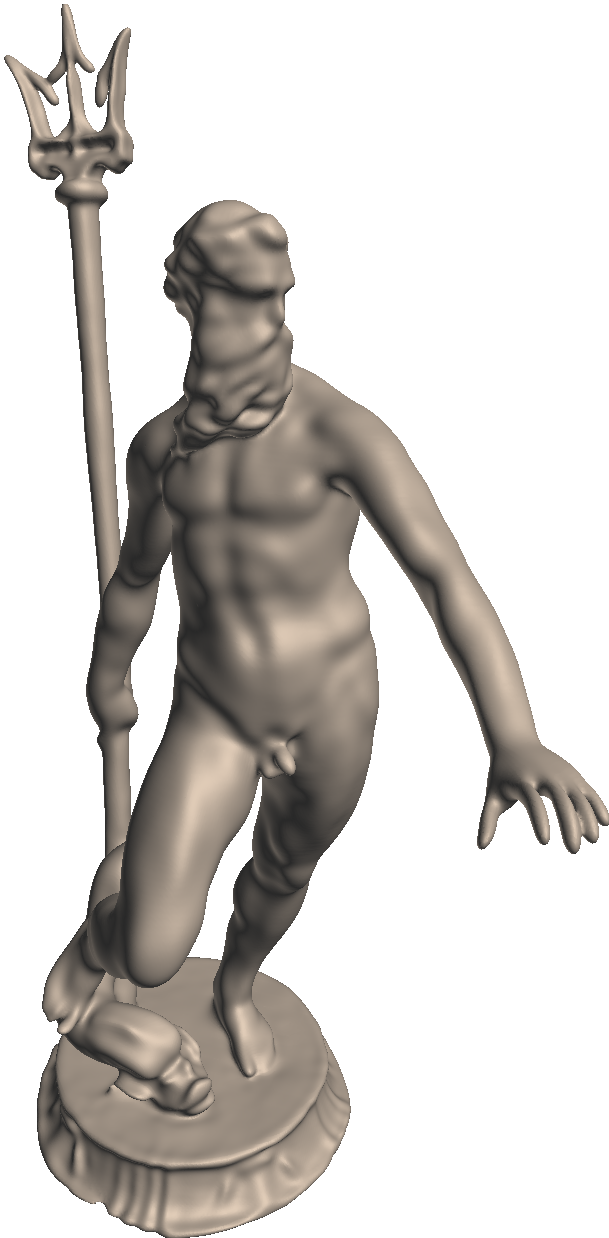}
    \includegraphics[width=0.32\columnwidth]{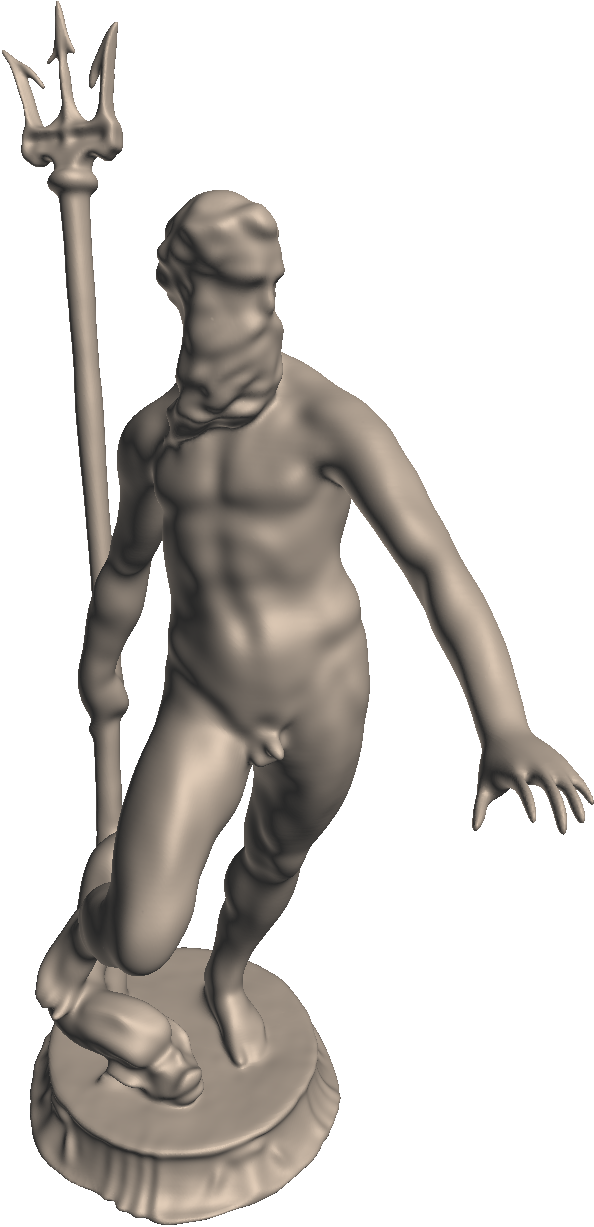}
    \includegraphics[width=0.31\columnwidth]{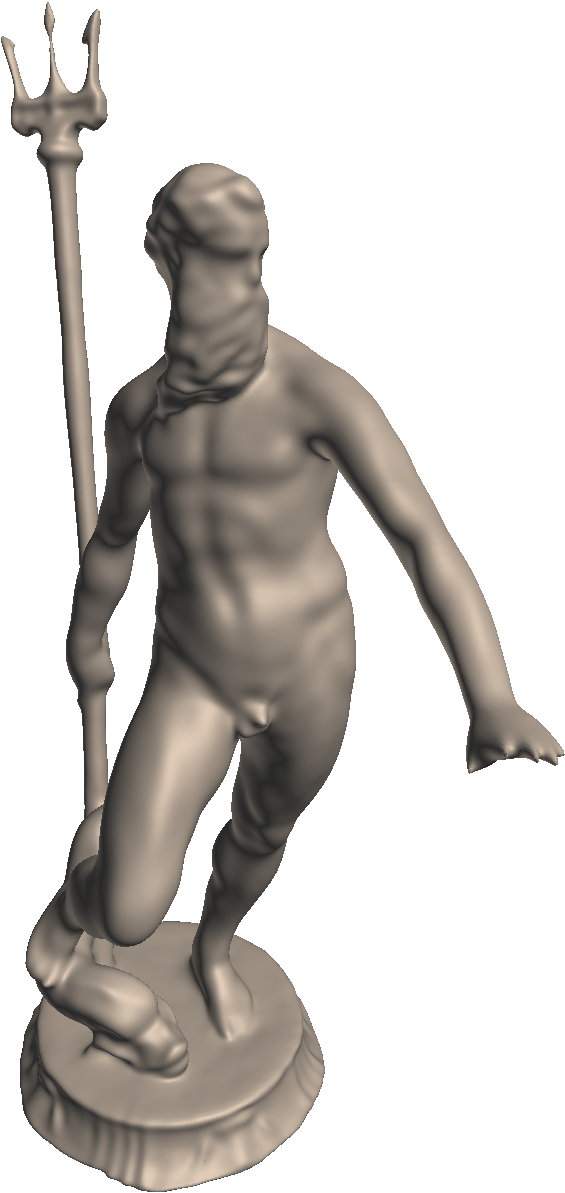}
    \caption{The zero-level sets of $f_\theta$ for Bob, Max Plank, Falcon, Witch, and Neptune models (middle). The left and right columns provide the sharpening and smoothing of the models.}
    \label{fig:init_conditions}
\end{figure}


\subsection{Additional interpolations}\label{s-interpolation}

Finally, Figure~\ref{s-interpolation} shows additional examples of interpolations between neural implicit functions using the method presented in Sec~5.3 of the paper.
The bracelets and chairs models are from the Thingi10K dataset~\cite{Thingi10K}.
We choose to interpolate the chairs because this was also an example considered by Lipschitz MLP~\cite{liu2022learning}.
We observed that when the features of the objects are aligned, the interpolation works like morphing between the shapes; see Line 3 of Figure~\ref{fig:interpolations}.
\begin{figure}[H]
    \centering
    \includegraphics[width=0.19\columnwidth]{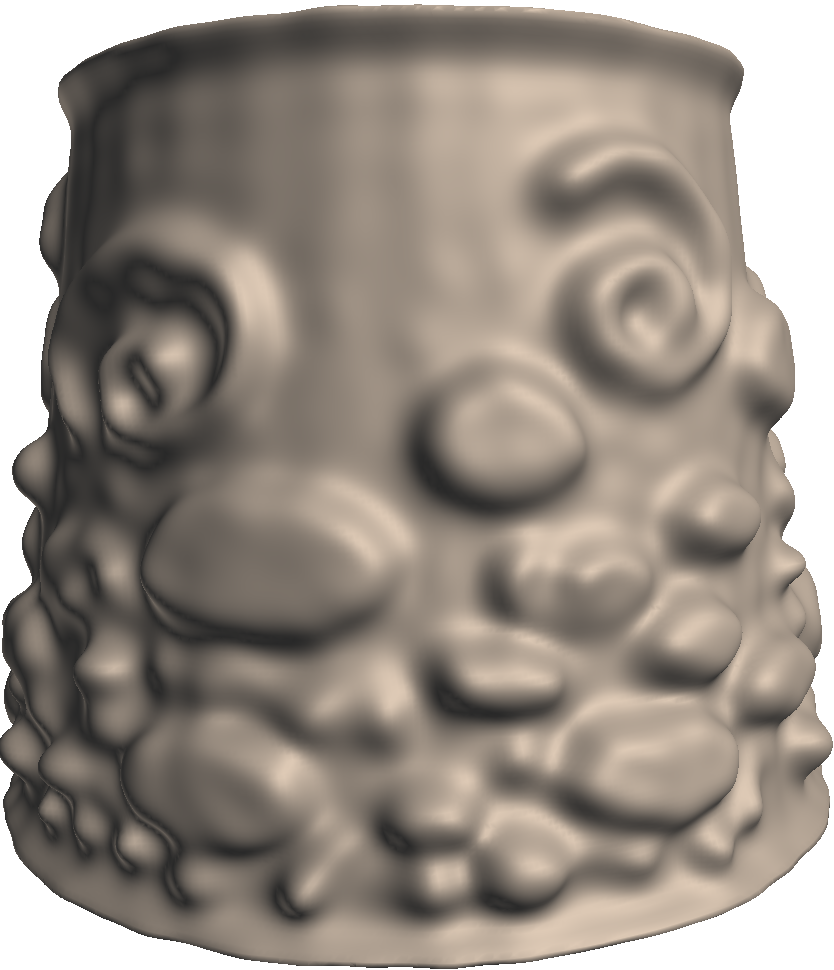}
    \includegraphics[width=0.19\columnwidth]{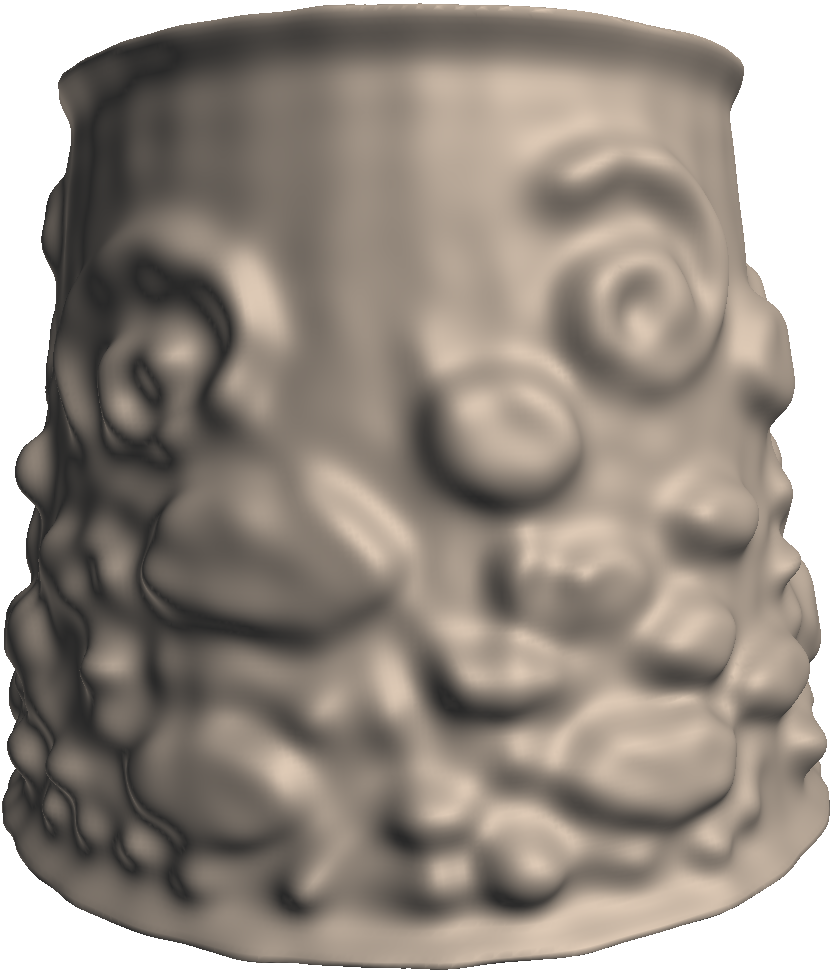}
    \includegraphics[width=0.19\columnwidth]{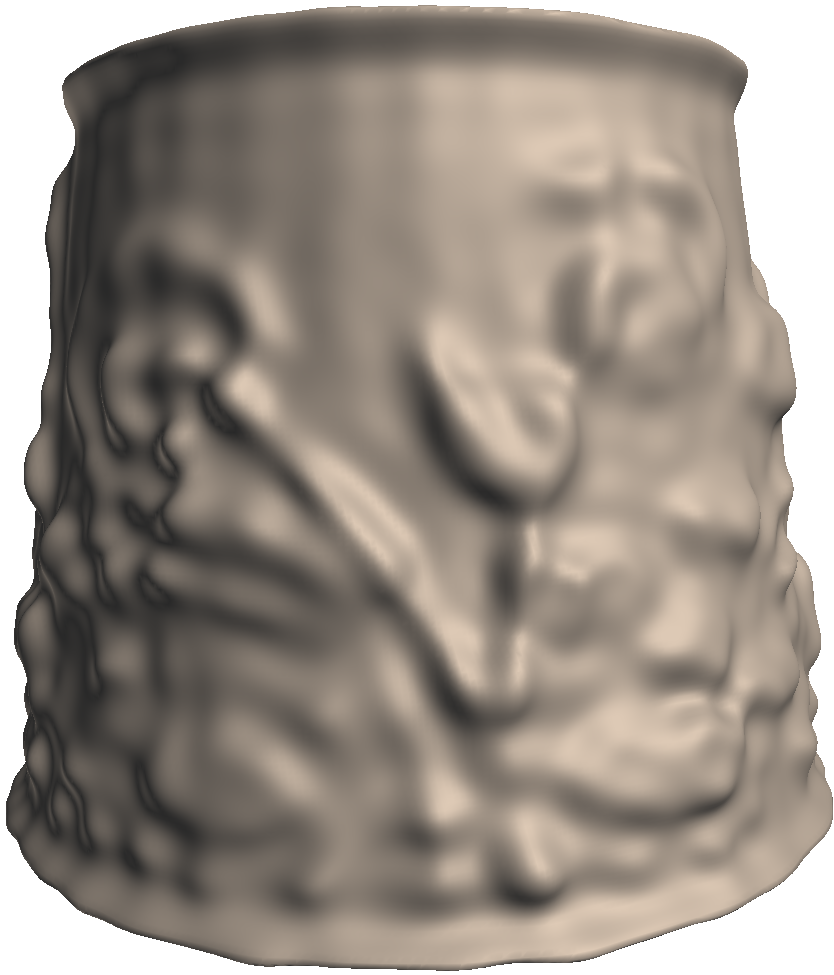}
    \includegraphics[width=0.19\columnwidth]{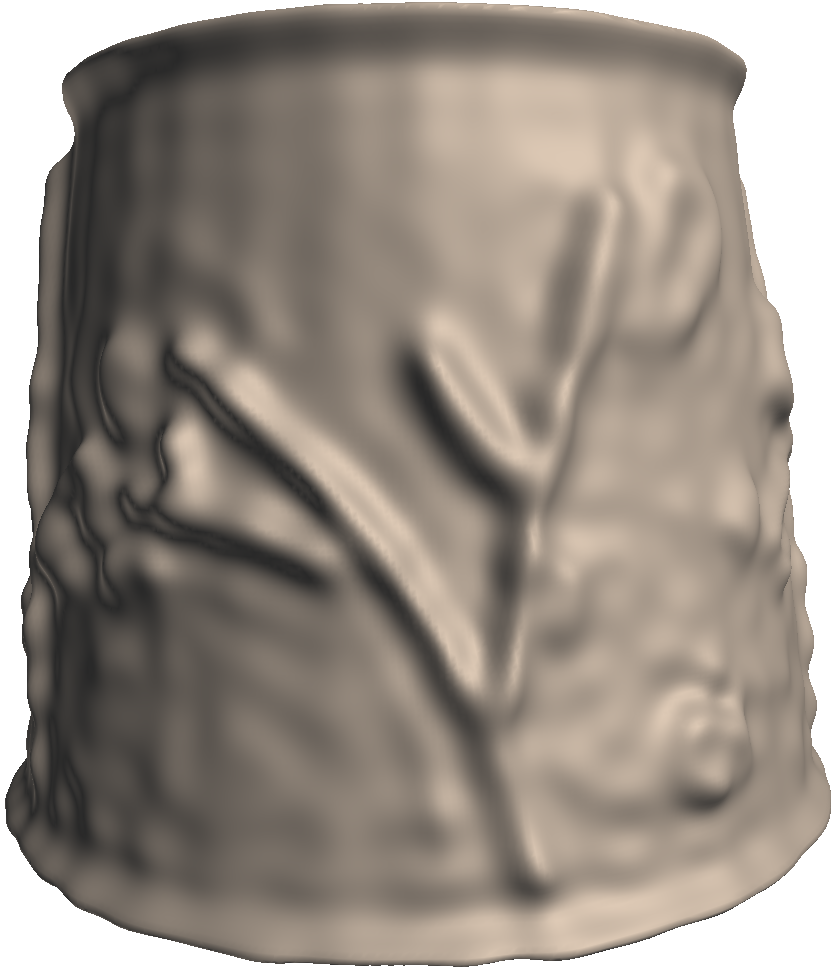}
    \includegraphics[width=0.19\columnwidth]{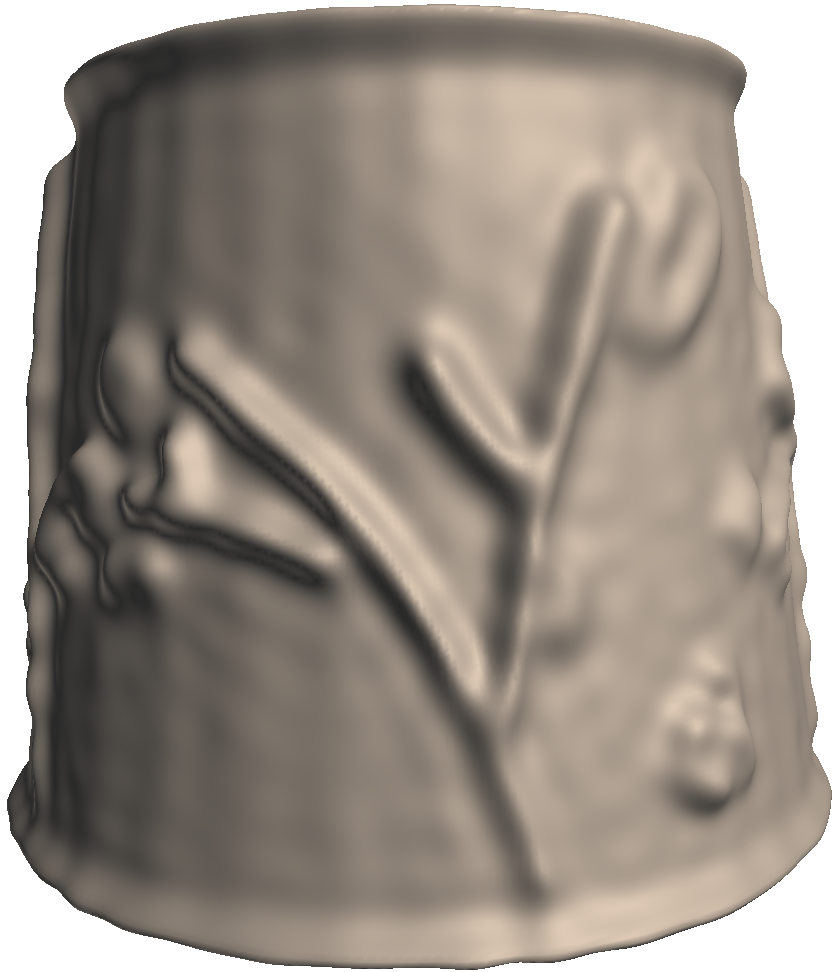}
    %
    \includegraphics[width=0.19\columnwidth]{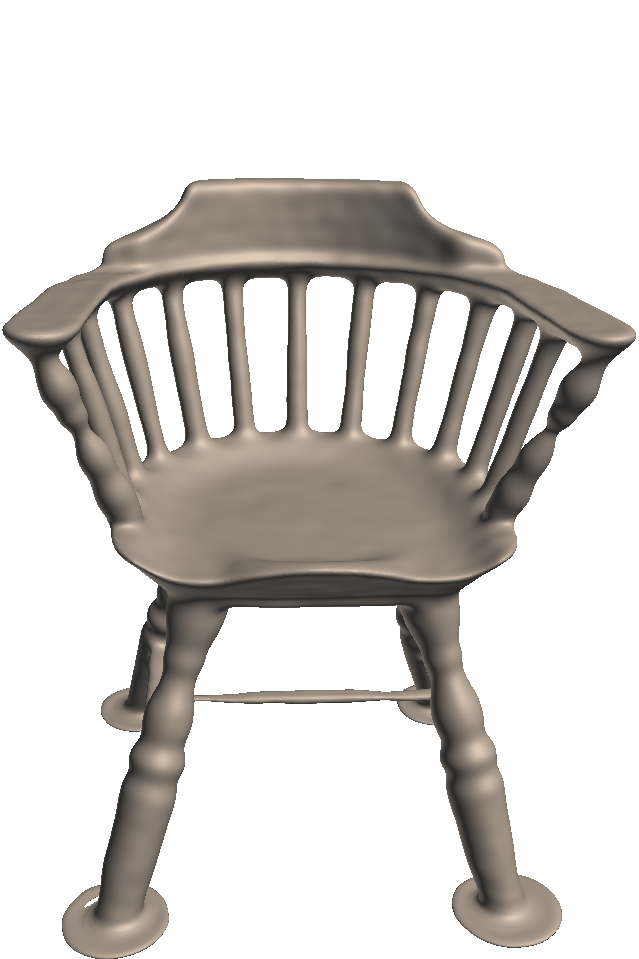}
    \includegraphics[width=0.19\columnwidth]{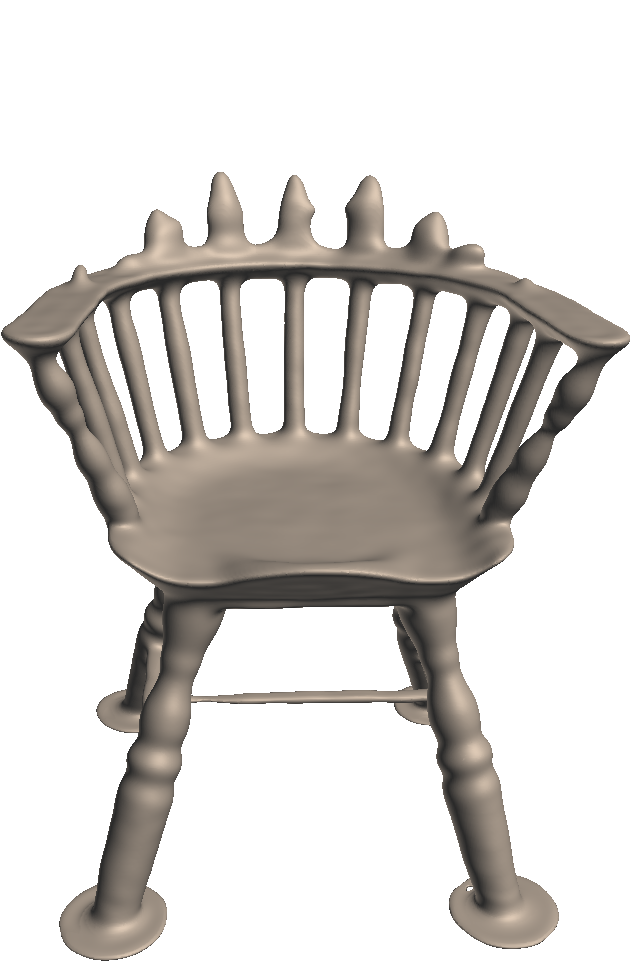}
    \includegraphics[width=0.19\columnwidth]{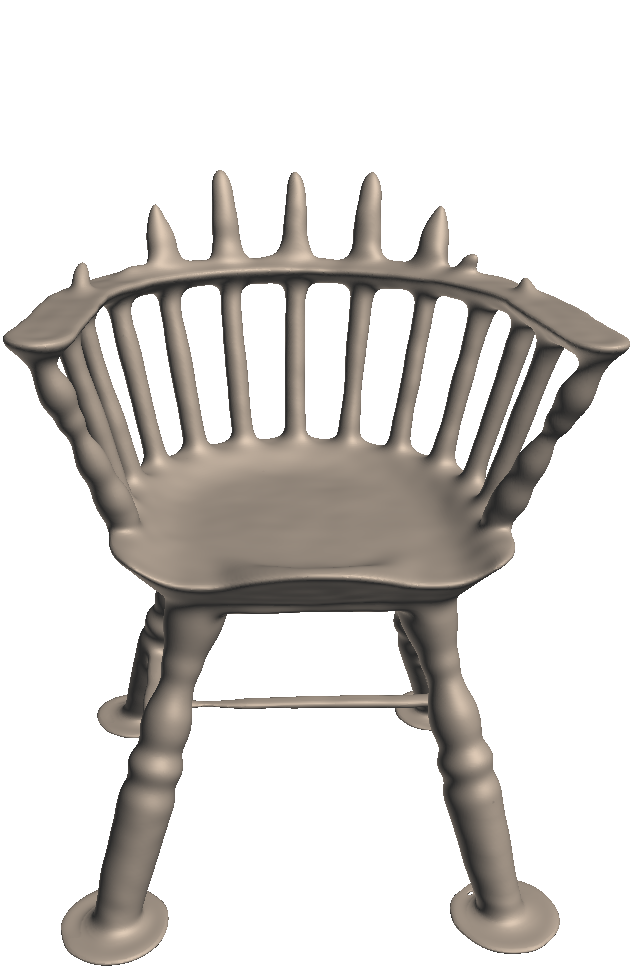}
    \includegraphics[width=0.19\columnwidth]{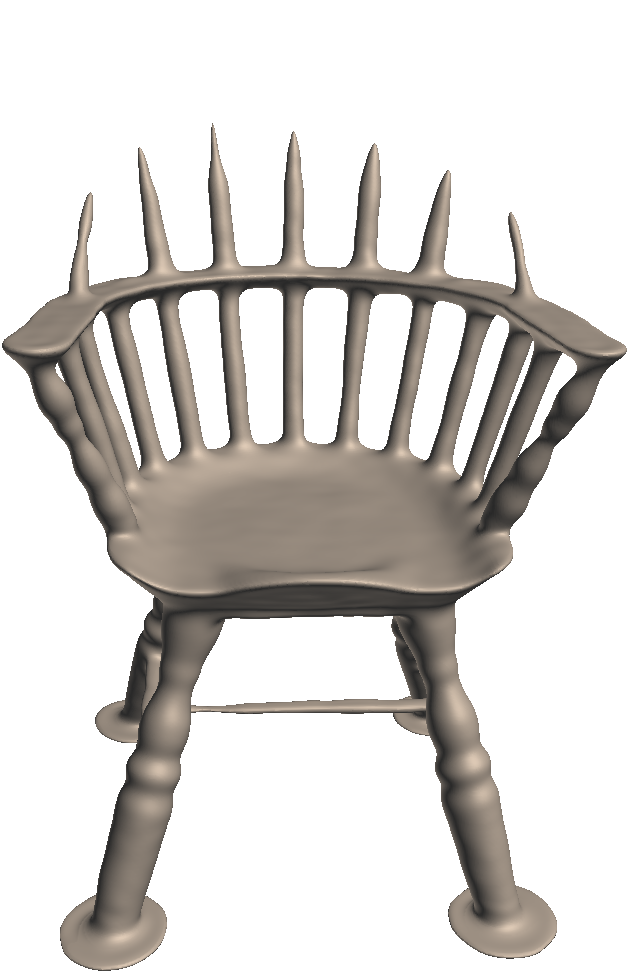}
    \includegraphics[width=0.19\columnwidth]{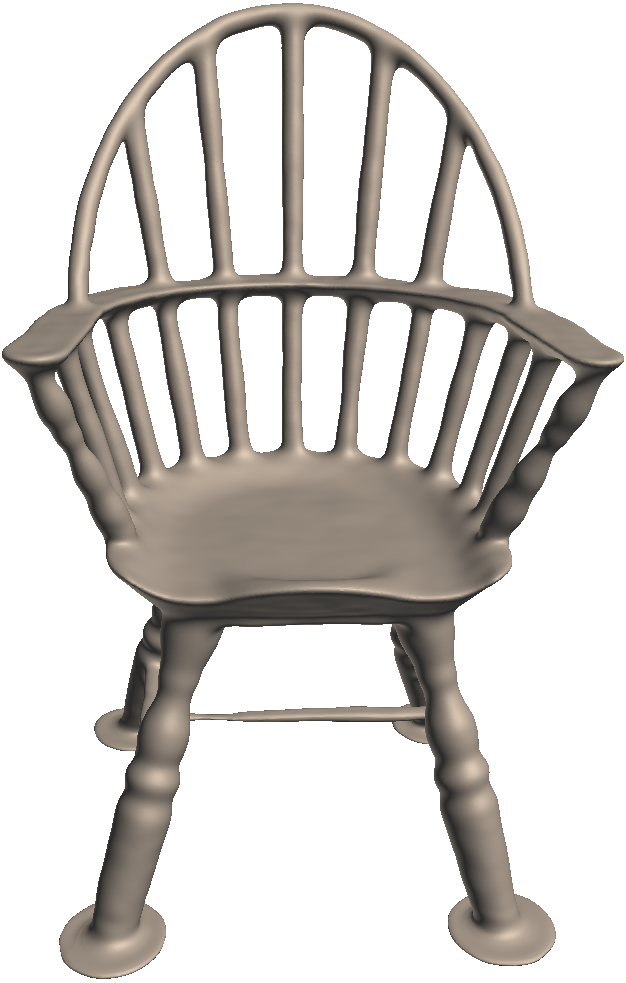}
    %
    \includegraphics[width=0.19\columnwidth]{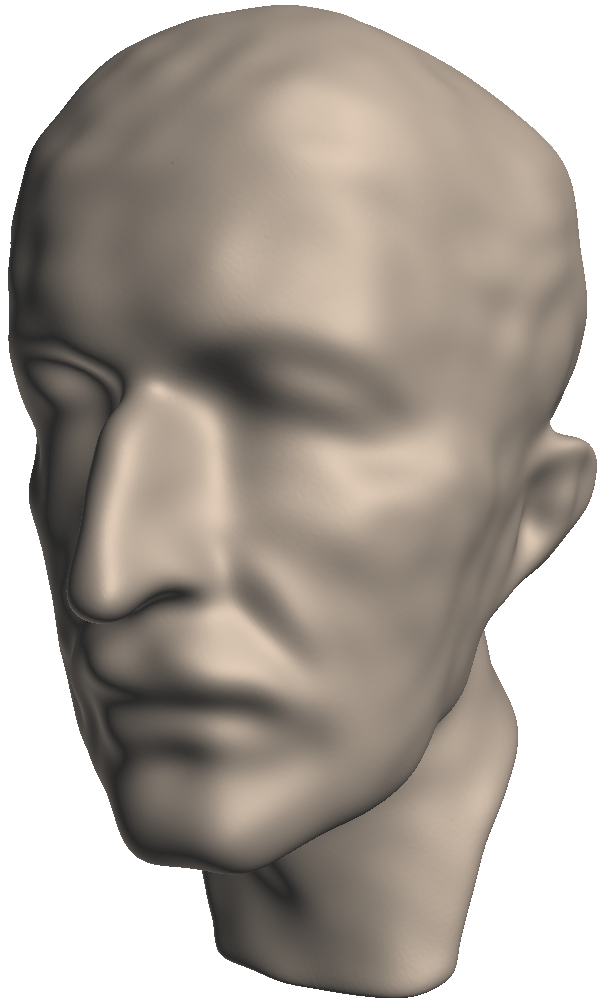}
    \includegraphics[width=0.19\columnwidth]{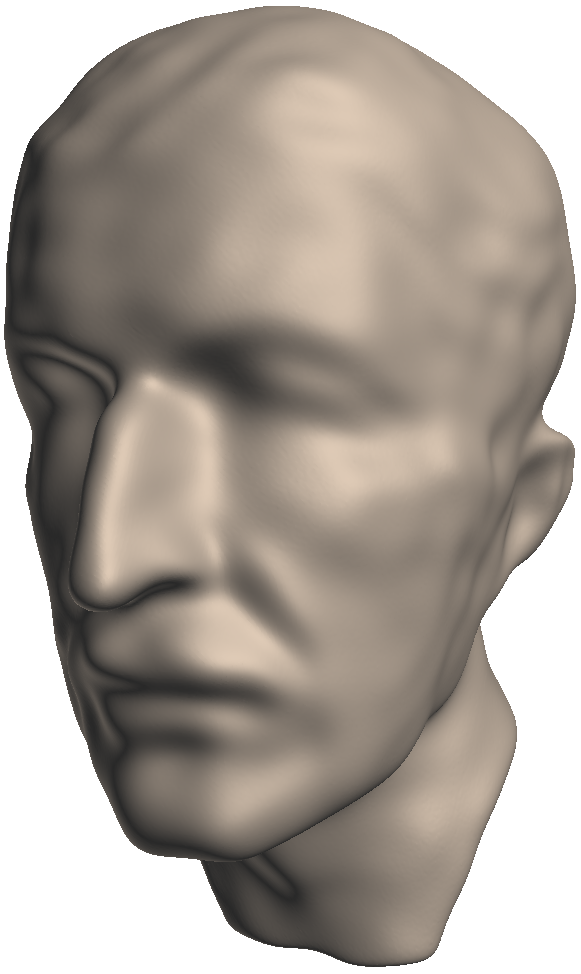}
    \includegraphics[width=0.19\columnwidth]{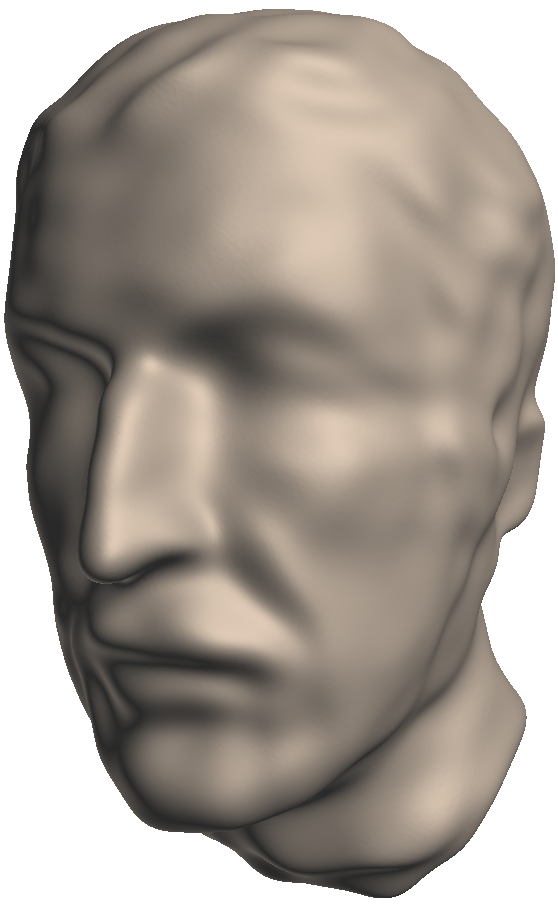}
    \includegraphics[width=0.19\columnwidth]{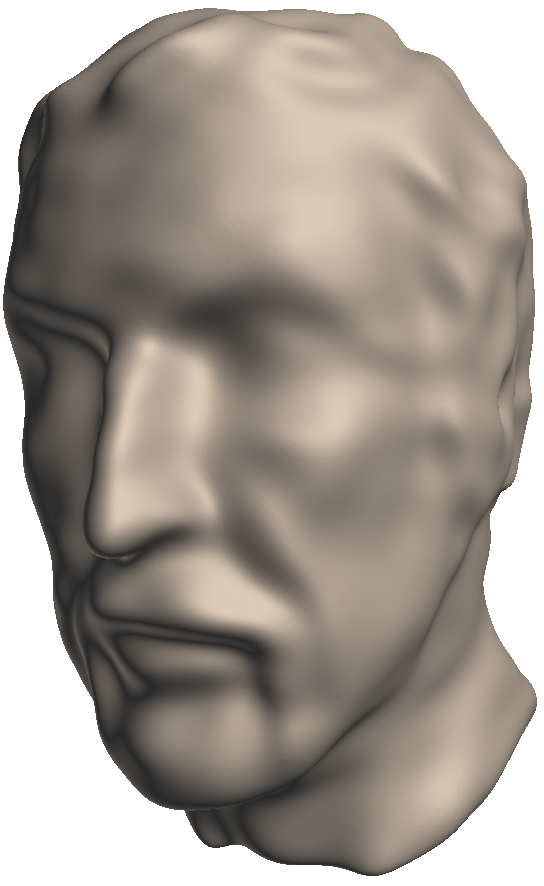}
    \includegraphics[width=0.19\columnwidth]{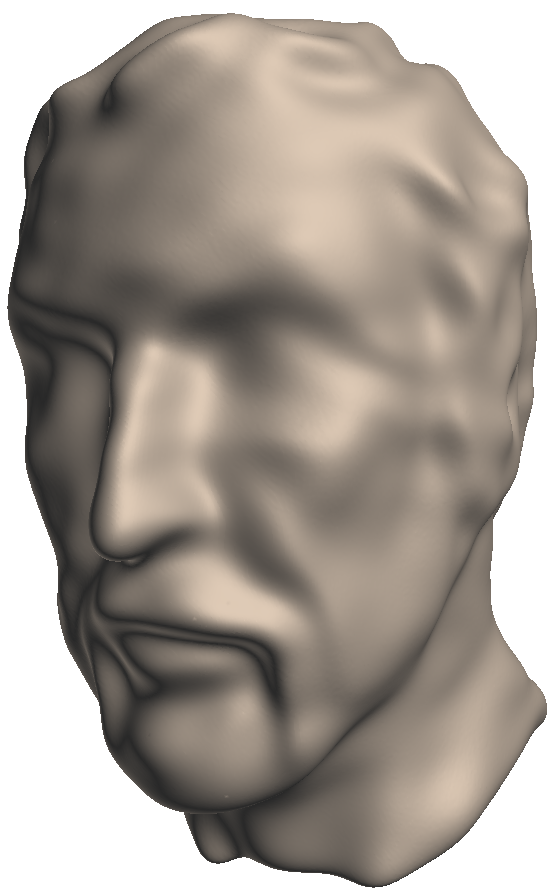}
    \caption{Interpolations between neural implicit functions. We observe that choosing surfaces with a significant overlapping of their interior regions results in better interpolations.}
    \label{fig:interpolations}
\end{figure}

{\small
\bibliographystyle{ieee_fullname}
\bibliography{egbib}
}